\def\eqref#1{equation~\ref{#1}}
\def\1{\bm{1}}
\DeclareMathAlphabet{\mathsfit}{\encodingdefault}{\sfdefault}{m}{sl}
\SetMathAlphabet{\mathsfit}{bold}{\encodingdefault}{\sfdefault}{bx}{n}
\newcommand{\E}{\mathbb{E}}
\newcommand{\R}{\mathbb{R}}
\newtheorem{theorem}{Theorem}[section]
\newtheorem{lemma}[theorem]{Lemma}
\newtheorem{definition}{Definition}
\newtheorem{corollary}[theorem]{Corollary}
\newtheorem{assumption}{Assumption}
\newtheorem{example}{Example}
\DeclareMathOperator*{\argmin}{argmin}
\DeclareMathOperator*{\argmax}{argmax}
\DeclareMathOperator*{\conv}{conv}
\newcommand{\mac}{\mathcal}
\title{Efficient Best-of-Both-Worlds Algorithms for Contextual Combinatorial Semi-Bandits}
\author{Mengmeng Li \\
Risk Analytics and Optimization Chair\\
EPFL\\
Lausanne, Switzerland \\
\texttt{mengmeng.li@epfl.ch} \\
\And
Philipp J.~Schneider \\
Risk Analytics and Optimization Chair\\
EPFL\\
Lausanne, Switzerland \\
\texttt{philipp.schneider@epfl.ch} \\
\And
Jelisaveta Aleksić \\
Risk Analytics and Optimization Chair\\
EPFL\\
Lausanne, Switzerland \\
\texttt{jelisaveta.aleksic@epfl.ch} \\
\And
\qquad\qquad\qquad Daniel Kuhn \\
\qquad\qquad\qquad Risk Analytics and Optimization Chair\\
\qquad\qquad\qquad EPFL\\
\qquad\qquad\qquad Lausanne, Switzerland \\
\qquad\qquad\qquad\texttt{daniel.kuhn@epfl.ch} \\
}
\begin{document}
\maketitle
\begin{abstract}
We introduce the first best-of-both-worlds algorithm for contextual combinatorial bandits that simultaneously guarantees $\widetilde{\mathcal{O}}(\sqrt{T})$ regret in the adversarial regime and $\widetilde{\mathcal{O}}(\ln T)$ regret in the corrupted stochastic regime. Our approach builds on the Follow-the-Regularized-Leader (FTRL) framework equipped with a negative Shannon entropy regularizer, yielding a flexible method that admits efficient implementations.
Beyond regret bounds, we tackle the practical bottleneck in FTRL (or, equivalently, Online Stochastic Mirror Descent) arising from the high-dimensional projection step encountered in each round of interaction. By leveraging the Karush-Kuhn-Tucker conditions, we transform the $K$-dimensional convex projection problem into a single-variable root-finding problem, dramatically accelerating each round. Empirical evaluations demonstrate that this combined strategy not only attains the attractive regret bounds of best-of-both-worlds algorithms but also delivers substantial per-round speed-ups, making it well-suited for large-scale, real-time applications.
\end{abstract}

\section{Introduction}
\label{sec:intro}
Online decision-making is often modeled via the \emph{multi-armed bandit} framework: over \(T\) rounds, a learner selects an action and incurs a loss, observing only partial feedback. Many real-world tasks---from selecting at most \(m\) movies on a streaming platform to curating a list of \(m\) products on an e-commerce homepage---require choosing a subset of up to \(m\) items from \(K\gg m\) base arms in each round. This \emph{combinatorial bandit} variant admits either \emph{semi-bandit feedback} (per-arm losses) or \emph{full-bandit feedback} (aggregate loss). 
Semi-bandit feedback is realistic---web analytics record click-through outcomes for each displayed item---and statistically advantageous: for $m$-set combinatorial bandits it improves the minimax rate from $\widetilde{\mathcal O}(m\sqrt{KT})$ under full-bandit feedback to $\widetilde{\mathcal O}(\sqrt{mKT})$ under semi-bandit feedback~\citep{audibert2014regret}.
In each round, the learner faces either stochastic losses (e.g., from a fixed linear model on random contexts) or adversarial losses (modeling malicious perturbations). Contexts consist of user-feature vectors drawn i.i.d.\ from a distribution, and instantaneous losses are linear in these features.
The best-of-both-worlds (BOBW) paradigm unifies strategies across stochastic and adversarial loss regimes, achieving optimal regret guarantees in both~\citep{zimmert2019beating,tsuchiya2023further}. However, gaps remain in current BOBW approaches for contextual combinatorial bandits---highlighted by \cite{tsuchiya2023further,kuroki2024best} and exacerbated by evolving regulations and emerging technologies---motivating us to revisit the problem.

\textbf{Challenges.} While many companies possess extensive user data, their operating environments are rarely stationary and face multiple challenges:
\begin{itemize}[itemsep=1pt, leftmargin=1em, topsep=0pt]
\item \textbf{Adversarial and Corrupted Stochastic Regimes.}  
Advertising markets, for instance, evolve in real time as competitors react to each other~\citep{balseiro2015repeated,jin2018real}, while recommendation systems must adapt to shifting user preferences~\citep{koren2009collaborative} and defend against malicious actors~\citep{mukherjee2013spotting,christakopoulou2019adversarial,zhang2020practical}. A natural remedy is a best-of-both-worlds strategy---provided it is computationally efficient. However, recent privacy regulations (e.g., the EU's GDPR~\citep{GDPR2016}) restrict context collection (for example, via third-party cookies), so the assumption of perfect knowledge of the context distribution no longer holds. This gap motivates BOBW algorithms that explicitly handle corrupted stochastic contexts---treating unexpected losses as corruptions---to secure optimal regret guarantees, which existing literature has yet to address.
\item \textbf{Subroutine Computational Efficiency for Combinatorial Bandits.} 
Advances in large language models (LLMs) allow platforms to generate vast pools of \(K\) candidate ads at negligible cost, which can then be deployed for personalized advertising~\citep{meguellati2024good}. Yet most existing algorithms require solving a \(K\)-dimensional convex program in each round, making them increasingly expensive as \(K\) grows. Moreover, on-the-fly customization of ad attributes (e.g., color, layout) via generative models further increases the runtime of these subroutines. Consequently, accelerating the per-round projection step is essential for practical, large-scale deployment.
\end{itemize}
\textbf{Contributions.}
{Our key contributions are summarized as follows.}
\begin{itemize}[itemsep=1pt, leftmargin=1em, topsep=0pt]
\item \textbf{Best-of-Both-Worlds for Contextual Combinatorial Semi-Bandits.} { We propose an algorithm for general contextual combinatorial semi-bandits that achieves both
$\widetilde{\mathcal{O}}(\sqrt T)$ regret in the adversarial regime and $\widetilde{\mathcal{O}}(\ln T)$ regret in the corrupted stochastic regime}.\footnote{Here, $\widetilde{\mathcal O}(\cdot)$ suppresses poly-logarithmic factors.}
Our method instantiates Follow-the-Regularized-Leader (FTRL) with a Shannon-entropy regularizer, enabling efficient implementations in many practical settings (see Sec.~\ref{sec:bobw-context}).

\item \textbf{Accelerated Projection for FTRL/OSMD.}  
{For a popular class of combinatorial action sets, namely the $m$-set,} we accelerate the projection subroutine in any FTRL scheme with a Legendre regularizer---equivalently, in Online Stochastic Mirror Descent (OSMD) for linear payoffs under the corresponding mirror map~\citep{shalev2012online}. By exploiting the Karush-Kuhn-Tucker (KKT) conditions, we reduce the typical \(K\)-dimensional convex projection to a one-dimensional root-finding problem (see Sec.~\ref{sec:comb-bandits}). {Results therein are generalizable to a broader class of combinatorial action sets that admit a separable structure, \textit{e.g.}, separable matroids.}
\end{itemize}
As a result, our algorithm is comparable to the per-iteration complexity of \textit{Follow-the-Perturbed-Leader} (FTPL)~\citep{neu2015first,neu2016importance}---which injects random noise to cumulative losses and selects the action with minimal perturbed total loss---while preserving the tight adversarial and stochastic regret guarantees characteristic of FTRL. Hence, we achieve both statistical and computational efficiency.

\subsection{Related Work}

Our work builds on three streams of literature: (i) \emph{contextual combinatorial bandits}, (ii) algorithms that exploit \emph{semi-bandit feedback}, and  (iii) \emph{adversarial linear bandits and best-of-both-worlds algorithms}.

\textbf{(i) Contextual Combinatorial Bandits.} 
The contextual combinatorial bandit problem was introduced by~\cite{qin2014contextual}, who proposed the \textsc{C$^{2}$UCB} algorithm for multi-item recommendations. This framework builds on earlier combinatorial semi-bandit models motivated by influence maximization in social networks~\cite{chen2013combinatorial}.
Under the i.i.d.~reward assumption, linear generalization across arms reduces the regret dependence on the action-set size $K$ from $\sqrt{K}$ to the context dimension~$d$.
Consequently, when $d \ll K$, the regret scales with
$\sqrt{dT}$ instead of $\sqrt{KT}$, offering a substantial statistical advantage. Stochastic variants have explored Thompson sampling
\cite{wang2018thompson} and refined confidence sets
\cite{takemura2021near};
however, none of these algorithms provide optimal regret guarantees against an adaptive adversary.

\textbf{(ii) Semi-Bandit Feedback and Efficient Optimization.}
Combinatorial bandits were first formalized by \cite{cesa2012combinatorial}, who generalized \textsc{Exp3}~\citep{auer2002nonstochastic} from single arms to binary action vectors. Their algorithm, \textsc{ComBand}, operates under \emph{full-bandit feedback}---the learner only observes an aggregated loss for the binary action vector they have chosen---and achieves $\widetilde{\mathcal O}\bigl(\sqrt{mKT\ln(K/m)}\bigr)$ regret. Follow-up work connected this update to mirror descent on combinatorial polytopes:
\textsc{ComponentHedge} \cite{koolen2010hedging},
\textsc{CombEXP} \cite{combes2015combinatorial}, and the OSMD analysis of~\cite{audibert2014regret}.
{\citet{audibert2014regret} provide a concise taxonomy and prove matching lower bounds, showing that the $\sqrt{mKT}$ rate is information-theoretically optimal whenever semi-bandit feedback is available. These results are also summarized in \citep[Section~5.6.1]{bubeck2012regret}.} To reduce per-round runtime, ~\citet{neu2016importance} introduced a \emph{stochastic} mirror-descent update that samples a single action and updates only the observed coordinates, thereby avoiding the full Bregman projection required by standard OSMD or FTRL. The trade-off is an extra logarithmic factor in the regret bound. Variance-reduced estimators and high-probability analyses later refined these guarantees \citep{zimmert2019beating}, yet computing the exact Bregman projection still requires time linear in the number of arms $K$.

\textbf{(iii) Adversarial Linear Bandits and Best-of-Both-Worlds.}
When losses can adapt to the learner's past, the i.i.d.~assumption no longer holds. For single-arm actions ($m=1$) the classical \textsc{EXP4} algorithm attains
$\widetilde{\mathcal O}(\sqrt{T})$ regret, but its running time and memory scale with the number of experts---that is, the total number of deterministic policies mapping contexts to arms---which grows exponentially in the arm size $K$~\citep{auer2002nonstochastic}. 
~\cite{neu2020efficient} mitigated this explosion with \textsc{RealLinExp3}, achieving \(\widetilde{\mathcal O}(\sqrt{dKT})\) regret under the strong assumption that the context distribution is known. 
\cite{liu2023bypassing} proposed an algorithm that achieves $\mac O((dm)^3\sqrt{T})$ regret bound when applied to combinatorial semi-bandit case with only access to $1$ context sample per time period $t$, though the resulting log-determinant FTRL remains intractable for the combinatorial bandits case because of exponentially many constraints~\cite{foster2020adapting, zimmert2022return} and the nonlinearity introduced by their lifting covariance technique.
The first near-optimal treatment is the Matrix-Geometric-Resampling (MGR) algorithm of \textit{Zierahn et al.}~\cite{zierahn2023nonstochastic}, which achieves \(\widetilde{\mathcal O}\bigl(\sqrt{mKT \max\{d,m/\lambda_{\text{min}}(\Sigma)\}}\bigr)\) regret but fails to provide best-of-both-worlds regret guarantees and relies on a sampling subroutine which requires $\mac O(\ln T)$ samples in each round.
Similar to the issue of generalizing~\citep{liu2023bypassing}, a straightforward extension of the best-of-both-worlds (BOBW) algorithm from the linear contextual bandit setting, as studied by~\cite{kuroki2024best}, to the contextual combinatorial bandit framework yields an action space whose cardinality grows exponentially with the size of the combinatorial decision variables. This exponential growth renders the resulting optimization problem NP-hard and computationally infeasible for large-scale applications. 
More broadly, the BOBW question---achieving $\widetilde{\mathcal{O}}(\ln T)$ under stochastic contexts and $\widetilde{\mathcal{O}}(\sqrt T)$ under adversarial ones without prior knowledge---originated in the $K$-armed bandit setting~\citep{bubeck2012best,seldin2014one} and has since been settled for linear bandits via data-dependent stability \citep{lee2021achieving}, and a streamlined FTRL scheme \citep{kong2023best} and for contextual bandits via Exp4-style expert reductions \citep{pacchiano2022best, dann2023blackbox}, at the cost of exponential policy-enumeration. In the purely combinatorial semi-bandit setting, hybrid-regularizer methods~\citep{zimmert2019beating,ito2021hybrid} achieve the optimal BOBW rates but do not consider contextual information and the stemming difficulty of inverse-covariance matrix estimation.  
Consequently, no prior method simultaneously handles unknown covariance, large/infinite policy classes, and combinatorial action structure while maintaining BOBW regret bounds. We close this gap with a Shannon-entropy FTRL algorithm that requires neither policy enumeration nor known $\Sigma$, yet still achieves optimal BOBW rates on arbitrary combinatorial action sets.

\begin{table*}[t]
\centering
\caption{Comparison of regret bounds across different settings.}
\begin{threeparttable}
\resizebox{\textwidth}{!}{%
\renewcommand{\arraystretch}{1.25}
\begin{tabular}{lcccccc}
\toprule
 & \textbf{Our Paper} 
 & \textbf{\cite{qin2014contextual}}
 & \textbf{\cite{zierahn2023nonstochastic}}
 & \textbf{\cite{ito2022feedbackgraph}}
 & \textbf{\cite{kong2023best}}
\\ \midrule

\textbf{Feedback} 
& Semi-bandit
& Full-bandit
& Semi/Full-bandit
& Graph bandit
& Linear bandit
\\ [0.5ex]

\textbf{Adv.~Regret}
&
$\displaystyle
\widetilde{\mathcal{O}}\!\left(\mathrm{poly}(d,m,K)\sqrt{T}\right)
$
&
\text{N/A}
&
$\displaystyle
\widetilde{\mathcal{O}}\!\left(\mathrm{poly}(d,m,K)\sqrt{T}\right)
$
&
$\displaystyle
\widetilde{\mathcal{O}}\!\left(\sqrt{\alpha T}\right)
$
&
$\displaystyle
\widetilde{\mathcal{O}}\!\left(\sqrt{T}\right)
$
\\ [0.5ex]

\textbf{Stoch.~Regret}
&
$\displaystyle
{\mathcal{O}}\!\left(\mathrm{poly}(d,m,K)(\ln T)^3\right)
$
&
$\displaystyle
\widetilde{\mathcal{O}}\!\left(d\sqrt{mT}\right)
$
&
\text{N/A}
&
$\displaystyle
{\mathcal{O}}\!\left(\frac{\alpha (\ln T)^3}{\Delta_{\min}}\right)
$
&
$\displaystyle
{\mathcal{O}}\!\left(\frac{(\ln T)^2}{\Delta_{\min}}\right)
$
\\\bottomrule
\end{tabular}
}
\label{tab:separated-regret}
\end{threeparttable}
\end{table*}

\section{Best-of-Both-Worlds Algorithm for Contextual Combinatorial Semi-Bandits}
\label{sec:bobw-context}

We begin our analysis of contextual combinatorial bandits by describing the interaction protocol that the learner follows and the problem settings that the best-of-both-worlds framework unifies. Given~$K$ base arms, $m$ maximum number of base arms allowed to pull per round, an action set~$\mac A {\subseteq}\{A\in \{0,1\}^K : \sum_{k=1}^K (A)_k\le m\}$, and a context space $\mathcal{X}\subset\R^d$, the interaction protocol for the contextual combinatorial bandit problem proceeds as follows.

\paragraph{Interaction Protocol.}
In each round $t=1,\dots ,T$ the interaction protocol proceeds as follows. The environment first chooses the loss coefficients $\theta_{t,1},\ldots,\theta_{t,K}\in\R^d$, and draws an i.i.d.~context $X_t\sim \mac D$.
The learner observes $X_t$ and plays the vector $A_t\in\mathcal A$.
The learner then observes the arm-wise losses $\ell_t(X_t,k)=\langle X_t,\theta_{t,k}\rangle$ for every $k$ such that $(A_t)_k=1$ and suffers the total loss $\sum_{k=1}^K\ell_t(X_t,k)(A_t)_k$.
Consistent with the prior work on linear contextual bandits such as~\citep{kuroki2024best}, we adopt the following assumption throughout this paper.
\begin{assumption}
The distribution $\mathcal{D}$, from which contexts $X$ are independently drawn, satisfies $\mathbb{E}[X X^{\top}]=\Sigma \succ 0$; $\|X\|_2 \leq 1$ $\mac D$-almost surely;
$\|\theta_{t,k}\|_2 \leq 1$ for all $k \in[K]$ and $t \in[T] ;$
$\ell_t(x,k)=\langle x,\theta_{t,k}\rangle \in[-1,1]$ for all $x \in \mac X, k \in[K]$, and $t \in[T]$.
\end{assumption}

Under this assumption, we distinguish between the adversarial and stochastic regimes as follows. In the \emph{adversarial regime}, the unknown loss coefficients $\{\theta_{t,k}\}_{k=1}^K$ may vary arbitrarily with respect to time but are independent of the learner's action sequence.
In the \emph{stochastic regime}, the loss function takes the form $\ell_t(X_t,k)=\langle X_t, \theta_k\rangle + \varepsilon(X_t,k)$ for all $k\in[K]$, where $\theta_k$ is also unknown but fixed throughout the $T$ interaction periods, and $\varepsilon(X_t,k)$ is an independent zero-mean noise.
There is an additional intermediate regime that interpolates between the adversarial and stochastic regime, referred to as \emph{corrupted stochastic regime}~\citep{zimmert2021tsallis,kuroki2024best}.
In this case, the loss function associated with each base arm $k$ is defined by $\ell_t(X_t, k)=\langle X_t, \theta_{t,k}\rangle + \varepsilon(X_t,k)$, where $\varepsilon(X_t, k)$ is independent zero-mean noise. On the other hand, the coefficients $\theta_{t,k}$ are such that there exist fixed and unknown vectors $\theta_1, \ldots, \theta_K$ and that satisfy $\sum_{t=1}^T \max_{A\in\mac A} \sum_{k=1}^K \|\theta_{t,k}-\theta_k\|_2 (A)_k  \leq C$ for a fixed corruption level constant $C>0$.
Note that $C=0$ corresponds to the stochastic regime and $C=\Theta(T)$ corresponds to the adversarial regime with additional zero-mean noise.
For any fixed context vector $x\in\mac X$, we define $u^\star : \mac X \to \mac A$ as the optimal context-dependent action map that achieves minimum loss in hindsight, that is,
\begin{equation*}
    u^\star(x)=\argmin_{A\in\mac A} \E\left[ \sum_{t=1}^T \sum_{k=1}^K \ell_t(x,k)(A)_k\right].
\end{equation*}
The learner's performance is then measured by the pseudo-regret
\begin{equation*}
    \mac R_T = \E \left[ \sum_{t=1}^T \sum_{k=1}^K \ell_t(X_t,k) \Big((A_t)_k - \big(u^\star(X_t)\big)_k\Big)\right],
\end{equation*}
where the expectation is taken with respect to the action-selection distribution chosen by the learner and the sequence of random contexts and loss coefficients chosen by the environment.
As noted in the introduction, it is well-established in the literature that the optimal regret bounds are $\widetilde{\mac O}(\ln T)$ in the stochastic regime and $\widetilde{\mac O}(\sqrt{T})$ in the adversarial regime. Algorithms that achieve both rates simultaneously and without prior knowledge of the environment’s nature are referred to as ``best-of-both-worlds'' \citep{bubeck2012best,seldin2014one}.

In the stochastic regime, we define the suboptimality gap associated with an arbitrary action $A\in\mac A$ and a fixed context $x\in\mac X$ through $\Delta_A(x)=\sum_{k=1}^K \langle x,\theta_k\rangle ((A)_k-(u^\star(x))_k)$, and its minimum as $\Delta_{\min}=\min_{x\in\mac X} \min_{A \in \mac A \backslash \{u^\star(x)\}} \Delta_A(x)$.
Additionally, we denote $\mac F_t = \sigma(X_1, A_1, \ldots,X_t,A_t)$ as the $\sigma$-algebra generated by the history of contexts and actions up to and including time~$t$. For any positive semi-definite matrix $M \in \R^{d\times d}$, we denote by $\lambda_{\min}(M)$ its smallest eigenvalue. With these definitions and notation in place, we now turn to our proposed method.
\begin{algorithm}[h!]
\caption{FTRL for contextual combinatorial semi-bandits}
\label{alg:entropy-ftrl-contextual}
\begin{algorithmic}[1]
\Require Context dimension $d$, subset size $m\le K$, exploration set $E\subseteq \mac A$, learning rates $\{\eta_t\}_{t=1}^T,$ $\{\alpha_t\}_{t=1}^T, \ \{M_t\}_{t=1}^T$, initialization $\tilde\theta_{0,k}=0$ for all $k\in[K]$
\For{$t=1,\dots,T$}
  \State Observe context $X_t\in\mathbb R^d$
  \State \label{line:FTRL}Compute $\bar A_t(X_t)\in\argmin_{a\in\conv(\mathcal{A})} \sum_{s=1}^{t-1} \sum_{k=1}^K \langle X_t, \tilde \theta_{s,k} \rangle a_k + \psi_t(a)$
  \State 
  Find a distribution $p_t(\cdot|X_t)$ over $\mathcal{A}$ such that $\E_{a \sim p_t(\cdot|X_t)}[a]=\bar A_t(X_t)$ 
  \State Set $\pi_t(a|X_t)=(1-\alpha_t\eta_t)p_t(a|X_t)+\alpha_t\eta_t \mathbf{1}[a\in E]/|E|$ for all $a\in\mac A$ and sample $A_t \sim  \pi_t(\cdot|X_t)$
\State Observe loss $\ell_t(X_t,k)$ for all $k\in[K]$ such that $(A_t)_k=1$
\Subroutine{Precision-matrix estimation}{$\pi_t, M_t$}
\For{$n=1,\ldots,M_t$}
\State Draw $X(n) \sim \mathcal{D}$ and $A(n) \sim \pi_t(\cdot | X(n))$
\State Compute $C_{n, k}=\prod_{j=1}^n (I - (A(j))_k X(j) X(j)^{\top}/2)$ for all $k\in[K]$
\EndFor
\State\Return $\widehat{\Sigma}_{t,k}^+=(I+\sum_{n=1}^{M_t} C_{n, k})/2$ for all $k\in[K]$
\EndSubroutine
\State Compute $\tilde\theta_{t,k}=\widehat{\Sigma}_{t,k}^+ X_t \ell_{t}(X_t,k) (A_t)_k$ for all $k\in[K]$
\EndFor
\end{algorithmic}
\end{algorithm}
Algorithm~\ref{alg:entropy-ftrl-contextual} consists of an action-selection rule and a loss-estimation procedure. The action-selection rule applies entropy-regularized FTRL to the convex hull of the action set, then samples an action from the original combinatorial action space, whose cardinality grows exponentially with~$m$.
Namely, we denote the Shannon entropy $H:\conv(\mac A)\to \R$ by $H(A)=-\sum_{k=1}^K (A)_k \ln (A)_k$ and specify the time-varying regularizer used in Step~\ref{line:FTRL} of Algorithm~\ref{alg:entropy-ftrl-contextual} as $\psi_t(A) = -H(A)/\eta_t$.

The loss estimation is particularly relevant for the contextual case, as learning the optimal action hinges on a computationally tractable approximation of the unknown parameter $\theta_{t,k}$ governing the loss.
Given the covariance matrix $\Sigma_{t, k}=\E[(A_t)_k X_t X_t^{\top}|\mac F_{t-1}]$, it is known that we can construct the unbiased estimator $\hat\theta_{t,k}$ defined through
$\hat\theta_{t,k}=\Sigma_{t, k}^{-1} X_t \ell_{t}(X_t,k) (A_t)_k$ for all $k \in[K]$.
However, computing this estimator is computationally inefficient as its construction requires computing the inverse of the $d \times d$ covariance matrix $\Sigma_{t, k}$, which is of complexity $\mac O(d^3)$. Furthermore, this estimation approach assumes that the covariance matrix is known in advance, which is not the case in most real-world scenarios.
To avoid such practical problems, we consider relying on the approach of the Matrix-Geometric-Resampling method proposed by~\citep{neu2020efficient}, as described by the subroutine within Algorithm~\ref{alg:entropy-ftrl-contextual}. This approach improves the computational efficiency by a factor on the order of $\mac O(d)$ compared to computing the naïve unbiased estimator and does not require full knowledge of the context distribution~$\mac D$, at the cost of introducing an additional bias in the estimation of the precision matrix $\Sigma_{t, k}^{-1}$. Note that we only require $M_t=\lceil 4K \ln(t) / (\alpha_t\eta_t \lambda_{\min} (\Sigma))\rceil=\mac O(K\ln(T))$ context samples at every time step.
Additionally, an exploration set $E\subseteq \mac A$ is introduced to bound $\lambda_{\min}(\Sigma_{t,k})$ away from zero. The set $E$ thus must satisfy that, for every $k\in[K]$, there is at least one action $A\in E$ such that $(A)_k=1$. For simplicity, we select $E=\big\{A\in\{0,1\}^K:\sum_{k=1}^K (A)_k = 1\big\}$, in which case $|E|=K$.
For every $t\in[T]$, we specify the remaining algorithmic parameters as $\eta_t=1/\beta_t$, where $\beta_t= \max\{2,c_2\ln T, \beta_t'\}$ and $\beta_{t+1}'=\beta_t'+c_1(1+(m\ln(K/m))^{-1}\sum_{s=1}^t H(\bar A_s(X_s)))^{-1/2}$. We also let $\alpha_t = 4K \ln(t)/\lambda_{\min}(\Sigma)$. In addition, we set the problem-dependent constants $c_1=\sqrt{( d+ \ln T/\lambda_{\min }(\Sigma)) K\ln T/(m \ln (K/m)) }$ as well as $ c_2=8 K/\lambda_{\min }(\Sigma)$. For initializations, we choose $\beta_1^{\prime}=c_1 \geq 1$. These definitions ensure that $0 \leq \alpha_t \eta_t \leq$ $1 / 2$ and $0<\eta_t \leq 1 / 2$ throughout $t=1,\ldots,T$ rounds.

We state our main results in Theorem~\ref{thm:bobw} and sketch the key elements of its proof in the next section.
The regret upper bounds presented in Theorem~\ref{thm:bobw} are optimal in the dependence on $T$ up to logarithmic factors and, to the best of our knowledge, constitute the first known best-of-both-worlds results for contextual combinatorial semi-bandits. 
Note that in the corrupted stochastic regime, the corruption budget $C$ enters only as an additive constant in the regret bound (see Appendix~\ref{appendix-bobw} for details) and is therefore subsumed by the $\mathcal{O}(\cdot)$ notation.

\begin{theorem}[Best-of-both-worlds regret guarantee for contextual combinatorial bandits]\label{thm:bobw}
The regret of Algorithm~\ref{alg:entropy-ftrl-contextual} satisfies the following.
\begin{enumerate}[label = (\roman*), itemsep=0pt, parsep=0pt, topsep=0pt]
    \item \label{item:adv-regret} In the adversarial regime, we have
   {$ \mac R_T=\mac O\left(m\sqrt{K\ln(K/m)T \ln T (d+\ln T/\lambda_{\min }(\Sigma)) }\right)$};
    \item \label{item:stoch-regret} In the stochastic regime and the corrupted stochastic regime, we have
   {$ \mac R_T =  \mac O \left(\frac{K\ln T m^{3/2} \ln((K-m)T)(d+\ln T/\lambda_{\min }(\Sigma))}{ \Delta_{\min}}\right)$.}
\end{enumerate}
\end{theorem}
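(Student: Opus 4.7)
The plan is to start from the standard FTRL regret decomposition, following the template used for entropy-regularized FTRL on the convex hull of $\mac A$. Writing $L_t(a) = \sum_{s\le t}\langle X_s,\tilde\theta_{s,k}\rangle (a)_k$, the iterates $\bar A_t$ are the minimizers of $L_{t-1} + \psi_t$, and for any comparator $u = u^\star(X_t)$ the standard ``be-the-leader'' argument decomposes the expected pseudo-regret into three parts: (a) a \emph{penalty} term $\sum_t(\psi_{t+1}(u)-\psi_t(\bar A_t))$ which, for $\psi_t=-H/\eta_t$, is at most $m\ln(K/m)\,\beta_T$; (b) a \emph{stability} term bounded via the local norm of the Shannon entropy, giving $\sum_t \eta_t\,\E[\sum_k(\bar A_t)_k\langle X_t,\tilde\theta_{t,k}\rangle^2]$; and (c) a \emph{bias} term coming from the Matrix-Geometric-Resampling subroutine. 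The exponential bias decay of MGR together with the choice $M_t=\widetilde{\mac O}(\ln t)$ makes (c) additive $\mac O(1)$ over $T$ rounds, so the core of the proof is to control (a) and (b).

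First I would bound the stability term. The forced exploration $\alpha_t\eta_t$ over the set $E$ (with $|E|=K$) guarantees $\Sigma_{t,k}\succeq (\alpha_t\eta_t/K)\,\Sigma$, so $\lambda_{\min}(\Sigma_{t,k})\ge \alpha_t\eta_t\lambda_{\min}(\Sigma)/K$. This lets me control $\E[\|\widehat\Sigma_{t,k}^+ X_t\|_2^2]$ and turn the stability term into something of the form $\eta_t\cdot(d + \ln T/\lambda_{\min}(\Sigma))\cdot K$ per round, i.e.\ $\eta_t\,\kappa^2/\ln T$ up to constants. For part~\ref{item:adv-regret}, I would then sum these contributions: using the recursion $\beta_{t+1}' = \beta_t' + c_1(1+(m\ln(K/m))^{-1}\sum_{s\le t}H(\bar A_s(X_s)))^{-1/2}$ and the trivial bound $H(\bar A_s(X_s))\le m\ln(K/m)$, a standard integral/telescoping estimate gives $\beta_T = \mac O(c_1\sqrt{T/(m\ln(K/m))} + \ln T)$. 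Substituting $c_1$ and combining with the penalty bound yields exactly $\mac R_T = \mac O(\kappa m\sqrt{\ln(K/m)T})$.

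Second, for part~\ref{item:stoch-regret} I would invoke the self-bounding technique that is the hallmark of BOBW analyses with time-varying entropy regularization. The key inequality is that, for any $x$, $H(\bar A_t(x))\le \mac O(\ln((K-m)T))\cdot (\Delta_{\min})^{-1}\,\E[\Delta_{A_t}(x)\mid X_t=x,\mac F_{t-1}]$; this follows by comparing $\bar A_t(x)$ to $u^\star(x)\in\{0,1\}^K$ coordinate-wise and using $\ell_t$-boundedness together with the definition of $\Delta_{\min}$. Plugging this into the adversarial-style bound turns the $c_1\sqrt{T/(m\ln(K/m))}$ growth of $\beta_T$ into $c_1\sqrt{\mac R_T\ln((K-m)T)/(m\ln(K/m)\Delta_{\min})}$, producing an inequality of the form $\mac R_T \le C_1\sqrt{\mac R_T\,\kappa^2 m\ln((K-m)T)/\Delta_{\min}} + C_2$. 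Solving this quadratic in $\sqrt{\mac R_T}$ yields the advertised $\mac O(\kappa^2 m^{3/2}\ln((K-m)T)/\Delta_{\min})$ bound. For the corrupted stochastic regime, the corruption term contributes an extra $\sum_t\E[\sum_k|\langle X_t,\theta_{t,k}-\theta_k\rangle|(A_t)_k]\le C$ to the decomposition, which is additive and absorbed into the $\mac O(\cdot)$.

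The main obstacle, I expect, is the joint control of three coupled ingredients in the self-bounding step: the MGR bias (which must be small enough not to perturb the per-round expectation of $\Delta_{A_t}$), the estimator variance bound under the exploration floor $\alpha_t\eta_t$ (which itself depends on $\eta_t=1/\beta_t$ and must stay uniformly controlled even as $\beta_t$ grows), and the data-dependent learning-rate recursion that converts accumulated entropy into $\beta_T$. Making the entropy-to-gap inequality tight enough to yield only a $\ln((K-m)T)$ factor --- rather than $\ln K$ --- while keeping the MGR sample count at $\widetilde{\mac O}(\ln t)$ is the delicate part of the argument.
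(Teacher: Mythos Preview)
Your high-level plan is the right one: penalty/stability FTRL decomposition, trivial entropy bound in the adversarial regime, and a self-bounding argument in the (corrupted) stochastic regime. Two steps, however, do not go through as written.

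\textbf{Missing decoupling via the ghost sample.} You run the FTRL decomposition directly against the time-varying comparator $u^\star(X_t)$. But the standard penalty/stability bound you invoke requires a \emph{fixed} comparator, and moreover the estimated losses $\tilde\theta_{s,k}$ depend on the past contexts $X_1,\dots,X_s$, so the loss sequence and the comparator are not independent. The paper handles this by introducing an auxiliary game with a \emph{ghost sample} $X_0\sim\mac D$ drawn independently of the history (Lemma~\ref{lemma:ghost-decomp}): all losses are evaluated at $X_0$, the comparator is $u^\star(X_0)$, and the FTRL analysis runs entirely in this fixed-context game. The true regret is then bounded by the auxiliary regret plus a bias term that Lemma~\ref{lemma:bias-control} controls. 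Without this decoupling, your ``be-the-leader'' decomposition does not apply, and the penalty bound $m\ln(K/m)\beta_T$ you write has no single $u$ to refer to.

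\textbf{The per-round entropy-to-gap inequality fails.} Your key inequality,
\[
H(\bar A_t(x))\;\le\;\mac O\!\big(\ln((K-m)T)\big)\cdot\frac{1}{\Delta_{\min}}\,\E\big[\Delta_{A_t}(x)\mid X_t=x,\mac F_{t-1}\big],
\]
is not true round-by-round. If $\bar A_t$ places mass $m\epsilon$ on the suboptimal coordinates, then $H(\bar A_t)\asymp m\epsilon\ln(1/\epsilon)$ while $\sum_{A\ne u^\star}\pi_t(A)\asymp\epsilon$, so the ratio is $m\ln(1/\epsilon)$, which is unbounded as $\epsilon\to 0$; there is no exploration floor on $\bar A_t$ (only on $\pi_t$) to prevent this. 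The paper's Lemma~\ref{lemma:stochastic-entropy} avoids the problem by first \emph{summing over $t$} and only then applying Jensen, obtaining
\[
\sum_{t=1}^T H(\bar A_t(X_0))\;\le\;m\ln((K-m)T)\sum_{t=1}^T\sum_{A\ne u^\star(X_t)}\pi_t(A),
\]
where the $T$ inside the logarithm comes precisely from this aggregation (and the assumption that the aggregate suboptimal mass exceeds $e$, with a trivial bound otherwise). Note also the factor $m$, which your statement omits; it arises from the ``lifting'' step $\sum_{k\notin S}(\bar A_t)_k\le m\sum_{A\ne u^\star}\pi_t(A)$ that passes from coordinates to action distributions. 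Once you have the aggregate inequality, the self-bounding step is exactly as you describe: combine it with the regret lower bound $\mac R_T\ge\Delta_{\min}\E[\sum_t\sum_{A\ne u^\star}\pi_t(A)]-2C$ and the upper bound from Lemma~\ref{lemma:reg-game-orig}, then optimize via $a\sqrt{x}-bx\le a^2/(4b)$.
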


\subsection{Regret Analysis}

Establishing a best-of-both-worlds guarantee with a computationally efficient algorithm poses several challenges. First, the approach of \textit{Zierahn et al.}~\citep{zierahn2023nonstochastic}, which uses fixed learning and sampling rates yields only an $\widetilde{\mathcal{O}}(\sqrt{T})$ suboptimal regret bound in the stochastic regime---a result of its constant-learning-rate schedule and the coarse penalty bound in the FTRL analysis. Second, adopting the context-less best-of-both-worlds analysis for combinatorial semi-bandits by \textit{Zimmert et al.}~\citep{zimmert2019beating} relies on a hybrid regularizer to control arm-wise entropy; this, however, demands arm-wise bias control that state-of-the-art precision-matrix estimators cannot guarantee under standard assumptions.
To overcome these limitations, we exploit the time-varying learning rate to refine the regret analysis to be compatible with the stochastic regime, and we lift the mean-action space (support size~$K$) to the space of action distributions (support size exponential in $m$). Together, these approaches enable a self-bounding argument commonly used to achieve a $\widetilde{\mathcal{O}}(\ln T)$ regret bound in the stochastic and corrupted stochastic regimes.

Analyzing regret in the contextual combinatorial semi-bandit setting presents an inherent challenge due to the evolving dependency between the sequence of observed contexts $X_1,\ldots, X_T$, and the learned parameters $\{\tilde{\theta}_{1,k}\}_{k=1}^K,\ldots,\{\tilde{\theta}_{T,k}\}_{k=1}^K$.
To address this, we follow a strategy inspired by~\citep{neu2020efficient,zierahn2023nonstochastic}, in which we introduce an auxiliary game that simplifies the regret analysis. We define an auxiliary regret notion by introducing a \emph{ghost context sample} $X_0\sim\mac D$, drawn independently from the data used to construct the estimates $\{\tilde\theta_{t,k}\}_{k=1}^K$. The regret in this auxiliary game is then
\begin{equation*}
    \widetilde{\mac R}_T(X_0)=\E\left[ \sum_{t=1}^T \sum_{k=1}^K\langle X_0, \tilde{\theta}_{t,k}\rangle \Big((A_t)_k - \big(u^\star(X_0)\big)_k\Big) \right].
\end{equation*}
This fixed-context formulation decouples the randomness in the context sequence from the randomness in parameter estimation, making the analysis more tractable. The following result relates the regret in the original contextual semi-bandit setting (as defined in the previous section) to the regret in the auxiliary game.
\begin{lemma}[Original game vs.\@ auxiliary game {\citep[Equation (6)]{neu2020efficient}}]
\label{lemma:ghost-decomp}
    For any $X_0 \sim \mathcal{D}$, the regret of Algorithm~\ref{alg:entropy-ftrl-contextual} satisfies
\begin{equation*} 
    \mac R_T \le \E \big[\widetilde{\mac R}_T(X_0)\big] + 2\sum_{t=1}^T \E \left[\max_{A\in\mac A} \E\Bigg[\sum_{k=1}^K \langle X_t, \hat\theta_{t,k} - \tilde\theta_{t,k}\rangle (A)_k \mid \mac F_{t-1}\Bigg]\right].
\end{equation*}
\end{lemma}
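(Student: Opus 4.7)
The result is stated as a direct generalization of Eq.~(6) in Neu and Olkhovskaya~(2020), and I would invoke their ``ghost sample'' proof strategy and adapt it coordinate-wise to the combinatorial setting. The ghost context $X_0\sim\mac D$ is drawn independently of everything (the interaction history $\mac F_T$, the played actions $A_1,\dots,A_T$, and the MGR samples used to build $\widehat\Sigma_{t,k}^+$), while having the same marginal distribution as any $X_t$; this is what allows one to match $X_t$-dependent and $X_0$-dependent expectations in a controlled way.

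I would start from $\mac R_T = \sum_{t} \E[\sum_{k} \langle X_t, \theta_{t,k}\rangle((A_t)_k - (u^\star(X_t))_k)]$ and rewrite each summand by inserting the telescoping identity $\theta_{t,k} = (\theta_{t,k}-\hat\theta_{t,k}) + (\hat\theta_{t,k}-\tilde\theta_{t,k}) + \tilde\theta_{t,k}$. The ``$\theta_{t,k}-\hat\theta_{t,k}$'' block is controlled by the conditional unbiasedness $\E[\hat\theta_{t,k}\mid\mac F_{t-1}] = \theta_{t,k}$, used together with the convenient identity $\hat\theta_{t,k}(A_t)_k = \hat\theta_{t,k}$ (since $(A_t)_k \in \{0,1\}$), which permits matching the true-loss terms to their estimator-based counterparts arm by arm. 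The ``$\tilde\theta_{t,k}$'' block is mapped to $\E[\widetilde{\mac R}_T(X_0)]$ via the ghost substitution: the independence of $X_0$ from $(X_t, A_t, \widehat\Sigma_{t,k}^+)$ and the distributional equality $X_0 \stackrel{d}{=} X_t$ make it possible to identify this block with the auxiliary regret after conditioning on the MGR block.

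This leaves the ``$\hat\theta_{t,k} - \tilde\theta_{t,k}$'' block, which appears contracted against $(A_t)_k$ once and against $(u^\star(X_t))_k$ once. Both indicators lie in $\mac A$, so each contraction is bounded pointwise by $\max_{A \in \mac A} \sum_k \langle X_t, \hat\theta_{t,k} - \tilde\theta_{t,k}\rangle (A)_k$, yielding the factor $2$ in the claim. Pulling the maximum outside the conditional expectation $\E[\cdot \mid \mac F_{t-1}]$ and then taking the outer expectation produces exactly the right-hand side as written in the lemma.

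The main obstacle is the measurability and independence bookkeeping in the ghost substitution. Because $\tilde\theta_{t,k}$ depends on $X_t$ through the observed loss $\ell_t(X_t,k) = \langle X_t, \theta_{t,k}\rangle$ as well as on the independent MGR samples, the swap $X_t \leftrightarrow X_0$ cannot be carried out naively; one must condition on a sub-$\sigma$-algebra built from $\mac F_{t-1}$, $A_t$, and the MGR block so that the remaining integrand becomes symmetric in $(X_t, X_0)$ before the distributional equality is invoked. Verifying that this substitution preserves the expectation (rather than simply bounding it in an uncontrolled direction) is the delicate part, and is precisely what the cited reference works out in the linear-contextual ($m=1$) case; the combinatorial generalization here is linear in the arm index $k$ and therefore inherits the same justifications.
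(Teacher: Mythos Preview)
The paper does not supply its own proof of this lemma; it is stated verbatim as a citation of Eq.~(6) in \cite{neu2020efficient} and used as a black box throughout Appendix~\ref{appendix-bobw}. Your reconstruction of the ghost-sample argument---the three-way split $\theta_{t,k}=(\theta_{t,k}-\hat\theta_{t,k})+(\hat\theta_{t,k}-\tilde\theta_{t,k})+\tilde\theta_{t,k}$, the use of conditional unbiasedness of $\hat\theta_{t,k}$, the $X_t\leftrightarrow X_0$ swap under the appropriate conditioning, and the factor~$2$ from bounding both the $(A_t)_k$ and $(u^\star(X_t))_k$ contractions by the $\max_{A\in\mathcal A}$---is exactly the approach of the cited reference, carried over coordinate-wise to the semi-bandit setting.
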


This decomposition allows us to break down the regret into two components: the \emph{auxiliary regret}, which captures the performance of the algorithm in a fixed-context game, and the \emph{excess bias-induced regret}, which reflects the cumulative effect of using $\tilde{\theta}_{t,k}$ rather than the unbiased estimator $\hat{\theta}_{t,k}$. By analyzing these two terms separately, we can control the total regret under both stochastic and adversarial assumptions.

As an initial step in our regret analysis, we state the following lemma, which bounds the per-round excess regret introduced by the bias in the precision-matrix estimation subroutine.  This result enables us to bound the total bias-induced regret in Lemma~\ref{lemma:ghost-decomp} by $\mac O(\ln T)$ and to control the additional exploration regret in the auxiliary game.
\begin{lemma}[Bias control] \label{lemma:bias-control}
    For all $t\in[T]$, the estimates $\{\tilde{\theta}_{t,k}\}_{k=1}^K$ constructed in Algorithm~\ref{alg:entropy-ftrl-contextual} satisfy 
        $\max_{A\in\mac A} \E\big[\sum_{k=1}^K \langle x, \hat\theta_{t,k} - \tilde\theta_{t,k}\rangle (A)_k \mid \mac F_{t-1}\big] 
        \le m/t^2$ for all $x\in\mac X$.
\end{lemma}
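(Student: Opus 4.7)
The plan is to compute the conditional bias of the resampling estimator exactly, collapse it to a matrix geometric remainder, and then exploit the forced exploration to turn the schedule $M_t=\lceil 4K\ln t/(\alpha_t\eta_t\lambda_{\min}(\Sigma))\rceil$ into a $1/t^2$ tail bound.

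First I would condition on $\mac F_{t-1}$ and use the fact that the resampling pairs $(X(n),A(n))_{n=1}^{M_t}$ are drawn i.i.d.\ from $\mathcal D\otimes\pi_t$ independently of $(X_t,A_t)$. This gives $\E[\hat\theta_{t,k}\mid\mac F_{t-1}]=\theta_{t,k}$ and $\E[\tilde\theta_{t,k}\mid\mac F_{t-1}]=\E[\widehat\Sigma^{+}_{t,k}\mid\mac F_{t-1}]\,\Sigma_{t,k}\theta_{t,k}$. The matrix factors inside $C_{n,k}$ are i.i.d.\ with conditional mean $I-\Sigma_{t,k}/2$, so $\E[C_{n,k}\mid\mac F_{t-1}]=(I-\Sigma_{t,k}/2)^{n}$. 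Summing the geometric series and using that $\Sigma_{t,k}/2\preceq I$ yields the closed form
\begin{equation*}
    B_{t,k}\;:=\;\Sigma_{t,k}^{-1}-\E[\widehat\Sigma^{+}_{t,k}\mid\mac F_{t-1}]\;=\;\bigl(I-\Sigma_{t,k}/2\bigr)^{M_t+1}\,\Sigma_{t,k}^{-1},
\end{equation*}
so that the conditional bias of $\tilde\theta_{t,k}$ is $B_{t,k}\Sigma_{t,k}\theta_{t,k}=(I-\Sigma_{t,k}/2)^{M_t+1}\theta_{t,k}$.

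Next I would lower-bound $\lambda_{\min}(\Sigma_{t,k})$ using the mixing step in Line 5 of Algorithm~\ref{alg:entropy-ftrl-contextual}. Because $E$ is the set of singletons and is sampled uniformly with probability $\alpha_t\eta_t$, the marginal probability of pulling arm $k$ is at least $\alpha_t\eta_t/K$ irrespective of $X_t$, which gives $\Sigma_{t,k}\succeq(\alpha_t\eta_t/K)\Sigma$ and hence $\lambda_{\min}(\Sigma_{t,k})\ge\alpha_t\eta_t\lambda_{\min}(\Sigma)/K$. Combining this with the spectral bound $\|(I-\Sigma_{t,k}/2)^{M_t+1}\|_{\mathrm{op}}\le\exp(-\lambda_{\min}(\Sigma_{t,k})M_t/2)$ and substituting the chosen $M_t$ collapses the exponent to $2\ln t$, giving $\|B_{t,k}\Sigma_{t,k}\|_{\mathrm{op}}\le 1/t^{2}$.

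Finally I would assemble the per-round bias bound: for any $A\in\mac A$ and any $x\in\mac X$ with $\|x\|_2\le 1$,
\begin{equation*}
    \E\Bigl[\sum_{k=1}^{K}\langle x,\hat\theta_{t,k}-\tilde\theta_{t,k}\rangle (A)_k\,\Big|\,\mac F_{t-1}\Bigr]
    =\sum_{k=1}^{K}\langle x,(I-\Sigma_{t,k}/2)^{M_t+1}\theta_{t,k}\rangle(A)_k
    \le\sum_{k=1}^{K}\frac{(A)_k}{t^{2}}\le\frac{m}{t^{2}},
\end{equation*}
where the first inequality uses Cauchy--Schwarz together with $\|x\|_2\le 1$ and $\|\theta_{t,k}\|_2\le 1$, and the last one uses $\sum_k(A)_k\le m$. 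Taking the maximum over $A\in\mac A$ yields the claim.

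The one genuinely delicate step is justifying the conditional independence used to evaluate $\E[C_{n,k}\mid\mac F_{t-1}]$ as a true matrix geometric series: the learning rates $\eta_t,\alpha_t$ and the policy $\pi_t$ are $\mac F_{t-1}$-measurable, while $(X(n),A(n))$ are freshly drawn in the subroutine, so $\Sigma_{t,k}$ is deterministic given $\mac F_{t-1}$ and the product inside $C_{n,k}$ factorises in expectation. Once this is in place, the eigenvalue calibration via $M_t$ is mechanical.
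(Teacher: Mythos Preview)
Your argument is correct and follows exactly the route the paper takes, only you spell out what the paper outsources to \citep[Lemma~3]{zierahn2023nonstochastic}: the matrix-geometric remainder $(I-\Sigma_{t,k}/2)^{M_t+1}\theta_{t,k}$, the exploration lower bound $\lambda_{\min}(\Sigma_{t,k})\ge\alpha_t\eta_t\lambda_{\min}(\Sigma)/K$, and the substitution of $M_t$ to obtain the $1/t^2$ decay. The paper's cited lemma actually yields the slightly sharper constant $\sqrt{m}/t^2$, but the stated bound is $m/t^2$, which your coordinatewise Cauchy--Schwarz plus $\sum_k(A)_k\le m$ delivers directly.
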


We proceed to bound the regret for the auxiliary game, whose proof strategy follows by an FTRL analysis with a carefully-chosen learning-rate schedule, while taking context into account.
\begin{lemma}[Regret decomposition for the auxiliary game] \label{lemma:reg-game-orig}
The regret of Algorithm~\ref{alg:entropy-ftrl-contextual} evaluated in the auxiliary game satisfies
\begin{equation*}
    \E \big[\widetilde{\mac R}_T(X_0)\big] \le L \sqrt{\left( K d \ln T +\frac{\sqrt{m}K (\ln T)^2}{\lambda_{\min }(\Sigma)}\right)\sum_{t=1}^T \E[ H(\bar A_t(X_0))}] + \frac{8K m\ln(K/m) \ln T}{\lambda_{\min}(\Sigma)} ,
\end{equation*}
where $L>0$ is a universal constant.
\end{lemma}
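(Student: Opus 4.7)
The plan is to apply the classical Follow-the-Regularized-Leader analysis to the time-varying entropy regularizer $\psi_t(A) = -H(A)/\eta_t$ and then close the loop with the data-dependent learning rate $\eta_t = 1/\beta_t$. Since the ghost context $X_0$ is independent of the history used to form the estimates $\tilde\theta_{t,k}$, the standard FTRL inequality decomposes the auxiliary regret as
\[
\widetilde{\mac R}_T(X_0) \le \underbrace{\sum_{t=1}^T \bigl(\psi_t(u^\star(X_0)) - \psi_{t-1}(\bar A_t(X_0))\bigr)}_{\text{penalty}} + \underbrace{\sum_{t=1}^T D_{\psi_t}\bigl(\bar A_t(X_0), \bar A_{t+1}(X_0)\bigr)}_{\text{stability}} + \underbrace{\sum_{t=1}^T 2\alpha_t\eta_t\, m}_{\text{mixing}},
\]
where the mixing term absorbs the extra regret incurred by the $\alpha_t\eta_t$-exploration step using $\|\ell_t\|_\infty \le 1$ and $\|A\|_1 \le m$. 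Since $0 \le H(A) \le m\ln(K/m)$ on $\conv(\mac A)$, the penalty telescopes to at most $\beta_T\, m\ln(K/m)$, and the mixing term is $\mac O(Km\ln T / \lambda_{\min}(\Sigma))$ because $\alpha_t\eta_t \le 4K\ln(t)/(\lambda_{\min}(\Sigma)\beta_t)$ and $\beta_t \ge c_2\ln T = 8K\ln T/\lambda_{\min}(\Sigma)$.

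For the stability term I would use the Shannon entropy local-norm estimate $D_{\psi_t}(a,a') \le \eta_t \sum_{k=1}^K (\bar A_t(X_0))_k \langle X_0, \tilde\theta_{t,k}\rangle^2$. Conditioning on $\mac F_{t-1}$ together with the ghost $X_0$, and using that $\tilde\theta_{t,k} = \widehat\Sigma_{t,k}^+ X_t \ell_t(X_t,k)(A_t)_k$ is the MGR approximation of the unbiased estimator, the bias control of Lemma~\ref{lemma:bias-control} combined with the second-moment identity for the unbiased estimator gives $\E[\langle X_0, \tilde\theta_{t,k}\rangle^2 \mid \mac F_{t-1}, X_0] \lesssim X_0^\top \Sigma_{t,k}^{-1} X_0 + \mac O(t^{-2})$. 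Averaging over $X_0 \sim \mac D$ replaces $X_0 X_0^\top$ by $\Sigma$, so the leading term contracts to $\Tr(\Sigma \Sigma_{t,k}^{-1}) \le d$ per arm; combining the sum over arms weighted by $(\bar A_t(X_0))_k$ (with $\|\bar A_t(X_0)\|_1 \le m$) with Cauchy--Schwarz and the floor $\lambda_{\min}(\Sigma_{t,k}) \gtrsim \alpha_t/K$ produces a per-round stability bound of order $\eta_t\bigl(Kd\ln T + \sqrt{m}\,K(\ln T)^2/\lambda_{\min}(\Sigma)\bigr)$.

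To finish I would invoke the data-dependent rate $\beta_{t+1}' = \beta_t' + c_1\bigl(1 + (m\ln(K/m))^{-1}\sum_{s\le t} H(\bar A_s(X_s))\bigr)^{-1/2}$. A telescoping argument standard for such AdaFTRL-type schedules yields $\beta_T \lesssim c_1\sqrt{1 + (m\ln(K/m))^{-1}\sum_t H(\bar A_t(X_t))}$, so that combining the penalty bound $\beta_T\, m\ln(K/m)$ with the summed stability produces $\sqrt{\bigl(Kd\ln T + \sqrt{m}K(\ln T)^2/\lambda_{\min}(\Sigma)\bigr)\sum_t \E[H(\bar A_t(X_0))]}$ after substituting $c_1$, where the swap $\E[H(\bar A_t(X_t))] = \E[H(\bar A_t(X_0))]$ follows because $X_0$ and $X_t$ share the law $\mac D$ and are both independent of $\mac F_{t-1}$; the mixing contribution supplies the additive $8Km\ln(K/m)\ln T/\lambda_{\min}(\Sigma)$. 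The main obstacle I anticipate is executing the stability calculation cleanly: the MGR estimator's bias and variance both depend on $X_t$ and on the internal resampling randomness, while the entropy local norm is evaluated at $\bar A_t(X_0)$ through the ghost $X_0$, so maintaining the correct independence via the tower property while simultaneously extracting the clean $\Tr(\Sigma\Sigma_{t,k}^{-1}) \le d$ leading term and the $\sqrt{m}$-inflated $(\ln T)^2/\lambda_{\min}(\Sigma)$ residual from the MGR bias is the most delicate step.
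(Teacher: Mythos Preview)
The central gap is in handling the penalty. Your claim $\beta_T \lesssim c_1\sqrt{1+(m\ln(K/m))^{-1}\sum_t H(\bar A_t(X_t))}$ is false for the incremental schedule $\beta_{t+1}' = \beta_t' + c_1\bigl(1+(m\ln(K/m))^{-1}\sum_{s\le t}H(\bar A_s(X_s))\bigr)^{-1/2}$ used here: when the entropies $H(\bar A_s(X_s))$ are small---which happens whenever the FTRL iterates concentrate near a vertex---each increment is close to $c_1$, so $\beta_T'$ grows like $c_1 T$ while your right-hand side stays $\mac O(c_1)$. The AdaGrad-style telescoping $\sum_t g_t/\sqrt{\sum_{s\le t}g_s}\le 2\sqrt{\sum_t g_t}$ requires the \emph{numerator} to match the increment $g_t$; once you coarsen the penalty to $\beta_T\,m\ln(K/m)=\sum_t(\beta_{t+1}-\beta_t)\,m\ln(K/m)$ the numerator is the constant $m\ln(K/m)$ and the bound cannot close. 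The paper never upper-bounds $\beta_T$. It keeps the penalty as $\sum_t(\beta_{t+1}-\beta_t)H(\bar A_{t+1}(X_0))$, notes that $\beta_{t+1}'-\beta_t'$ already carries the $1/\sqrt{S_t}$ factor, and then (Lemma~\ref{lemma:entropy-bound}) uses the tower property to swap $X_0$ for $X_{t+1}$ so that the numerator becomes $H(\bar A_{t+1}(X_{t+1}))$, which \emph{is} the increment of $S_t$; only then does the telescoping go through. For the stability and exploration terms the relevant object is the dual quantity $\sum_t\eta_t=\sum_t 1/\beta_t$, for which the paper establishes the \emph{lower} bound $\beta_t'\ge c_1 t/\sqrt{S_t}$ and hence $\sum_t\eta_t\lesssim (\ln T/c_1)\sqrt{\sum_t H}$.

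A secondary issue: the exploration cost in the auxiliary game is incurred against the estimated losses $\langle X_0,\tilde\theta_{t,k}\rangle$, which are unbounded, not against $\ell_t$; so $\|\ell_t\|_\infty\le 1$ does not give $2\alpha_t\eta_t\,m$. The paper handles this via Lemma~\ref{lemma:extra-explore-reg} (unbiasedness of $\hat\theta_{t,k}$ plus the bias control of Lemma~\ref{lemma:bias-control}) to obtain $(\sqrt m+1)\sum_t\gamma_t$, which then enters the square root through the same $\sum_t\eta_t$ bound and produces the $\sqrt m\,K(\ln T)^2/\lambda_{\min}(\Sigma)$ term. The additive $8Km\ln(K/m)\ln T/\lambda_{\min}(\Sigma)$ comes instead from the first-round penalty $\beta_1\,m\ln(K/m)$ with $\beta_1\ge c_2\ln T$, not from exploration.
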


Lemma~\ref{lemma:reg-game-orig} serves as the combinatorial bandits analogue of Lemma 3 in~\citep{kuroki2024best}, which was established for linear contextual bandits. However, while~\citep{kuroki2024best} defines entropy over the space of action distributions, our setting defines entropy over the mean-action polytope. As a result, directly generalizing their entropy bound to settings with multiple arm pulls is insufficient for establishing an optimal regret bound in the stochastic regime for contextual combinatorial bandits. To overcome these limitations, we derive a refined entropy bound by exploiting a careful partitioning of the base arm set $[K]$ and lifting the mean-action space (support size~$K$) to the space of action distributions (support size exponential in $m$). 
This refined bound enables the application of the self-bounding technique from~\citep{zimmert2021tsallis} in the stochastic setting.

\begin{lemma}[Refined entropy bound in the stochastic regime]\label{lemma:stochastic-entropy}
Take a ghost sample $X_0\sim\mac D$ and any action sequence $A_1,\ldots,A_T$ generated under the policy sequence $\pi_1,\ldots,\pi_T$ using Algorithm~\ref{alg:entropy-ftrl-contextual}.
Suppose that $\sum_{t=1}^T \sum_{k:(u^\star(X_0))_k=0} (A_t)_k \ge e$, where $e$ denotes Euler's number.
The mean-action sequence $\bar A_1,\ldots, \bar A_T$ generated by Algorithm~\ref{alg:entropy-ftrl-contextual} then satisfies
    \begin{equation*}
        \E\left[\sum_{t=1}^T  H(\bar A_t(X_0))\right] \le  m\ln((K-m)T) \E\left[\sum_{t=1}^T \sum_{A \in \mac A \backslash \{u^\star(X_t)\} } \pi_t(A|X_t) \right].
    \end{equation*}
\end{lemma}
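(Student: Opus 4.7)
My approach is to derive a sharp pointwise bound on $H(\bar A_t(X_0))$ that isolates the mass placed on arms outside the ghost-optimal support $\supp(u^\star(X_0))$, aggregate over $t$ via a Jensen argument, and then lift the resulting arm-level quantity into an action-level suboptimality probability under the actually played policy by exploiting that $X_0$ and $X_t$ are i.i.d.\ draws from $\mac D$. For the per-round step, I would condition on $X_0$, partition $[K]$ into $S:=\supp(u^\star(X_0))$ (size $m$) and $S^c$ (size $K-m$), and split the entropy across the two blocks. On $S$, the convexity inequality $-x\ln x\le 1-x$ combined with the identity $\sum_{k=1}^K(\bar A_t(X_0))_k=m$---which holds in the regime $K\ge em$ because the Shannon regularizer's gradient diverges on the axes, forcing the cardinality constraint to be active in the FTRL KKT system---bounds this block by $Z_t:=\sum_{k\in S^c}(\bar A_t(X_0))_k$. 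On $S^c$, I would rescale the coordinates $(\bar A_t(X_0))_k/Z_t$ into a probability distribution on $K-m$ atoms and apply Shannon's maximum-entropy inequality, obtaining $Z_t\ln((K-m)/Z_t)$. Together, this gives
\[
H(\bar A_t(X_0)) \;\le\; Z_t\ln\bigl(e(K-m)/Z_t\bigr).
\]

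\textbf{Aggregation.} Summing the per-round inequality over $t$ and applying Jensen to the concave map $z\mapsto z\ln(e(K-m)/z)$ collapses it into
\[
\sum_{t=1}^T H(\bar A_t(X_0)) \;\le\; Z\ln\bigl(e(K-m)T/Z\bigr),\qquad Z:=\sum_{t=1}^T Z_t.
\]
I would then invoke the hypothesis $\sum_t\sum_{k\in S^c}(A_t)_k\ge e$ to conclude $Z\ge e$---after swapping the $\pi_t(\cdot|X_t)$-sampled indicator for its expectation and using the distributional identity between $X_0$ and $X_t$ discussed below---so that the constant $e$ inside the logarithm can be absorbed, leaving $\sum_t H(\bar A_t(X_0))\le Z\ln((K-m)T)$.

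\textbf{Lifting and context swap.} Writing $Z_t=\E_{a\sim p_t(\cdot|X_0)}\bigl[\sum_{k\in S^c}(a)_k\bigr]$ and using that, for any $a\in\mac A$ with $\sum_k(a)_k=m$, the count $\sum_{k\in S^c}(a)_k$ vanishes iff $a=u^\star(X_0)$ and is otherwise at most $m$, I obtain $Z_t\le m\sum_{a\ne u^\star(X_0)}p_t(a|X_0)$. The mixing step $\pi_t=(1-\alpha_t\eta_t)p_t+\alpha_t\eta_t\,\mathrm{Unif}(E)$ with $\alpha_t\eta_t\le 1/2$ ensures the suboptimality mass under $p_t(\cdot|X_0)$ is within a constant factor of that under $\pi_t(\cdot|X_0)$. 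Finally, since $X_0$ and $X_t$ are i.i.d.\ $\mac D$-samples and both independent of $\mac F_{t-1}$, and the map $x\mapsto\sum_{a\ne u^\star(x)}\pi_t(a|x)$ has $\mac F_{t-1}$-measurable random parameters, conditioning on the history gives the context-swap identity
\[
\E\bigl[\textstyle\sum_{a\ne u^\star(X_0)}\pi_t(a|X_0)\,\big|\,\mac F_{t-1}\bigr] \;=\; \E\bigl[\textstyle\sum_{a\ne u^\star(X_t)}\pi_t(a|X_t)\,\big|\,\mac F_{t-1}\bigr].
\]
Chaining the three bounds and taking expectations over all the randomness then yields the claim.

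\textbf{Main obstacle.} The principal difficulty is the bridge I flagged in the aggregation step: the hypothesis is stated in terms of the realized played actions $A_t\sim\pi_t(\cdot|X_t)$, whereas $Z$ aggregates expected mass under the decoupled ghost-context sampler $p_t(\cdot|X_0)$. Rigorously converting ``$\sum_t\sum_{k\in S^c}(A_t)_k\ge e$'' into ``$Z\ge e$'' will require a careful conditional-expectation or martingale argument exploiting the equality-in-distribution of $X_0$ and $X_t$ given $\mac F_{t-1}$; a cleaner alternative is a case split in which $Z<e$ is treated separately, since then $Z\ln(e(K-m)T/Z)\le e\ln((K-m)T)$ is already $\mac O(\ln T)$ and is absorbed into the overall regret. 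A secondary subtlety is justifying $\sum_k(\bar A_t(X_0))_k=m$ uniformly when $K$ is close to $m$, which I would dispatch via an explicit KKT analysis of the Shannon-regularized cardinality-constrained program.
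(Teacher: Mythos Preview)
Your proposal is correct and follows essentially the same route as the paper: the same partition $S=\supp(u^\star(X_0))$, the same $-x\ln x\le 1-x$ bound on $S$ combined with the rescaled maximum-entropy bound on $S^c$, the same Jensen aggregation over $t$, the same lifting $Z_t\le m\sum_{a\ne u^\star(X_0)}\pi_t(a)$, and the same context swap via $\E[\pi_t(u^\star(X_0))\mid\mac F_{t-1}]=\E[\pi_t(u^\star(X_t))\mid\mac F_{t-1}]$. Your ``main obstacle'' dissolves: the paper's proof actually uses the hypothesis $\sum_t\sum_{k\in S^c}(\bar A_t)_k\ge e$ directly (the $A_t$ in the lemma statement is evidently intended as $\bar A_t$), and your proposed case split for $Z<e$ is exactly how the complementary regime is handled in the theorem proof; you are also more careful than the paper in distinguishing $p_t$ from $\pi_t$ and in flagging the need for $\sum_k(\bar A_t)_k=m$.
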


The proof for Theorem~\ref{thm:bobw} then follows by combining the insights of all lemmas presented in this section. The complete proof and auxiliary results are deferred to Appendix~\ref{appendix-bobw}.

\section{Efficient Numerical Scheme for Combinatorial Semi-Bandits}
\label{sec:comb-bandits}

In the adversarial combinatorial semi-bandit setting, as presented in Section~\ref{sec:bobw-context}, the learner must perform FTRL/OSMD updates over the convex hull of the combinatorial action space---an operation that naively entails a $K$-dimensional Bregman projection. By exploiting the KKT conditions of this convex subproblem, we reduce each update to a single one-dimensional root-finding call per round, yielding a more computationally efficient scheme. We first formalize the interaction protocol and then present the OSMD algorithm (Algorithm \ref{alg:OSMD}) under the context-free regime.
For simplicity, we study in this section the $m$-set setting, where exactly $m$ arms are pulled in each round. Note that in terms of computing Bregman projection, the context-free $m$-set setting is without loss of generality, since Step~\ref{line:FTRL} of Algorithm~\ref{alg:entropy-ftrl-contextual} is computed under a fixed context and we may iterate through $m$ number of $j$-subsets, where $j=1,\ldots,m$. An efficient $\widetilde{\mac O}(K)$ action-sampling procedure in the $m$-set setting is proposed in~\citep[Appendix~B.2]{zimmert2019beating}.

\textbf{Interaction Protocol.} Fix the number of base arms $K$ and $m$-set size $m\leq K$. Let $\mac A=\{A\in \{0,1\}^K : \sum_{k=1}^K (A)_k {=} m\}$. Then, in each round $t=1,\dots,T$, the interaction protocol proceeds as follows.
The environment chooses an adversarial loss vector $\ell_t=(\ell_{t,1},\dots,\ell_{t,K})\in [-1,1]^K$.
The learner then plays the vector $A_t\in \mathcal{A}$, suffers the total loss $\langle A_t,\ell_t\rangle =\sum_{k=1}^K\ell_{t,k}(A_t)_k$, and observes the coordinate losses $\ell_{t,k}$ for every $k$ such that $(A_t)_k=1$.
\begin{algorithm}[H]
\caption{Online stochastic mirror descent for semi-bandits {\citep[Algorithm~18]{lattimore2020bandit}}}
\label{alg:OSMD}
\begin{algorithmic}[1]
\Require $m$-set size $m\le K$, learning rate $\eta$, function $F$
\State Initialize $\bar A_1 = \arg\min_{a \in \conv(\mathcal{A})} F(a)$
\For{$t = 1, \dots, T$}
    \State Choose distribution $p_t$ over $\mathcal{A}$ such that $\E_{a \sim p_t}[a]=\bar A_t$
    \State Sample action $A_t \sim p_t$, observe partial losses $\ell_{t,k}$ for all $k$ with $(A_t)_k=1$
    \State \label{step:IS} Compute the importance-weighted loss estimator $\hat{\ell}_{t,k} = (A_t)_k \ell_{t,k}/(\bar A_t)_k$ for all $k \in [K]$
    \State Update the decision vector by solving \label{step-osmd-opt} $\bar A_{t+1} = \arg\min_{a \in \conv(\mathcal{A})}\{ \eta \langle a, \hat{\ell}_t \rangle + D_F(a, \bar A_t)\}$
\EndFor
\end{algorithmic}
\end{algorithm}

Algorithm~\ref{alg:OSMD} proceeds in each round by: (i) sampling an action \(A_t\sim p_t\), (ii) computing the importance-weighted loss estimator $\hat \ell_{t,k}$, and (iii) updating the mean action via the Bregman projection. Here, $D_F(a,\bar A_t)$ is the Bregman divergence from Definition~\ref{def:bregman}.
\begin{definition}[Bregman divergence]
\label{def:bregman}
Given a convex differentiable function \( F: \conv(\mathcal{A}) \to \mathbb{R} \), the Bregman divergence \( D_F: \conv(\mac A) \times \conv(\mac A) \to \mathbb{R}_{+} \) associated with \( F \) is defined as
\[
D_F(a, \bar A) = F(a) - F(\bar A) - \langle \nabla F(\bar A),\, a - \bar A \rangle \quad \forall  a, \bar A \in \conv(\mac A).
\]
\end{definition}
Assuming that the convex potential function
$F: \conv(\mathcal{A}) \to\mathbb{R}$ is separable, that is, $F$ can be expressed as $F(a) = \sum_{k=1}^K f(a_k)$ where $f: \mathbb{R}\to\mathbb{R}$ is convex, and $a_k$ is the $k$-th coordinate of $a\in\mathcal{A}$. Hence, the update step amounts to solving the convex subproblem
\begin{equation}
\label{eq:subproblem_1}
  \min_{a\in\conv(\mathcal A)}
    \;\eta\langle a,\hat \ell_t\rangle + D_F(a,\bar A_t),
\end{equation}
whose unique minimizer we denote by $a^\star$. 

We derive the KKT conditions as the following.
Let us assign the following Lagrangian multipliers $\lambda\in \mathbb{R}$ to the constraint $\sum_{k=1}^K a_k= m$, then $\mu\in \mathbb{R}^K$ to the set of constraints $a_k \geq 0, \quad \forall k\in[K]$ and $\nu\in \mathbb{R}^K$ to the set of inequality constraints $a_k\leq 1, \quad \forall k \in [K]$. Then the Lagrangian has the form $\mathcal{L}(a, \lambda, \mu, \nu) =\sum_{k=1}^K [\eta \hat{\ell}_{t,k} a_k+f(a_k) - f((\bar A_t)_k) - f'((\bar A_t)_k)(a_k - (\bar A_t)_k)]+\lambda(\sum_{k=1}^K 
a_k-m)-\sum_{k=1}^K \mu_k a_k+\sum_{k=1}^K \nu_k (a_k-1)$.
The resulting KKT conditions for \eqref{eq:subproblem_1} are as follows.
\begin{align*}
  0 \;\le\; a_k \;\le\; 1,\;\textstyle\sum_{k=1}^K a_k = m,\;\forall k\in[K]
    &&(\text{Primal\ feasibility})&\\
  \mu_k \;\ge\; 0,\;\nu_k \;\ge\; 0,\;\forall k\in[K]
    &&(\text{Dual\ feasibility})&\\
  \mu_k\,a_k = 0,\;\nu_k\,(a_k - 1) = 0,\;\forall k\in[K]
    &&(\text{Complementary\ slackness})&\\
  \eta\,\hat \ell_{t,k} + f'(a_k) - f'((\bar A_t)_k) + \lambda - \mu_k + \nu_k = 0,\;\forall k\in[K]
    &&(\text{Stationarity})&
\end{align*}

We continue to reformulate the stationarity condition by distinguishing between cases for an arbitrary arm index $k\in[K]$. First, note that when $a_k>0$, complementary slackness implies that $\mu_k=0$. Substituting into the stationarity condition yields $f'(a_k)+\lambda+c_k=0$, where we use the shorthand notation $c_k = \eta\hat{\ell}_{t,k} - f'((\bar A_t)_k)$. Inverting $f'$ then gives $(f')^{-1}(-\lambda-c_k)=a_k$.
Next, consider the case when $a_k=0$, the stationarity condition together with dual feasibility $\mu_k\ge0 $ gives $f'(0)+ \lambda +c_k=\mu_k \geq0$, which implies $(f')^{-1}(-\lambda-c_k) \leq 0$. Since the range of $(f')^{-1}$ is $[0,1]$, we conclude that $(f')^{-1}(- \lambda - c_k) = 0=a_k$.

Thus, we have shown that solving the $K$-dimensional convex optimization problem~\eqref{eq:subproblem_1} reduces to a one-dimensional root-finding problem
$\sum_{k=1}^K(f')^{-1}(- \lambda - c_k) = m.$ 

\begin{algorithm}[h!]
\caption{Bisection algorithm for solving~\eqref{eq:subproblem_1}}
\label{alg:bisection}
\begin{algorithmic}[1]
\Require 
Tolerance \(\varepsilon\), loss parameters $c$
\State Initialize $ \underline\lambda = \min_{k\in[K]} \{ -c_k - f'(m/K)\}$
and $\bar\lambda = \max_{k\in [K]} \{ -c_k - f'(m/K) \}$ \label{step:lambda-initial}
\For{\(l = 1\) to \(\log_2(\varepsilon^{-1}2L\sqrt{K}(\bar\lambda-\underline\lambda))\)}
    \State $\lambda = (\underline\lambda + \bar\lambda)/2$
    \If{\(m-\sum_{k=1}^K(f')^{-1}(- \lambda - c_k)>0\)}
        \State $\bar{\lambda}\leftarrow \lambda$
    \Else 
        \State $\underline\lambda \leftarrow \lambda$
    \EndIf
\EndFor
\State\Return $a_k = m/K + (f')^{-1}(- \underline\lambda - c_k) - (1/K)\sum_{j=1}^K(f')^{-1}(- \underline\lambda - c_j)$ for all $k\in[K]$

\end{algorithmic}
\end{algorithm}

Similar to the result in~\cite[Theorem~6.1]{li2024optimism}, which analyzes a perturbation-based algorithm specialized for multi-armed bandits, our bisection algorithm enjoys the following convergence guarantee.
\begin{theorem}[Convergence of Algorithm~\ref{alg:bisection}]
\label{thm:bisection-convergence}
Suppose that $f$ is strictly convex and differentiable, and that $(f')^{-1}$ is $L$-Lipschitz continuous.
Then, for any tolerance $\varepsilon>0,$ Algorithm~\ref{alg:bisection} outputs $a\in\conv(\mac A)$ with $\sum_{k=1}^K a_k = m$ and $\|a-a^\star\|_2 \le \varepsilon.$
\end{theorem}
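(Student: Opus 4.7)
The plan is to decompose the analysis into four pieces: (i) reduce the claim to a one–dimensional monotone equation, (ii) verify the initial bracket, (iii) propagate the bisection accuracy to the primal variables through Lipschitz continuity, and (iv) account for the final averaging correction. Define
\[
g(\lambda) \;:=\; m \;-\; \sum_{k=1}^{K}(f')^{-1}(-\lambda-c_k).
\]
Strict convexity of $f$ makes $f'$ strictly increasing, so $(f')^{-1}$ is strictly increasing on its domain, and hence $g$ is continuous and strictly increasing in $\lambda$. The KKT derivation that precedes the algorithm identifies the primal optimum as $a^\star_k=(f')^{-1}(-\lambda^\star-c_k)$, where $\lambda^\star$ is the unique root of $g$. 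Next, I would plug the initial $\underline\lambda^{(0)}=\min_k\{-c_k-f'(m/K)\}$ into $g$: the choice forces $-\underline\lambda^{(0)}-c_k\ge f'(m/K)$ for every $k$, so $(f')^{-1}(-\underline\lambda^{(0)}-c_k)\ge m/K$ and $g(\underline\lambda^{(0)})\le 0$. The mirror computation at $\bar\lambda^{(0)}$ gives $g(\bar\lambda^{(0)})\ge 0$, and monotonicity yields $\lambda^\star\in[\underline\lambda^{(0)},\bar\lambda^{(0)}]$.

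Now I would run the standard bisection invariant argument. Since the branching rule in Algorithm~\ref{alg:bisection} uses the sign of $g$ (rewritten as $m-\sum_k(f')^{-1}(-\lambda-c_k)$) and $g$ is monotone, the invariant $\lambda^\star\in[\underline\lambda,\bar\lambda]$ is preserved at every step, while the bracket width is halved. After the stipulated $l=\lceil\log_2(\varepsilon^{-1}2L\sqrt{K}(\bar\lambda^{(0)}-\underline\lambda^{(0)}))\rceil$ iterations, $|\underline\lambda-\lambda^\star|\le\varepsilon/(2L\sqrt{K})$. Using the assumed $L$-Lipschitz continuity of $(f')^{-1}$ coordinate-wise,
\[
|\tilde a_k-a^\star_k| \;\le\; L\cdot|\underline\lambda-\lambda^\star| \;\le\; \frac{\varepsilon}{2\sqrt{K}}, \qquad \tilde a_k:=(f')^{-1}(-\underline\lambda-c_k),
\]
so $\|\tilde a-a^\star\|_2\le\varepsilon/2$.

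The returned vector is $a=\tilde a+(\delta/K)\mathbf 1$ with $\delta:=m-\sum_k\tilde a_k$, which enforces $\sum_k a_k=m$ automatically. Since $\sum_k a^\star_k=m$, one has $\delta=\sum_k(a^\star_k-\tilde a_k)$, whence Cauchy--Schwarz gives $|\delta|\le\sqrt{K}\|\tilde a-a^\star\|_2\le \varepsilon\sqrt{K}/2$. The vector $a-\tilde a$ is the constant vector $(\delta/K)\mathbf 1$, whose $\ell_2$ norm is $|\delta|/\sqrt{K}\le\varepsilon/2$, and the triangle inequality closes the bound: $\|a-a^\star\|_2\le\|a-\tilde a\|_2+\|\tilde a-a^\star\|_2\le\varepsilon$.

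The main obstacle is not the bisection counting, which is textbook, but the feasibility claim $a\in\conv(\mathcal A)$: the range of $(f')^{-1}$ is $[0,1]$ so $\tilde a_k\in[0,1]$, but the averaging correction $\delta/K$ can in principle push a coordinate slightly outside $[0,1]$ when $a^\star$ hits the box boundary. I would handle this by noting $|\delta|/K\le\varepsilon/\sqrt{K}$ vanishes with $\varepsilon$, so for interior optima box feasibility is automatic; for boundary optima one can compose the output with an Euclidean projection onto $\conv(\mathcal A)\cap\{\sum_k a_k=m\}$, whose non-expansiveness preserves the $\varepsilon$-accuracy guarantee while restoring feasibility.
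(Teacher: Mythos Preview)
Your proof is correct and follows essentially the same approach as the paper's: verify the initial bracket via the choice of $\underline\lambda,\bar\lambda$, halve the interval until $|\underline\lambda-\lambda^\star|\le\varepsilon/(2L\sqrt{K})$, propagate to the primal coordinates via the $L$-Lipschitz bound, and control the averaging correction. Your handling of the correction step (Cauchy--Schwarz then triangle inequality) is cleaner than the paper's, and you go further by flagging the box-feasibility issue $a\in\conv(\mathcal A)$---which the theorem asserts but the paper's proof does not actually verify---and proposing the natural fix of composing with the non-expansive projection.
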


Even when $(f')^{-1}$ is not available in closed form, we may resort to some approximation oracle. Below we discuss how the convergence proof could be modified to accommodate the approximation error.

\begin{corollary}[Convergence with an approximate inverse oracle]
\label{cor:convergence-approx-oracle}
Assume we have an oracle that, on input $z\in \mathbb{R}$, returns $\tilde{y}=(f')^{-1}(z)$ satisfying $\vert\tilde{y} - (f')^{-1}(z)\vert\leq \tau$ for some known tolerance $\tau>0$.
Then, under the same assumptions as in Theorem~\ref{thm:bisection-convergence}, and provided that $\tau\leq\varepsilon/(2\sqrt{K})$, the approximate algorithm yields a vector $\tilde{a}\in \conv(\mac A)$ satisfying $\|\tilde{a}-a^\star\|_2\leq \varepsilon$ in $\mathcal{O}(\ln(L\sqrt{K}(\bar \lambda-\underline\lambda)/\varepsilon))$
bisection iterations.
\end{corollary}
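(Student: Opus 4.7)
My plan is to mirror the proof of Theorem~\ref{thm:bisection-convergence} while tracking the additional error that the inexact inverse oracle contributes at every bisection query. Writing $y_k(\lambda)=(f')^{-1}(-\lambda-c_k)$, $h(\lambda)=\sum_{k=1}^K y_k(\lambda)$, and the projected output $a(\lambda)_k = m/K+y_k(\lambda)-h(\lambda)/K$, the exact algorithm targets the unique root $\lambda^\star$ of $h(\lambda)=m$ and returns $a^\star=a(\lambda^\star)$. With the oracle, each coordinate query at $\lambda$ is perturbed by at most $\tau$, so $|\tilde h(\lambda)-h(\lambda)|\le K\tau$, and the vector $\tilde a$ produced by the approximate algorithm replaces every exact $y_k(\underline\lambda)$ by a $\tau$-close surrogate $\tilde y_k$, where $\underline\lambda$ is the final lower bracket.

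I would split the total error via the triangle inequality
\begin{equation*}
\|\tilde a - a^\star\|_2 \;\le\; \|\tilde a - a(\underline\lambda)\|_2 \;+\; \|a(\underline\lambda) - a^\star\|_2
\end{equation*}
and handle the two pieces separately. For the first piece, expanding $\tilde a_k-a_k(\underline\lambda)=(\tilde y_k-y_k(\underline\lambda))-(1/K)\sum_j(\tilde y_j-y_j(\underline\lambda))$ and applying the per-call tolerance term by term yields $\|\tilde a - a(\underline\lambda)\|_2\le 2\tau\sqrt K$, which fits into the $\varepsilon$ budget under the assumption $\tau\le\varepsilon/(2\sqrt K)$. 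For the second piece, I would reuse the structural identity behind Theorem~\ref{thm:bisection-convergence}: monotonicity of $y_k(\cdot)$ forces the residuals $e_k:=y_k(\underline\lambda)-y_k(\lambda^\star)$ to share a common sign, so the mean-centering built into the projection $a(\lambda)$ makes $\|a(\underline\lambda)-a^\star\|_2$ no larger than $|h(\underline\lambda)-m|$.

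Controlling $|h(\underline\lambda)-m|$ is where the oracle error re-enters. The bracket invariants relax to $h(\underline\lambda)\ge m-K\tau$ and $h(\bar\lambda)<m+K\tau$, which, together with the $KL$-Lipschitz continuity of $h$ and the final bracket width $\delta=(\bar\lambda_0-\underline\lambda_0)/2^n$, yield $|h(\underline\lambda)-m|\le K\tau + KL\delta$. Choosing the iteration count $n=\mathcal{O}(\log(L\sqrt K(\bar\lambda_0-\underline\lambda_0)/\varepsilon))$ then drives the $KL\delta$ contribution below any prescribed fraction of $\varepsilon$.

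The main obstacle is that approximate sign queries can excise $\lambda^\star$ from the bracket, breaking the classical exact-bisection invariant $\lambda^\star\in[\underline\lambda,\bar\lambda]$. My workaround is to weaken that invariant to the relaxed pair above; this is enough because the mean-centering projection executed in the return step of Algorithm~\ref{alg:bisection} depends globally on $h$ rather than requiring pointwise closeness of $\underline\lambda$ to $\lambda^\star$. Combining the two error pieces then yields the target bound $\|\tilde a - a^\star\|_2\le\varepsilon$ in the stated number of iterations.
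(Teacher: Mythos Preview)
Your route diverges from the paper's in exactly the place you flag. The paper does \emph{not} introduce relaxed invariants; it simply retains the classical bracket property, writing that after $l$ steps the midpoint satisfies $|\lambda_l-\lambda^\star|\le\Delta_0/2^{l+1}$, and then bounds each coordinate of the output error by $L|\lambda_l-\lambda^\star|+\tau$ (plus the mean-shift) before multiplying by $\sqrt K$. That coordinatewise route is precisely why the tolerance $\tau\le\varepsilon/(2\sqrt K)$ suffices in the statement. The proviso the paper invokes for the sign test---$\min\{g(\underline\lambda),-g(\bar\lambda)\}>K\tau$---is stated rather than verified, so your observation that the exact bracket can be lost is a genuine refinement relative to the paper's argument.

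However, your relaxed-invariant replacement has a quantitative gap you do not address. The bound $|h(\underline\lambda)-m|\le K\tau+KL\delta$ is correct, as is $\|a(\underline\lambda)-a^\star\|_2\le|h(\underline\lambda)-m|$; but the $K\tau$ contribution does not shrink with the iteration count, and under $\tau\le\varepsilon/(2\sqrt K)$ it is only bounded by $\sqrt K\,\varepsilon/2$, not by a fixed fraction of $\varepsilon$. Your first piece $2\tau\sqrt K$ already spends the whole budget (it can in fact be sharpened to $\sqrt K\,\tau\le\varepsilon/2$, but that still leaves no room for a term of order $\sqrt K\,\varepsilon$). Thus ``combining the two error pieces then yields $\|\tilde a-a^\star\|_2\le\varepsilon$'' does not follow from your estimates: you obtain $O(\sqrt K)\,\varepsilon$, not $\varepsilon$. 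To close this within your framework you would need either the stronger hypothesis $\tau=O(\varepsilon/K)$, or an additional lower-Lipschitz (strong convexity) assumption on $f$ that converts $|h(\underline\lambda)-m|\le K\tau$ into a bound on $|\underline\lambda-\lambda^\star|$, after which the paper's coordinatewise argument with the $\sqrt K$ factor applies.
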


Note that Algorithm~\ref{alg:OSMD} calls Algorithm~\ref{alg:bisection} with input $c_k=\eta\hat{\ell}_{t,k} - f'((\bar A_t)_k)$ for all $k\in[K]$ in each iteration~$t=1,\ldots,T$ in order to compute the mean-action vector~$\bar A_t$. Thus, the width of the search interval $\bar\lambda - \underline\lambda$ is on the order of $\mathcal O(t)$ with high probability.
This observation implies that the $t$-th call to Algorithm~\ref{alg:bisection} requires $\mathcal O(\ln(\sqrt{K}\eta T))$ iterations with high probability. In addition, each iteration runs in time $\mathcal O(K)$. Hence, if~$\eta=\mathcal O(1/\sqrt{T})$, then the $t$-th call of Algorithm~\ref{alg:bisection} runs in time at most $\mathcal O(K\ln(\sqrt{KT}))=\widetilde{\mac O}(K)$ with high probability. Using the sampling scheme of complexity $\widetilde{\mac O}(K)$ proposed by~\cite{zimmert2019beating}, the efficiency of Algorithm~\ref{alg:bisection} as used by Algorithm~\ref{alg:OSMD} is thus comparable to the sampling procedure employed by FTPL~\citep{neu2016importance}.

We remark on the extension of our results in this section to settings beyond $m$-set such as a partition matroid~\citep{oxley2011matroid}. Although we now have cardinality constraints per partition instead of a single cardinality constraint, each partition is handled by its own scalar root-finding problem, and every arm enters exactly one partition.  Hence, the work across all partitions sums to $K$, which is the same as the uniform-matroid case. Building on the previous observation, the Bregman projection using our method still runs in time $O(K\ln(1/\varepsilon))$; the bisection method over partition $i$ runs in time $O(c_i\ln(1/\varepsilon))$ with $K=\sum_i c_i$.

\subsection{Numerical Experiments}
\label{ssec:numerics}

We now evaluate the per-iteration runtime of Algorithm~\ref{alg:bisection}. All experiments are conducted on a machine with a 2.3~GHz 8-core Intel Core~i9 processor and all optimization problems are modeled in Python. In all experiments, we fix the $m$-set size to $m=5$, vary the number of base arms $K\in\{10,\dots,100\}$, and run each algorithm for $N=25$ iterations on a loss vector $y\in [0,1]^K$ whose entries are drawn uniformly at random. Mean per-iteration runtimes and their 95\% confidence intervals are reported, and two instances of problem~\eqref{eq:subproblem_1}, each employing a different regularizer, are evaluated.
The first instance uses \emph{Tsallis entropy} with parameter $\alpha = 1/2$, as in~\citep{zimmert2019beating,zimmert2021tsallis}, which is known to achieve best-of-both-worlds results. This corresponds to the regularizer $f(x) = -\sqrt{x}$ (labeled as ``Tsallis''). The second instance employs the widely used negative \emph{Shannon entropy}, induced by the regularizer $f(x) = x\ln{x}$ (labeled as ``Negative Shannon entropy'').

\begin{figure}[ht!]
\centering
\begin{subfigure}[t]{0.47\textwidth}
        \centering
	\includegraphics[scale=0.46]{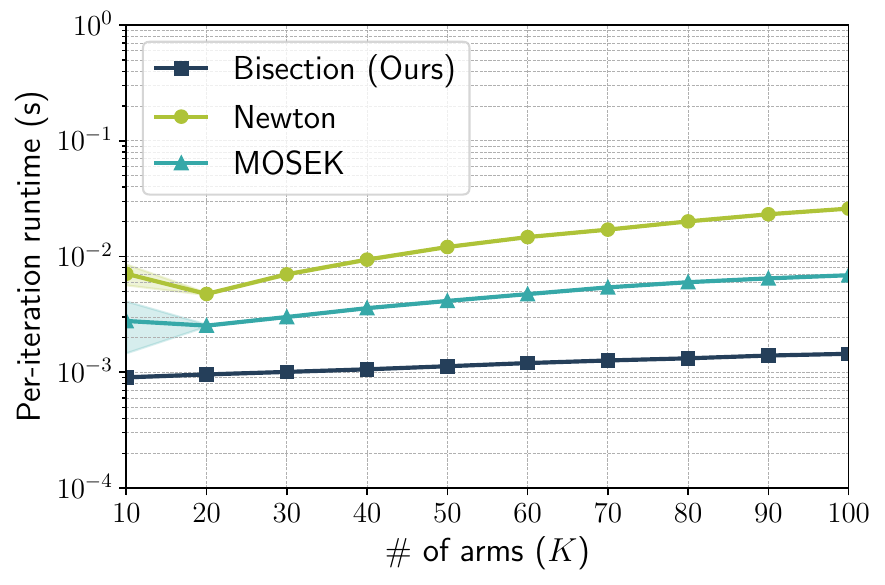}
    {\small\caption{Tsallis.}}
\end{subfigure}
\hspace{0.1cm}
\begin{subfigure}[t]{0.47\textwidth}
        \centering
	\includegraphics[scale=0.46]{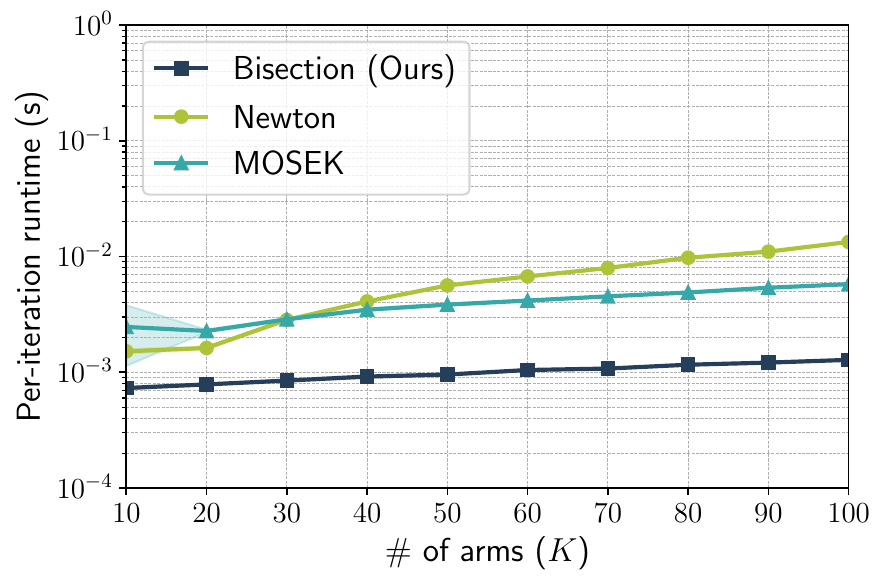}
	\caption{Negative Shannon entropy.}
\end{subfigure}
\caption{Per-iteration runtime for different regularizers.}
\label{fig:runtime}
\end{figure}

We compute the mean-action projection of FTRL via Algorithm~\ref{alg:bisection}, and compare it against two baselines: the heuristic Newton method used in~\citep{zimmert2019beating},
\footnote{The source code for this method is available at \url{https://github.com/diku-dk/CombSemiBandits}.} 
and a direct implementation of the optimization step using MOSEK.\footnote{MOSEK is accessed via the Python Fusion API~\cite{mosek}.}
In all cases, we solve to an error tolerance of $\varepsilon = 10^{-7}$, matching the suboptimality and feasibility tolerances used in MOSEK. Figure~\ref{fig:runtime} visualizes the per-iteration runtimes of Algorithm~\ref{alg:bisection}, the Newton method, and MOSEK, as a function of the number~$K$ of base arms. We observe that Algorithm~\ref{alg:bisection} runs nearly 10 times faster than the Newton baseline for $K = 100$, and consistently outperforms MOSEK by a factor of approximately 5 across all values of~$K$. This highlights the computational efficiency of our bisection-based Algorithm~\ref{alg:bisection}.

\section{Concluding Remarks and Limitations}
\label{sec:conclusion}
We introduce the first algorithm for \textit{contextual combinatorial semi-bandits} that simultaneously guarantees $\widetilde{\mac O}(\sqrt{T})$ regret in the adversarial regime and $\widetilde{\mac O}(\ln T)$ in the corrupted stochastic regime. At its core is a \emph{Follow-the-Regularized-Leader} (FTRL) scheme with a Shannon-entropy regularizer with time-varying learning rate.
Vanilla FTRL, however, requires solving a \(K\)-dimensional convex projection in each round, which limits scalability, whereas \emph{Follow-the-Perturbed-Leader} (FTPL) is favored for its speed. We recover FTPL-style speed-ups by exploiting the KKT conditions to reduce the projection to a one-dimensional root-finding problem. This hybrid design preserves FTRL’s strong theoretical guarantees while matching FTPL’s per-round efficiency.
Despite these advances, our approach has two main limitations. First, the Shannon-entropy regularizer introduces an additional $\widetilde{\mac O}(\ln T)$ term in the adversarial regret. Second, the dependence on the combinatorial action size $m$ scales as $\sqrt{m}$, which is suboptimal relative to known lower bounds~\citep{zierahn2023nonstochastic}. Addressing the aforementioned limitations would be a promising direction for further enhancing both the theory and practice of best-of-both-worlds bandit algorithms.

\section*{Acknowledgements}
This work was supported as part of the NCCR Automation, a National Center of Competence in Research, funded by the Swiss National Science Foundation (grant number 51NF40\_225155).

\bibliography{references}

@book{oxley2011matroid,
  title={\href{https://global.oup.com/academic/product/matroid-theory-9780199603398?cc=ch&lang=en&}{Matroid Theory}},
  author={Oxley, James G},
  edition={2},
  year={2011},
  publisher={Oxford University Press},
  location={Oxford, United Kingdom}
}

@inproceedings{neu2015first,
  title={\href{http://proceedings.mlr.press/v40/Neu15.pdf}{First-order regret bounds for combinatorial semi-bandits}},
  author={Neu, Gergely},
  booktitle={Conference on Learning Theory},
  pages={1360--1375},
  year={2015},
  organization={PMLR}
}

@book{lattimore2020bandit,
  title={\href{https://doi.org/10.1017/9781108571401}{Bandit Algorithms}},
  author={Lattimore, Tor and Szepesv{\'a}ri, Csaba},
  year={2020},
  publisher={Cambridge University Press},
  location={Cambridge, United Kingdom}
}

@inproceedings{tsuchiya2023further,
  title={\href{https://proceedings.mlr.press/v206/tsuchiya23a.html}{Further adaptive best-of-both-worlds algorithm for combinatorial semi-bandits}},
  author={Tsuchiya, Taira and Ito, Shinji and Honda, Junya},
  booktitle={International Conference on Artificial Intelligence and Statistics},
  pages={8117--8144},
  year={2023},
  organization={PMLR}
}

@inproceedings{zimmert2019beating,
  title={\href{https://proceedings.mlr.press/v97/zimmert19a.html}{Beating stochastic and adversarial semi-bandits optimally and simultaneously}},
  author={Zimmert, Julian and Luo, Haipeng and Wei, Chen-Yu},
  booktitle={International Conference on Machine Learning},
  pages={7683--7692},
  year={2019},
  organization={PMLR}
}

@article{zimmert2021tsallis,
  title={\href{https://jmlr.org/papers/v22/19-753.html}{Tsallis-INF: An optimal algorithm for stochastic and adversarial bandits}},
  author={Zimmert, Julian and Seldin, Yevgeny},
  journal={Journal of Machine Learning Research},
  volume={22},
  number={28},
  pages={1--49},
  year={2021}
}

@article{neu2016importance,
  title={\href{https://www.jmlr.org/papers/v17/15-091.html}{Importance weighting without importance weights: An efficient algorithm for combinatorial semi-bandits}},
  author={Neu, Gergely and Bart{\'o}k, G{\'a}bor},
  journal={Journal of Machine Learning Research},
  volume={17},
  number={154},
  pages={1--21},
  year={2016}
}

@inproceedings{koolen2010hedging,
  title={\href{https://www.learningtheory.org/colt2010/conference-website/papers.html}{Hedging structured concepts}},
  author={Koolen, Wouter M and Warmuth, Manfred K and Kivinen, Jyrki and others},
  booktitle={Conference on Learning Theory},
  pages={93--105},
  year={2010},
  organization={Citeseer}
}

@inproceedings{zierahn2023nonstochastic,
  title={\href{https://proceedings.mlr.press/v206/zierahn23a.html}{Nonstochastic contextual combinatorial bandits}},
  author={Zierahn, Lukas and van der Hoeven, Dirk and Cesa-Bianchi, Nicolo and Neu, Gergely},
  booktitle={International Conference on Artificial Intelligence and Statistics},
  pages={8771--8813},
  year={2023},
  organization={PMLR}
}

@article{cesa2012combinatorial,
  title={\href{https://doi.org/10.1016/j.jcss.2012.01.001}{Combinatorial bandits}},
  author={Cesa-Bianchi, Nicolo and Lugosi, G{\'a}bor},
  journal={Journal of Computer and System Sciences},
  volume={78},
  number={5},
  pages={1404--1422},
  year={2012},
  publisher={Elsevier}
}

@inproceedings{kuroki2024best,
  title={\href{https://proceedings.mlr.press/v238/kuroki24a.html}{Best-of-both-worlds algorithms for linear contextual bandits}},
  author={Kuroki, Yuko and Rumi, Alberto and Tsuchiya, Taira and Vitale, Fabio and Cesa-Bianchi, Nicol{\`o}},
  booktitle={International Conference on Artificial Intelligence and Statistics},
  pages={1216--1224},
  year={2024},
  organization={PMLR}
}

@inproceedings{qin2014contextual,
  title={\href{https://doi.org/10.1137/1.9781611973440.53}{Contextual combinatorial bandit and its application on diversified online recommendation}},
  author={Qin, Lijing and Chen, Shouyuan and Zhu, Xiaoyan},
  booktitle={Proceedings of the 2014 SIAM International Conference on Data Mining},
  pages={461--469},
  year={2014},
  organization={SIAM}
}

@inproceedings{wang2018thompson,
  title={\href{https://proceedings.mlr.press/v80/wang18a.html}{Thompson sampling for combinatorial semi-bandits}},
  author={Wang, Siwei and Chen, Wei},
  booktitle={International Conference on Machine Learning},
  pages={5114--5122},
  year={2018},
  organization={PMLR}
}

@inproceedings{chen2013combinatorial,
  title={\href{https://proceedings.mlr.press/v28/chen13a.html}{Combinatorial multi-armed bandit: General framework and applications}},
  author={Chen, Wei and Wang, Yajun and Yuan, Yang},
  booktitle={International Conference on Machine Learning},
  pages={151--159},
  year={2013},
  organization={PMLR}
}

@article{li2024optimism,
  title={\href{https://arxiv.org/abs/2409.20440}{Optimism in the face of ambiguity principle for multi-armed bandits}},
  author={Li, Mengmeng and Kuhn, Daniel and Ta{\c{s}}kesen, Bahar},
  journal={arXiv preprint arXiv:2409.20440},
  year={2024}
}

@article{bubeck2012regret,
  title={\href{https://doi.org/10.1561/2200000024}{Regret analysis of stochastic and nonstochastic multi-armed bandit problems}},
  author={Bubeck, S{\'e}bastien and Cesa-Bianchi, Nicolo},
  journal={Foundations and Trends{\textregistered} in Machine Learning},
  volume={5},
  number={1},
  pages={1--122},
  year={2012},
  publisher={Now Publishers, Inc.}
}

@manual{mosek,
   author = "MOSEK ApS",
   title = "The MOSEK Python Fusion API manual. Version 11.0.",
   year = 2025,
   url = "https://docs.mosek.com/latest/pythonfusion/index.html"
 }

@article{liu2023bypassing,
  title={\href{https://proceedings.neurips.cc/paper_files/paper/2023/hash/a3a661eb3308d0bb686f6a4bac521032-Abstract-Conference.html}{Bypassing the simulator: Near-optimal adversarial linear contextual bandits}},
  author={Liu, Haolin and Wei, Chen-Yu and Zimmert, Julian},
  journal={Advances in Neural Information Processing Systems},
  volume={36},
  pages={52086--52131},
  year={2023}
}

@article{auer2002nonstochastic,
  title={\href{https://doi.org/10.1137/S0097539701398375}{The nonstochastic multiarmed bandit problem}},
  author={Auer, Peter and Cesa-Bianchi, Nicolo and Freund, Yoav and Schapire, Robert E},
  journal={SIAM Journal on Computing},
  volume={32},
  number={1},
  pages={48--77},
  year={2002},
  publisher={SIAM}
}

@inproceedings{bubeck2012best,
  title={\href{https://proceedings.mlr.press/v23/bubeck12b.html}{The best of both worlds: Stochastic and adversarial bandits}},
  author={Bubeck, S{\'e}bastien and Slivkins, Aleksandrs},
  booktitle={Conference on Learning Theory},
  pages={42.1-42.23},
  year={2012},
}

@article{foster2020adapting,
  title={\href{https://proceedings.neurips.cc/paper/2020/hash/84c230a5b1bc3495046ef916957c7238-Abstract.html}{Adapting to misspecification in contextual bandits}},
  author={Foster, Dylan J and Gentile, Claudio and Mohri, Mehryar and Zimmert, Julian},
  journal={Advances in Neural Information Processing Systems},
  volume={33},
  pages={11478--11489},
  year={2020}
}

@inproceedings{zimmert2022return,
  title={\href{https://proceedings.mlr.press/v178/zimmert22b.html}{Return of the bias: Almost minimax optimal high probability bounds for adversarial linear bandits}},
  author={Zimmert, Julian and Lattimore, Tor},
  booktitle={Conference on Learning Theory},
  pages={3285--3312},
  year={2022},
  organization={PMLR}
}

@inproceedings{meguellati2024good,
  title={\href{https://doi.org/10.1145/3589335.3651520}{How good are LLMs in generating personalized advertisements?}},
  author={Meguellati, Elyas and Han, Lei and Bernstein, Abraham and Sadiq, Shazia and Demartini, Gianluca},
  booktitle={Companion Proceedings of the ACM Web Conference 2024},
  pages={826--829},
  year={2024}
}

@inproceedings{takemura2021near,
  title={\href{https://ojs.aaai.org/index.php/AAAI/article/view/17177}{Near-optimal regret bounds for contextual combinatorial semi-bandits with linear payoff functions}},
  author={Takemura, Kei and Ito, Shinji and Hatano, Daisuke and Sumita, Hanna and Fukunaga, Takuro and Kakimura, Naonori and Kawarabayashi, Ken-ichi},
  booktitle={Proceedings of the AAAI Conference on Artificial Intelligence},
  volume={35},
  issue={11},
  pages={9791--9798},
  year={2021}
}

@article{audibert2014regret,
  title={\href{https://doi.org/10.1287/moor.2013.0598}{Regret in online combinatorial optimization}},
  author={Audibert, Jean-Yves and Bubeck, S{\'e}bastien and Lugosi, G{\'a}bor},
  journal={Mathematics of Operations Research},
  volume={39},
  number={1},
  pages={31--45},
  year={2014},
  publisher={INFORMS}
}

@article{combes2015combinatorial,
  title={\href{https://papers.nips.cc/paper_files/paper/2015/hash/0ce2ffd21fc958d9ef0ee9ba5336e357-Abstract.html}{Combinatorial bandits revisited}},
  author={Combes, Richard and Talebi Mazraeh Shahi, Mohammad Sadegh and Proutiere, Alexandre and others},
  journal={Advances in Neural Information Processing Systems},
  volume={28},
  year={2015}
}

@inproceedings{neu2020efficient,
  title={\href{https://proceedings.mlr.press/v125/neu20b.html}{Efficient and robust algorithms for adversarial linear contextual bandits}},
  author={Neu, Gergely and Olkhovskaya, Julia},
  booktitle={Conference on Learning Theory},
  pages={3049--3068},
  year={2020},
  organization={PMLR}
}

@article{shalev2012online,
  title={\href{https://doi.org/10.1561/2200000018}{Online learning and online convex optimization}},
  author={Shalev-Shwartz, Shai},
  journal={Foundations and Trends{\textregistered} in Machine Learning},
  volume={4},
  number={2},
  pages={107--194},
  year={2012},
  publisher={Now Publishers, Inc.}
}

@inproceedings{jin2018real,
  title={\href{https://doi.org/10.1145/3269206.3272021}{Real-time bidding with multi-agent reinforcement learning in display advertising}},
  author={Jin, Junqi and Song, Chengru and Li, Han and Gai, Kun and Wang, Jun and Zhang, Weinan},
  booktitle={Proceedings of the 27th ACM International Conference on Information and Knowledge Management},
  pages={2193--2201},
  year={2018}
}

@article{balseiro2015repeated,
  title={\href{https://doi.org/10.1287/mnsc.2014.2022}{Repeated auctions with budgets in ad exchanges: Approximations and design}},
  author={Balseiro, Santiago R and Besbes, Omar and Weintraub, Gabriel Y},
  journal={Management Science},
  volume={61},
  number={4},
  pages={864--884},
  year={2015},
  publisher={INFORMS}
}

@inproceedings{christakopoulou2019adversarial,
  title={\href{https://doi.org/10.1145/3298689.3347031}{Adversarial attacks on an oblivious recommender}},
  author={Christakopoulou, Konstantina and Banerjee, Arindam},
  booktitle={Proceedings of the 13th ACM Conference on Recommender Systems},
  pages={322--330},
  year={2019}
}

@inproceedings{zhang2020practical,
  title={\href{https://doi.org/10.1145/3366423.3379992}{Practical data poisoning attack against next-item recommendation}},
  author={Zhang, Hengtong and Li, Yaliang and Ding, Bolin and Gao, Jing},
  booktitle={Proceedings of the Web Conference 2020},
  pages={2458--2464},
  year={2020}
}

@inproceedings{koren2009collaborative,
  title={\href{https://doi.org/10.1145/1721654.1721677}{Collaborative filtering with temporal dynamics}},
  author={Koren, Yehuda},
  booktitle={Proceedings of the 15th ACM SIGKDD International Conference on Knowledge Discovery and Data Mining},
  pages={447--456},
  year={2009}
}

@inproceedings{mukherjee2013spotting,
  title={\href{https://doi.org/10.1145/2487575.2487580}{Spotting opinion spammers using behavioral footprints}},
  author={Mukherjee, Arjun and Kumar, Abhinav and Liu, Bing and Wang, Junhui and Hsu, Meichun and Castellanos, Malu and Ghosh, Riddhiman},
  booktitle={Proceedings of the 19th ACM SIGKDD International Conference on Knowledge Discovery and Data Mining},
  pages={632--640},
  year={2013}
}

@article{pacchiano2022best,
  title={\href{https://proceedings.neurips.cc/paper_files/paper/2022/hash/0c8d3770cbb759430f4f4679abe3ab80-Abstract-Conference.html}{Best of both worlds model selection}},
  author={Pacchiano, Aldo and Dann, Christoph and Gentile, Claudio},
  journal={Advances in Neural Information Processing Systems},
  volume={35},
  pages={1883--1895},
  year={2022}
}

@inproceedings{dann2023blackbox,
  title={\href{https://proceedings.mlr.press/v195/dann23a.html}{A blackbox approach to best of both worlds in bandits and beyond}},
  author={Dann, Chris and Wei, Chen-Yu and Zimmert, Julian},
  booktitle={Conference on Learning Theory},
  pages={5503--5570},
  year={2023},
  organization={PMLR}
}

@inproceedings{seldin2014one,
  title={\href{https://proceedings.mlr.press/v32/seldinb14.html}{One practical algorithm for both stochastic and adversarial bandits}},
  author={Seldin, Yevgeny and Slivkins, Aleksandrs},
  booktitle={International Conference on Machine Learning},
  pages={1287--1295},
  year={2014},
  organization={PMLR}
}

@inproceedings{lee2021achieving,
  title={\href{https://proceedings.mlr.press/v139/lee21h.html}{Achieving near instance-optimality and minimax-optimality in stochastic and adversarial linear bandits simultaneously}},
  author={Lee, Chung-Wei and Luo, Haipeng and Wei, Chen-Yu and Zhang, Mengxiao and Zhang, Xiaojin},
  booktitle={International Conference on Machine Learning},
  pages={6142--6151},
  year={2021},
  organization={PMLR}
}

@inproceedings{kong2023best,
  title={\href{https://proceedings.mlr.press/v195/kong23a}{Best-of-three-worlds analysis for linear bandits with follow-the-regularized-leader algorithm}},
  author={Kong, Fang and Zhao, Canzhe and Li, Shuai},
  booktitle={Conference on Learning Theory},
  pages={657--673},
  year={2023},
  organization={PMLR}
}

@inproceedings{ito2022feedbackgraph,
  title = {Nearly Optimal Best-of-Both-Worlds Algorithms for Online Learning with Feedback Graphs},
  author = {Ito, Shinji and Tsuchiya, Taira and Honda, Junya},
  booktitle = {Advances in Neural Information Processing Systems},
  volume = {35},
  pages = {25834--25846},
  year = {2022},
  url = {https://proceedings.neurips.cc/paper_files/paper/2022/hash/b7aea253ab34a773967f1e4cdea9e4fb-Abstract-Conference.pdf}
}

@article{ito2021hybrid,
  title={\href{https://proceedings.neurips.cc/paper/2021/hash/15a50c8ba6a0002a2fa7e5d8c0a40bd9-Abstract.html}{Hybrid regret bounds for combinatorial semi-bandits and adversarial linear bandits}},
  author={Ito, Shinji},
  journal={Advances in Neural Information Processing Systems},
  volume={34},
  pages={2654--2667},
  year={2021}
}

@article{GDPR2016,
title = {\href{http://data.europa.eu/eli/reg/2016/679/oj}{Regulation (EU) 2016/679 of the European Parliament and of the Council of 27 April 2016 on the protection of natural persons with regard to the processing of personal data and on the free movement of such data, and repealing Directive 95/46/EC (General Data Protection Regulation) (Text with EEA relevance)}},
author = {{GDPR}},
journal = {OJ},
number = {L 119},
year = {2016},
pages = {1-88}
}
\bibliographystyle{iclr2026_conference}



\appendix
\section{Appendix}
\subsection{Proofs and Auxiliary Results for Section~\ref{sec:bobw-context}}
\label{appendix-bobw}

For ease of notation in the proofs, we set $\gamma_t = \alpha_t\eta_t$ throughout this section.
We start by addressing the terms relevant to the regret induced in the auxiliary game for a fixed context $\widetilde{\mac R}_T(x).$
Denote as well the time-varying Bregman divergence from $p \in \conv(\mac A)$ to $q \in \conv(\mac A)$ through
$$
D_t(q, p)=\psi_t(q)-\psi_t(p)-\langle\nabla \psi_t(p), q-p\rangle .
$$
We start by the following lemma by following a standard FTRL analysis with varying potentials.

\begin{lemma}[Stability-penalty decomposition]\label{lemma:stab-pen-decomp}
For any context $x\in\mac X$ and $\gamma_t\le 1$ for all $t\in[T],$ we have
\begin{equation}\label{ineq:ub-stab-pen}
\begin{aligned}
& \E_{A_t}\left[\sum_{t=1}^T \sum_{k=1}^K\langle x, \tilde{\theta}_{t,k}\rangle ((A_t)_k - (u^\star(x))_k)  \right] \\
& \leq \underbrace{\sum_{t=1}^T\left(\psi_t(\bar{A}_{t+1}(x))-\psi_{t+1}(\bar{A}_{t+1}(x))\right)+\psi_{T+1}(u^\star(x))-\psi_1(\bar{A}_1(x))}_{\text{Penalty}} \\
& \quad+\underbrace{\sum_{t=1}^T(1-\gamma_t)\sum_{k=1}^K  \langle x, \tilde{\theta}_{t,k} \rangle \big((\bar{A}_t(x))_k-(\bar{A}_{t+1}(x))_k\big) - D_t(\bar{A}_{t+1}(x), \bar{A}_t(x))}_{\text{Stability}} + \underbrace{U(x)}_{\text{Exploration-induced regret}},
\end{aligned}
\end{equation}
where $U(x)=\sum_{t=1}^T \gamma_t\sum_{k=1}^K \langle x, \tilde\theta_{t,k}\rangle(1/|E|-(u^\star(x))_k)$.
\end{lemma}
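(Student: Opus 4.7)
The plan is to follow a standard FTRL analysis with time-varying regularizer, peeling off the exploration contribution first and then applying the be-the-leader identity to produce the penalty and stability terms.

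For the first step, I would introduce the shorthand $v_{t,k}=\langle x,\tilde\theta_{t,k}\rangle$ and view $\E_{A_t}[A_t] = (1-\gamma_t)\bar A_t(x) + \gamma_t\bar e$, where $\bar e \in \R^K$ is the mean of the uniform distribution over the exploration set $E$. Substituting this into the LHS and subtracting the benchmark $\langle v_t,u^\star(x)\rangle$ yields the algebraic decomposition
\begin{equation*}
\text{LHS} \;=\; \sum_{t=1}^T (1-\gamma_t)\,\langle v_t,\bar A_t(x)-u^\star(x)\rangle \;+\; U(x),
\end{equation*}
which isolates the exploration-induced regret $U(x)$ exactly as it appears in the lemma.

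In the second step, I would add and subtract $\bar A_{t+1}(x)$ inside the remaining sum to split it into a stability-type contribution $\sum_t (1-\gamma_t)\langle v_t,\bar A_t(x)-\bar A_{t+1}(x)\rangle$ and a ``be-the-leader'' residual $\sum_t (1-\gamma_t)\langle v_t,\bar A_{t+1}(x)-u^\star(x)\rangle$. For the latter, I would use the first-order optimality of $\bar A_{t+1}(x)$ as the minimizer of $\sum_{s\le t} \langle v_s,\cdot\rangle + \psi_{t+1}(\cdot)$: testing against any $u$, and in particular against $\bar A_t(x)$, gives an inductive inequality whose telescoping produces precisely
\begin{equation*}
\sum_{t=1}^T \psi_t(\bar A_{t+1}(x)) - \psi_{t+1}(\bar A_{t+1}(x)) + \psi_{T+1}(u^\star(x)) - \psi_1(\bar A_1(x)),
\end{equation*}
i.e.\@ the Penalty term. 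The Bregman divergence $-D_t(\bar A_{t+1}(x),\bar A_t(x))$ that the lemma absorbs into the Stability term then arises naturally from the strict convexity slack in the same optimality inequality: applying the defining inequality of $\bar A_t(x)$ (as minimizer at round $t$) with the test point $\bar A_{t+1}(x)$ yields an extra $-D_t(\bar A_{t+1}(x),\bar A_t(x))$ which I would bookkeep into the stability aggregate.

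The main obstacle is the careful bookkeeping of the factor $(1-\gamma_t)$: the FTRL iterate $\bar A_t(x)$ is generated from the unscaled linear losses $v_s$, but the decomposition attaches the $(1-\gamma_t)$ only to the stability-type increments $\langle v_t,\bar A_t(x)-\bar A_{t+1}(x)\rangle$. The cleanest way to handle this is to keep the be-the-leader induction at the level of the unscaled losses, apply it to the telescoped sum $\sum_t \langle v_t,\bar A_{t+1}(x)-u^\star(x)\rangle$, and then absorb the residual $\gamma_t$-weighted term into $U(x)$ via the identity $(1-\gamma_t)\bar A_t(x)+\gamma_t\bar e-u^\star(x) = \bar A_t(x)-u^\star(x)+\gamma_t(\bar e-\bar A_t(x))$ used in Step~1. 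Combining the resulting penalty, stability and exploration contributions, together with the assumption $\gamma_t\le 1$ that keeps all weights nonnegative, yields the upper bound \eqref{ineq:ub-stab-pen} claimed in the lemma.
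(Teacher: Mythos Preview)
Your approach is essentially the same as the paper's: both decompose $\E_{A_t}[A_t]=(1-\gamma_t)\bar A_t(x)+\gamma_t\bar e$ to peel off the exploration term $U(x)$, and then invoke the standard FTRL stability--penalty decomposition on the remaining $(1-\gamma_t)$-weighted regret. The paper's proof is in fact much terser than yours---it simply cites Exercise~28.12 of Lattimore--Szepesv\'ari for the second step rather than spelling out the be-the-leader induction.

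One remark on the $(1-\gamma_t)$ bookkeeping you flag as the main obstacle: your proposed resolution (run the BTL induction on the unscaled losses and use the identity $(1-\gamma_t)\bar A_t+\gamma_t\bar e-u^\star=\bar A_t-u^\star+\gamma_t(\bar e-\bar A_t)$) yields a valid upper bound, but with the stability increment $\langle v_t,\bar A_t-\bar A_{t+1}\rangle$ carrying no $(1-\gamma_t)$ factor and with exploration term $\sum_t\gamma_t\langle v_t,\bar e-\bar A_t(x)\rangle$ rather than $U(x)$. This is not the exact form stated in the lemma, and the paper's two-line proof does not explain how the $(1-\gamma_t)$ survives into the Stability term either. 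In practice this discrepancy is harmless: the stability term is subsequently upper-bounded by a nonnegative quantity (Lemma~\ref{lemma:aux-game-init-break}) and $(1-\gamma_t)\le 1$, so the downstream analysis goes through with either version.
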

\begin{proof}[Proof of Lemma~\ref{lemma:stab-pen-decomp}]
    It follows from the construction of $\bar{A}_t(x)$ that
    \begin{align*}
        \E_{A_t}&\left[\sum_{t=1}^T \sum_{k=1}^K \langle x, \tilde{\theta}_{t,k}\rangle \big((A_t)_k - (u^\star(x))_k\big)  \right]
        \\&= \sum_{t=1}^T \sum_{k=1}^K(1-\gamma_t) \big((\bar A_t(x))_k - (u^\star(x))_k\big) \langle x, \tilde{\theta}_{t,k}\rangle + \sum_{t=1}^T \sum_{k=1}^K\gamma_t \Big( \frac{1}{|E|} - (u^\star(x))_k\Big)\langle x, \tilde{\theta}_{t,k}\rangle .
    \end{align*}
Applying a standard FTRL regret decomposition result (see e.g. \citep[Exercise~28.12]{lattimore2020bandit}) to the first term on the right-hand side of the above expression and applying the definition of $U(x)$ yields the desired result.
\end{proof}

The following lemma is a building block for bounding the stability term appearing in Lemma~\ref{lemma:stab-pen-decomp}.
\begin{lemma}[Bound for the scaled per-arm loss]
    \label{item:bias} 
Under the assumptions of Lemma~\ref{lemma:stab-pen-decomp}, we have
    \[\max _{k \in[K]}|\eta_t \langle x, \tilde{\theta}_{t,k}\rangle| \le 1.\]
\end{lemma}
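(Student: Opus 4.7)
The plan is to bound $|\eta_t\langle x,\tilde{\theta}_{t,k}\rangle|$ by direct estimation using the algorithmic construction of $\tilde{\theta}_{t,k}=\widehat{\Sigma}_{t,k}^+X_t\ell_t(X_t,k)(A_t)_k$. First I would apply Cauchy--Schwarz together with the standing assumptions $\|x\|_2\le 1$, $\|X_t\|_2\le 1$, $|\ell_t(X_t,k)|\le 1$, and $(A_t)_k\in\{0,1\}$ to obtain
\[
|\eta_t\langle x,\tilde{\theta}_{t,k}\rangle|
\;\le\;
\eta_t\,\|x\|_2\,\|\widehat{\Sigma}_{t,k}^+\|_{\mathrm{op}}\,\|X_t\|_2\,|\ell_t(X_t,k)|\,(A_t)_k
\;\le\;
\eta_t\,\|\widehat{\Sigma}_{t,k}^+\|_{\mathrm{op}}.
\]
So it suffices to control the spectral norm of the precision-matrix estimate.

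To do so, I would leverage the factorized form $\widehat{\Sigma}_{t,k}^+=\tfrac12(I+\sum_{n=1}^{M_t}C_{n,k})$ with $C_{n,k}=\prod_{j=1}^n\bigl(I-(A(j))_kX(j)X(j)^\top/2\bigr)$. Each factor in the product is symmetric with eigenvalues in $[1/2,1]$, since $X(j)X(j)^\top$ is positive semidefinite with spectral norm at most $\|X(j)\|_2^2\le 1$ and $(A(j))_k\in\{0,1\}$. Sub-multiplicativity of the operator norm therefore gives $\|C_{n,k}\|_{\mathrm{op}}\le 1$, and the triangle inequality yields
\[
\|\widehat{\Sigma}_{t,k}^+\|_{\mathrm{op}}\;\le\;\frac{1+M_t}{2}.
\]

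Finally I would substitute the parameter choices. Plugging $\alpha_t=4K\ln t/\lambda_{\min}(\Sigma)$ into the definition $M_t=\lceil 4K\ln t/(\alpha_t\eta_t\lambda_{\min}(\Sigma))\rceil$ collapses the expression to $M_t=\lceil 1/\eta_t\rceil\le 1/\eta_t+1$. Combining the pieces and invoking the parameter stipulation $\eta_t\le 1/2$ gives
\[
|\eta_t\langle x,\tilde{\theta}_{t,k}\rangle|
\;\le\;
\frac{\eta_t(1+M_t)}{2}
\;\le\;
\frac{1+2\eta_t}{2}
\;\le\;
1,
\]
which holds for every $k\in[K]$. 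The only real obstacle is verifying that the end-game arithmetic closes: the factor $1/2$ appearing inside each factor of $C_{n,k}$ and in the outer coefficient of $\widehat{\Sigma}_{t,k}^+$ is precisely what compensates for the ceiling in the definition of $M_t$ and the extra $\eta_t$ slack, so the bound $\le 1$ is tight in the algorithmic constants rather than an order-one estimate.
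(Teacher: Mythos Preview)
Your proof is correct and follows essentially the same route as the paper: reduce to the operator norm of $\widehat{\Sigma}_{t,k}^+$ via Cauchy--Schwarz/H\"older, bound that by $(1+M_t)/2$, and plug in the parameter choices. The only cosmetic difference is that you give a self-contained argument for $\|\widehat{\Sigma}_{t,k}^+\|_{\mathrm{op}}\le (1+M_t)/2$ (via $\|C_{n,k}\|_{\mathrm{op}}\le 1$), whereas the paper cites an external lemma, and your explicit treatment of the ceiling in $M_t$ is in fact slightly cleaner than the paper's own arithmetic.
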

\begin{proof}[Proof of Lemma~\ref{item:bias}]
Observe that
    \[
    \begin{aligned}
    \max _{k \in[K]}|\eta_t \langle x, \tilde{\theta}_{t,k}\rangle| 
    & =\max _{k \in[K]}\left|\eta_t \langle x, \widehat{\Sigma}_{t,k}^+ X_t \ell_{t}(X_t,k) (A_t)_k \rangle\right| 
    \\& \leq \max _{k \in[K]}\left|\eta_t x^{\top} \widehat{\Sigma}_{t,k}^+ X_t\right|
    \\& \leq \eta_t \max _{k \in[K]}\| \widehat{\Sigma}_{t, k}^{+} \|_{\mathrm{op}} 
     \leq \frac{\eta_t(M_t+1)}{2} \le \frac{\eta_t\big(\frac{4K\ln(t)}{\alpha_t\eta_t\lambda_{\min}(\Sigma)}+1\big)}{2 } \le 1,
    \end{aligned}
    \]
where the second inequality exploits Hölder's inequality, which applies because $\|X_t\|_2 \le 1$ by assumption, the third inequality holds thanks to~\citep[Lemma~1]{zierahn2023nonstochastic}, the fourth inequality holds due to the parameter choice $\gamma_t=\alpha_t\eta_t$. The last inequality again follows by the parameter choices $\alpha_t= 4K \ln(t) / \lambda_{\min} (\Sigma)$ and $\eta_t\le 1/2$.
\end{proof}

We now decompose the regret for the auxiliary game stated in Lemma~\ref{lemma:stab-pen-decomp} as follows.
\begin{lemma}[Regret breakdown for the auxiliary game] \label{lemma:aux-game-init-break}
Under the assumptions of Lemma~\ref{lemma:stab-pen-decomp}, we have
\begin{equation*}
    \widetilde{\mac R}_T(x) - U(x) \le   \sum_{t=1}^T (1-\gamma_t)\eta_te\sum_{k=1}^K \langle x, \tilde{\theta}_{t,k}\rangle^2 (\bar{A}_t(x))_k + \sum_{t=1}^T (\beta_{t+1}-\beta_t) H(\bar{A}_{t+1}(x)) + m c_2 \ln(K/m) \ln T.
\end{equation*}
\end{lemma}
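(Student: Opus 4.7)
The plan is to start from Lemma~\ref{lemma:stab-pen-decomp}, which already decomposes $\widetilde{\mac R}_T(x) - U(x)$ into a Penalty term plus a Stability term, and to bound each separately so as to recover the two sums on the right-hand side of the claim.

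For the Penalty, I would substitute the explicit regularizer $\psi_t(a) = -H(a)/\eta_t = -\beta_t H(a)$. The telescoping subterm $\sum_t(\psi_t(\bar A_{t+1}(x)) - \psi_{t+1}(\bar A_{t+1}(x)))$ then simplifies to $\sum_t(\beta_{t+1}-\beta_t)H(\bar A_{t+1}(x))$, matching the second sum of the claim. The comparator term $\psi_{T+1}(u^\star(x))$ vanishes since $u^\star(x)\in\{0,1\}^K$ and hence $H(u^\star(x))=0$ under the convention $0\ln 0 = 0$. Finally, $-\psi_1(\bar A_1(x)) = \beta_1 H(\bar A_1(x))$; since $\bar A_1(x)$ is the entropy-maximizing element of $\conv(\mac A)$, attained at the uniform allocation $a_k = m/K$, we obtain $H(\bar A_1(x))\le m\ln(K/m)$, and together with $\beta_1 \le c_2\ln T$ from the parameter specification this produces the additive constant $m c_2\ln(K/m)\ln T$.

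For the Stability, I would carry out a local-norm analysis. Writing $\ell_{t,k}=\langle x,\tilde\theta_{t,k}\rangle$ and $d_t = \bar A_t(x) - \bar A_{t+1}(x)$, Taylor's theorem yields $D_t(\bar A_{t+1}(x),\bar A_t(x)) = \tfrac12\|d_t\|_{\nabla^2\psi_t(\xi_t)}^2$ for some $\xi_t$ on the segment $[\bar A_t(x),\bar A_{t+1}(x)]$, and the elementary inequality $\langle u,v\rangle \le \tfrac12(\|u\|_M^2 + \|v\|_{M^{-1}}^2)$ applied with $u=(1-\gamma_t)\ell_t$, $v=d_t$, and $M=\nabla^2\psi_t(\xi_t)$ gives
\[
(1-\gamma_t)\langle \ell_t, d_t\rangle \;\le\; \tfrac12\|d_t\|_{\nabla^2\psi_t(\xi_t)}^2 + \tfrac{(1-\gamma_t)^2}{2}\|\ell_t\|_{(\nabla^2\psi_t(\xi_t))^{-1}}^2.
\]
Since $\nabla^2\psi_t(a) = (1/\eta_t)\,\mathrm{diag}(1/a_k)$, combining the two displays cancels $D_t$ and leaves $\tfrac{(1-\gamma_t)^2\eta_t}{2}\sum_k \ell_{t,k}^2(\xi_t)_k$ as the per-round stability budget.

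The principal obstacle is controlling $(\xi_t)_k$ in terms of $(\bar A_t(x))_k$, which is the classical multiplicative-stability step for entropy-regularized FTRL. Lemma~\ref{item:bias} ensures $|\eta_t\ell_{t,k}|\le 1$, and the KKT characterization of the FTRL iterate $\bar A_{t+1}(x)$ together with this bound forces the coordinate-wise ratio $(\bar A_{t+1}(x))_k/(\bar A_t(x))_k$ into $[e^{-1},e]$, so that every $(\xi_t)_k$ on the segment satisfies $(\xi_t)_k\le e(\bar A_t(x))_k$. Using $(1-\gamma_t)^2/2 \le (1-\gamma_t)$ for $\gamma_t\in[0,1/2]$ and summing over $t$ delivers the stability bound $\sum_t(1-\gamma_t)\eta_t e\sum_k \ell_{t,k}^2 (\bar A_t(x))_k$. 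Adding the penalty bound, the stability bound, and the exploration-induced $U(x)$ back in concludes the proof.
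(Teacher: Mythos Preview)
Your penalty argument matches the paper's (with the minor correction that $\beta_1 = \max\{2, c_2\ln T, c_1\}\ge c_2\ln T$, not $\le$; the paper implicitly uses $c_1\le c_2\ln T$ so that equality holds, and you should say the same).

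The stability argument, however, has a genuine gap at the multiplicative-stability step. You claim that ``the KKT characterization of $\bar A_{t+1}(x)$'' together with Lemma~\ref{item:bias} forces $(\bar A_{t+1}(x))_k/(\bar A_t(x))_k\in[e^{-1},e]$, but this does not follow: $\bar A_{t+1}(x)$ is the \emph{constrained} FTRL iterate on $\conv(\mac A)$ with regularizer $\psi_{t+1}$ (not $\psi_t$), so comparing it to $\bar A_t(x)$ via KKT would require controlling both the learning-rate change $\beta_t\to\beta_{t+1}$ and the Lagrange multipliers of the sum and box constraints, none of which you address. Even in the simplest case---fixed learning rate on the probability simplex---the Hedge ratio $p_{t+1,k}/p_{t,k}=e^{-\eta\ell_{t,k}}/Z_t$ can reach~$e^2$ (take $\eta\ell_{t,k}=-1$, all other $\eta\ell_{t,j}=+1$, and $p_{t,k}$ small), so the asserted $[e^{-1},e]$ window is already false there, and your $\xi_t\in[\bar A_t(x),\bar A_{t+1}(x)]$ does not yield $(\xi_t)_k\le e(\bar A_t(x))_k$.

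The paper sidesteps this entirely: rather than working with $\bar A_{t+1}(x)$, it first upper-bounds the per-round stability by the \emph{unconstrained} maximum $\max_{q\in\R^K}\{\langle\ell_t,\bar A_t(x)-q\rangle-D_t(q,\bar A_t(x))\}$ (this is \citep[Theorem~26.13]{lattimore2020bandit}), whose maximizer has the closed form $q^\star_k=(\bar A_t(x))_k\exp(-\eta_t\ell_{t,k})$. The intermediate point $z_t$ then lies on the segment $[\bar A_t(x),q^\star]$, and $q^\star_k/(\bar A_t(x))_k=\exp(-\eta_t\ell_{t,k})\in[e^{-1},e]$ is immediate from Lemma~\ref{item:bias}, with no KKT or learning-rate-change analysis needed. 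Replacing your $\bar A_{t+1}(x)$ by this unconstrained $q^\star$ is the missing ingredient.
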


\begin{proof}[Proof of Lemma~\ref{lemma:aux-game-init-break}]
    We analyze the stability and penalty terms appearing in Lemma~\ref{lemma:stab-pen-decomp}.
    First, we bound the per-round stability term in~\eqref{ineq:ub-stab-pen}.
    By~\citep[Theorem~26.13]{lattimore2020bandit} and using that $\tfrac{\partial^2}{(\partial x)^2}\psi_t(x) = \tfrac{1}{\eta_t x}$ for all $x\in\conv(\mac A)$, we have
    \begin{align}\label{ineq:stab-1}
        &\sum_{k=1}^K  \langle x, \tilde{\theta}_{t,k} \rangle \big((\bar{A}_t(x))_k-(\bar{A}_{t+1}(x))_k\big)-D_t(\bar{A}_{t+1}(x), \bar{A}_t(x))
        \le \frac{\eta_t}{2}\sum_{k=1}^K \langle x, \tilde{\theta}_{t,k}\rangle^2 (z_t)_k,
    \end{align}
    where $z_t$ lies on the line segment connecting $\bar A_t(x)$ and $q^\star\in\argmax_{q\in\R^K} \sum_{k=1}^K  \langle x, \tilde{\theta}_{t,k} \rangle ((\bar{A}_t(x))_k-q_k)-D_t(q, \bar{A}_t(x))$.
     In addition, by the first-order optimality conditions we have $q^\star_k=(\bar{A}_t(x))_k \exp(-\eta_t \langle x, \tilde \theta_{t,k}\rangle)$. Combining the previous observation with Lemma~\ref{item:bias} which states that $-\eta_t \langle x, \tilde \theta_{t,k}\rangle \in [-1,1]$ for all~$k\in[K]$, we deduce that $q^\star_k \in [ (\bar{A}_t(x))_k/e, e (\bar{A}_t(x))_k]$. 
Because $z_t$ lies on the line segment connecting $\bar A_t$ and $q^\star$, we then have $(z_t)_k \leq e(\bar{A}_t(x))_k$ for all $k \in[K]$, which in turn implies that
\begin{equation}\label{ineq:stab-2}
\sum_{k=1}^K \langle x, \tilde{\theta}_{t,k}\rangle^2(z_t)_k \le e\sum_{k=1}^K \langle x, \tilde{\theta}_{t,k}\rangle^2 (\bar{A}_t(x))_k.
\end{equation}
Substituting~\eqref{ineq:stab-2} into~\eqref{ineq:stab-1} establishes the  upper bound for the stability term as claimed.
As for the penalty term, we have
\begin{align*}
    \sum_{t=1}^T&\left(\psi_t(\bar{A}_{t+1}(x))-\psi_{t+1}(\bar{A}_{t+1}(x))\right)+\psi_{T+1}(u^\star(x))-\psi_1(\bar{A}_1(x))
    \\&\le \sum_{t=1}^T (\beta_{t+1}-\beta_t) H(\bar{A}_{t+1}(x)) + \frac{m}{\eta_1}\ln(K/m) \le \sum_{t=1}^T (\beta_{t+1}-\beta_t) H(\bar{A}_{t+1}(x)) + m c_2 \ln(K/m) \ln T,
\end{align*}
where the first inequality holds because of Jensen's inequality and noting that $H$ takes nonnegative values on $\conv(\mac A)$, and the second inequality holds because of the choice $\beta_1=\max \{2, c_2\ln T, c_1\}$.
Thus, the claim follows.
\end{proof}

We continue to bound the extra regret for the auxiliary game due to exploration, which uses Lemma~\ref{lemma:bias-control} as a building block. For the sake of completeness we first state the proof of Lemma~\ref{lemma:bias-control}.
\begin{proof}[Proof of Lemma~\ref{lemma:bias-control}]
    \citep[Lemma 3]{zierahn2023nonstochastic} and our choice of $E$ that satisfies $|E|=K$ imply that
    \begin{equation*}
        \max_{A\in\mac A} \E\Bigg[\sum_{k=1}^K \langle x, \hat\theta_{t,k} - \tilde\theta_{t,k}\rangle (A)_k \mid \mac F_{t-1}\Bigg] \le \sqrt{m} \exp\left(-\frac{\gamma_t \lambda_{\min}(\Sigma)M_t}{2K}\right)\quad \forall x\in\mac X.
    \end{equation*}
    The claim then follows by the choice of $M_t$ used in Algorithm~\ref{alg:entropy-ftrl-contextual}.
\end{proof}

We are now equipped with the technical tools necessary to bound the exploration-induced regret, as stated below.
\begin{lemma}[Extra regret due to exploration] \label{lemma:extra-explore-reg}
We have $\E[U(X_0)] \le (\sqrt{m}+1)\E \big[\sum_{t=1}^T \gamma_t \big]. $
\end{lemma}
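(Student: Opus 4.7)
The plan is to unpack the definition of $U(x)$ with $|E|=K$, take expectation over the ghost sample $X_0$ and the algorithm's randomness, and then decompose the loss estimator into an unbiased part plus a small bias term. Substituting into $U$ yields
\[
U(X_0) \;=\; \sum_{t=1}^T \gamma_t \sum_{k=1}^K \langle X_0,\tilde\theta_{t,k}\rangle \Bigl(\tfrac{1}{K} - (u^\star(X_0))_k\Bigr),
\]
and since $\gamma_t$ is $\mac F_{t-1}$-measurable while $X_0$ is drawn independently of $\mac F_{t-1}$, the idea is to analyze each round by conditioning on $(\mac F_{t-1},X_0)$ and bounding the resulting inner expectation.

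The next step is the decomposition $\tilde\theta_{t,k}=\hat\theta_{t,k}-(\hat\theta_{t,k}-\tilde\theta_{t,k})$. By the construction of the MGR-free estimator, $\hat\theta_{t,k}$ is the usual unbiased estimator satisfying $\E[\hat\theta_{t,k}\mid\mac F_{t-1}]=\theta_{t,k}$, so the unbiased contribution at round $t$ reduces to $\gamma_t\bigl(\tfrac{1}{K}\sum_k\langle X_0,\theta_{t,k}\rangle - \sum_k\langle X_0,\theta_{t,k}\rangle(u^\star(X_0))_k\bigr)$. The uniform-exploration piece is bounded in magnitude by $1$ via $|\langle X_0,\theta_{t,k}\rangle|\le\|X_0\|_2\|\theta_{t,k}\|_2\le 1$ for every $k$. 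The optimal-action piece equals $\langle X_0,\sum_k \theta_{t,k}(u^\star(X_0))_k\rangle$, which I would bound using Cauchy–Schwarz together with the sparsity $\|u^\star(X_0)\|_0\le m$ and the Euclidean norm bound $\|u^\star(X_0)\|_2\le\sqrt{m}$, aiming to extract the $\sqrt{m}$ factor.

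For the remaining bias term, Lemma~\ref{lemma:bias-control} provides, for any $A\in\mac A$, the uniform bound $\E\bigl[\sum_k\langle X_0,\hat\theta_{t,k}-\tilde\theta_{t,k}\rangle(A)_k\mid\mac F_{t-1},X_0\bigr]\le m/t^2$. I would split $\mathbf{1}/K-u^\star(X_0)$ into the two parts $\mathbf{1}/K$ (a convex combination of actions in $E\subseteq\mac A$) and $u^\star(X_0)\in\mac A$, then apply the lemma to each, yielding a per-round bias contribution of order $m/t^2$. Since $\sum_{t=1}^T m/t^2$ is a bounded constant and $\gamma_t$ is bounded away from zero by construction, this bias term is subsumed by $\E[\sum_t\gamma_t]$. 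Combining all estimates and summing over $t$ then delivers the claim $\E[U(X_0)]\le(\sqrt m+1)\E\bigl[\sum_t\gamma_t\bigr]$.

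The main obstacle I anticipate is producing the tight $\sqrt{m}$ factor on the optimal-action term. A naive bound using $|\langle X_0,\theta_{t,k}\rangle|\le 1$ together with $\|u^\star\|_1\le m$ only yields a factor of $m$, so the argument must invoke the sparse-support Cauchy–Schwarz $\bigl|\langle X_0,\sum_k\theta_{t,k}(u^\star(X_0))_k\rangle\bigr|\le\|X_0\|_2\bigl\|\sum_k\theta_{t,k}(u^\star(X_0))_k\bigr\|_2$ and control the Euclidean norm on the right-hand side using the $0/1$ structure of $u^\star(X_0)$ and the uniform bound $\|\theta_{t,k}\|_2\le 1$, rather than triangle inequality which would reintroduce the extra $\sqrt{m}$ factor.
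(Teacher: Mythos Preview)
Your decomposition and your treatment of the $1/K$ uniform-exploration piece are fine, but the route you sketch for extracting the factor $\sqrt{m}$ from the optimal-action piece does not go through. After Cauchy--Schwarz you need
\[
\Bigl\|\sum_{k}\theta_{t,k}\,(u^\star(X_0))_k\Bigr\|_2 \le \sqrt{m},
\]
i.e., a bound on the Euclidean norm of a sum of at most $m$ vectors with $\|\theta_{t,k}\|_2\le 1$. There is no ``$0/1$ structure'' argument that sidesteps the triangle inequality here: if the active $\theta_{t,k}$ all coincide with a single unit vector $v$, then $\bigl\|\sum_{k\in S}\theta_{t,k}\bigr\|_2=|S|\le m$, and this is tight. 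Thus your plan can at best produce the factor $m$ on the optimal-action term, and the overall bound you would obtain is $(m+1)\sum_t\gamma_t$, not $(\sqrt m+1)\sum_t\gamma_t$. The obstacle you flag in your last paragraph is real, and the fix you propose does not resolve it.

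The paper's argument is organized differently. It first upper-bounds the whole round-$t$ contribution of $U(X_0)$ by $\max_{A\in\mac A}\E\bigl[\sum_k\langle X_0,\tilde\theta_{t,k}\rangle(A)_k\mid\mac F_{t-1}\bigr]$, and only then inserts $\hat\theta_{t,k}$. After unbiasedness the ``main'' part becomes $\sum_k\ell_t(X_0,k)(A)_k$, which is bounded using $\ell_t(\cdot,k)\le 1$; the $\sqrt m$ factor in the paper is attributed to the bias-control step via Lemma~\ref{lemma:bias-control} (whose proof actually delivers the sharper $\sqrt m\exp(-\cdot)$ estimate before specializing to $m/t^2$). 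In other words, in the paper the $\sqrt m$ is attached to the bias term, not to $\sum_k\theta_{t,k}(u^\star)_k$; your proposal places the $\sqrt m$ on the wrong term.

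A secondary issue: Lemma~\ref{lemma:bias-control} is stated as a one-sided bound on $\max_{A}\E\bigl[\sum_k\langle x,\hat\theta_{t,k}-\tilde\theta_{t,k}\rangle(A)_k\mid\mac F_{t-1}\bigr]$. When you split $\mathbf 1/K-u^\star(X_0)$ and push the bias through the $\mathbf 1/K$ piece, the bias enters with the opposite sign, so you need a lower bound (equivalently, the absolute-value version from the underlying MGR analysis), not the lemma as stated.
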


\begin{proof}[Proof of Lemma~\ref{lemma:extra-explore-reg}]
Observe that
\begin{align*}
    \E[U(X_0)]
    &\le \E \left[ \sum_{t=1}^T \gamma_t \max_{A\in\mac A}\E\Big[\sum_{k=1}^K \langle X_0,  \tilde\theta_{t,k} - \hat\theta_{t,k}+\hat\theta_{t,k}\rangle (A)_k \mid \mac F_{t-1}\Big] \right]
    \\&\le \E \left[ \sum_{t=1}^T \gamma_t \max_{A\in\mac A}\E\Big[\sum_{k=1}^K \langle X_0,  \tilde\theta_{t,k} - \hat\theta_{t,k}\rangle (A)_k + \ell_t(X_0,k) \mid \mac F_{t-1}\Big] \right]
    \\&\le \E \left[ \sum_{t=1}^T \gamma_t \max_{A\in\mac A}\E\Big[\sum_{k=1}^K \langle X_0,  \tilde\theta_{t,k} - \hat\theta_{t,k}\rangle (A)_k + 1 \mid \mac F_{t-1}\Big] \right]
   \le (\sqrt{m}+1) \E \left[ \sum_{t=1}^T \gamma_t  \right],
\end{align*}
where the second inequality follows by the unbiasedness of $\hat \theta_{t,k}$, the third inequality holds by the assumption that $\ell_t(x,k)\le 1$ for all $x\in\mac X$ and $k\in[K],$ and the last inequality holds because of Lemma~\ref{lemma:bias-control}.
\end{proof}

The following is another preparation lemma for establishing Lemma~\ref{lemma:reg-game-orig}, which constitutes an adaptation of~\citep[Lemma~18]{kuroki2024best} to the combinatorial semi-bandit setting. It provides a refined bound on the penalty term induced by the Shannon entropy regularizer with respect to the ghost sample. 
\begin{lemma}[Entropic bound for the ghost sample] \label{lemma:entropy-bound}
We have
\begin{equation*}
    \E\left[\sum_{t=1}^T(\beta_{t+1}^{\prime}-\beta_t^{\prime}) H(\bar A_{t+1}(X_0))\right]=\mathcal{O}\left(c_1 \sqrt{m\ln(K/m)} \sqrt{\sum_{t=1}^T \E[H(\bar A_t(X_0))]}\right).
\end{equation*}
\end{lemma}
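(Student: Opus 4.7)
The plan is to first remove the ghost-context dependence via an independence argument, and then execute a standard ``self-confident tuning'' telescoping bound. Writing $M = m\ln(K/m)$ and $S_t = \sum_{s=1}^t H(\bar A_s(X_s))$, the recursion defining the learning rate gives $\beta_{t+1}' - \beta_t' = c_1/\sqrt{1+S_t/M}$. Because $\bar A_{t+1}$ depends only on the history $\mac F_t$ while the ghost sample $X_0 \sim \mac D$ is drawn independently of $\mac F_t$---exactly as the fresh context $X_{t+1}$ is---the two conditional expectations $\E[H(\bar A_{t+1}(X_0))\mid\mac F_t]$ and $\E[H(\bar A_{t+1}(X_{t+1}))\mid\mac F_t]$ coincide. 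Since $\sqrt{1+S_t/M}$ is $\mac F_t$-measurable, the tower property lets me replace the ghost-evaluated entropy with its online counterpart inside the expectation, yielding
\[
\E\left[\sum_{t=1}^T (\beta_{t+1}'-\beta_t')\,H(\bar A_{t+1}(X_0))\right] = c_1\,\E\left[\sum_{t=1}^T \frac{Y_{t+1}}{\sqrt{1+S_t/M}}\right],
\]
where $Y_s := H(\bar A_s(X_s)) \in [0,M]$.

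Once this reduction is in place, I would execute the familiar concavity-based telescoping. Setting $b_t = 1+S_t/M$, the increment is $b_{t+1}-b_t = Y_{t+1}/M$, and concavity of $\sqrt{\cdot}$ yields $(b_{t+1}-b_t)/\sqrt{b_{t+1}} \le 2(\sqrt{b_{t+1}}-\sqrt{b_t})$. Because $Y_{t+1}\le M \le M+S_t$, the ratio $b_{t+1}/b_t$ is at most $2$, so $\sqrt{b_t}$ and $\sqrt{b_{t+1}}$ are interchangeable up to a factor of $\sqrt{2}$. Telescoping then produces the pathwise bound $\sum_{t=1}^T Y_{t+1}/\sqrt{1+S_t/M} \le 2\sqrt{2}\sqrt{M^2 + M\,S_{T+1}}$.

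To close the argument, I would pull the expectation inside the square root via Jensen's inequality and invoke the independence identity $\E[Y_s] = \E[H(\bar A_s(X_0))]$ once more, obtaining a bound of the form $\mac O(c_1 M) + \mac O\bigl(c_1\sqrt{M\sum_{t=1}^T \E[H(\bar A_t(X_0))]}\bigr)$. The additive $\mac O(c_1 M)$ term is a lower-order contribution that is absorbed into the stated $\mac O(\cdot)$ under the parameter choices in Algorithm~\ref{alg:entropy-ftrl-contextual}---either via the initialization $\beta_1' = c_1$ or because the cumulative entropy dominates $M$ in the regimes relevant for Lemma~\ref{lemma:reg-game-orig} and Theorem~\ref{thm:bobw}.

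The main subtlety is the coupling between $X_0$ and the online context sequence through the shared algorithmic state $\bar A_s$: a naive pathwise application of the telescoping identity fails because the numerator $H(\bar A_{t+1}(X_0))$ and the denominator $\sqrt{1+S_t/M}$ depend on different context random variables. The measurability and independence decoupling outlined above is precisely what reduces the task to the standard non-contextual self-confident-tuning calculation, after which the remaining steps are routine.
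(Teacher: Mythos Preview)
Your proposal is correct and follows essentially the same route as the paper: both invoke the distributional identity $\E[H(\bar A_{t+1}(X_0))\mid\mac F_t]=\E[H(\bar A_{t+1}(X_{t+1}))\mid\mac F_t]$ to swap the ghost entropy for its online counterpart, then telescope $\sqrt{1+S_t/M}$ using $Y_{t+1}\le M$ and finish with Jensen. The only cosmetic difference is the order of operations---you perform the swap first (while the denominator is still $\mac F_t$-measurable) and then telescope via the concavity/ratio bound, whereas the paper bounds the denominator by $(\sqrt{S_{t+1}}+\sqrt{S_t})/(2\sqrt{M})$, rewrites using the difference-of-squares identity, and applies the swap afterwards; your ordering is arguably cleaner since the measurability structure needed for the tower property is transparent.
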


\begin{proof}[Proof of Lemma~\ref{lemma:entropy-bound}]
By definition of $\beta_t^{\prime}$, we obtain
$$
\begin{aligned}
 \E\left[\sum_{t=1}^T(\beta_{t+1}^{\prime}-\beta_t^{\prime}) H(\bar A_{t+1}(X_0))\right]
&=\E\left[\sum_{t=1}^T \frac{c_1}{\sqrt{1+(m\ln(K/m))^{-1} \sum_{s=1}^t H(\bar A_s(X_s))}} H(\bar A_{t+1}(X_0))\right] \\
& \leq 2 c_1 \sqrt{m\ln(K/m)} \E\left[\sum_{t=1}^T \frac{H(\bar A_{t+1}(X_0))}{\sqrt{\sum_{s=1}^{t+1} H(\bar A_s(X_s))}+\sqrt{\sum_{s=1}^t H(\bar A_s(X_s))}}\right],
\end{aligned}
$$
where in the last step we used the fact that $H(\bar A_s(X_s)) \leq m\ln(K/m)$.
The above upper bound further reduces to
\begin{align*}
& 2 c_1 \sqrt{m\ln(K/m)} \E\left[\sum_{t=1}^T \frac{H(\bar A_{t+1}(X_0))}{\sqrt{\sum_{s=1}^{t+1} H(\bar A_s(X_s))}+\sqrt{\sum_{s=1}^t H(\bar A_s(X_s))}}\right] \\
& =2 c_1 \sqrt{m\ln(K/m)} \E\left[\sum_{t=1}^T \frac{H(\bar A_{t+1}(X_0))(\sqrt{\sum_{s=1}^{t+1} H(\bar A_s(X_s))}-\sqrt{\sum_{s=1}^t H(\bar A_s(X_s))})}{H(\bar A_{t+1}(X_0))}\right] \\
& =2 c_1 \sqrt{m\ln(K/m)} \E\left[\sum_{t=1}^T\sqrt{\sum_{s=1}^{t+1} H(\bar A_s(X_s))}-\sqrt{\sum_{s=1}^t H(\bar A_s(X_s))}\right] \\
& =2 c_1 \sqrt{m\ln(K/m)} \E\left[\sqrt{\sum_{s=1}^{T+1} H(\bar A_s(X_s))}-\sqrt{H(\bar A_1(X_1))}\right] \\
& \leq 2 c_1 \sqrt{m\ln(K/m)} \E\left[\sqrt{\sum_{s=1}^T H(\bar A_s(X_s))}\right] \le c_1 \sqrt{m\ln(K/m)} \sqrt{\sum_{s=1}^T \E [H(\bar A_s(X_s))}]  ,
\end{align*}
where the first equality holds because $\E_{X_{t+1} \sim \mac D}\left[H(\bar A_{t+1}(X_{t+1})) |\mathcal{F}_t\right]=\E_{X_0 \sim \mathcal{D}}\left[H(\bar A_{t+1}(X_0)) |\mathcal{F}_t\right]$. The first inequality exploits the fact that $H(\bar A_s(X_s)) \leq H(\bar A_1(X_1))=m\ln(K/m)$, and the second inequality follows from Jensen's inequality. Thus, the claim follows.
\end{proof}

The following lemma provides an upper bound on the refined per-round stability term established in Lemma~\ref{lemma:aux-game-init-break} in the form of variance of the parameter estimates $\{\tilde{\theta}_{t,k}\}_{k=1}^K$ for all $t=1,\ldots,T$.

\begin{lemma}[Variance control {\citep[Lemma~5]{zierahn2023nonstochastic}}] \label{lemma:variance-control} 
For any context \(x\in\mathcal X\), conditioning on the history \(\mathcal F_{t-1}\) yields the variance bound
\begin{equation*}
    (1-\gamma_t)\E \Bigg[  \sum_{k=1}^K \langle x, \tilde{\theta}_{t,k}\rangle^2 (\bar{A}_t(x))_k  \mid \mac F_{t-1}\Bigg] \le 3Kd.
\end{equation*}
\end{lemma}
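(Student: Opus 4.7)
The statement is a combinatorial-semi-bandit analog of \citep[Lemma~5]{zierahn2023nonstochastic}; my plan is to reproduce the essence of that proof, with the structural novelty being the extra weight $(\bar A_t(x))_k$ accompanying the semi-bandit indicator $(A_t)_k$ inside the sum. Three ingredients drive the argument: algebraic expansion of the MGR-based loss estimator, conditional independence between the MGR samples and the round-$t$ observation $(X_t, A_t)$, and a spectral lower bound on the conditional covariance $\Sigma_{t,k}$ induced by the singleton exploration set $E$.

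First, I would substitute $\tilde\theta_{t,k} = \widehat\Sigma_{t,k}^+ X_t \ell_t(X_t,k)(A_t)_k$ into the target, invoke $\ell_t(X_t,k)^2 \le 1$ from the boundedness assumption, and use $(A_t)_k^2 = (A_t)_k$, to derive the pointwise upper bound $\langle x, \tilde\theta_{t,k}\rangle^2 (\bar A_t(x))_k \le (x^\top \widehat\Sigma_{t,k}^+ X_t)^2 (A_t)_k (\bar A_t(x))_k$. Since $x$ is deterministic and $\bar A_t(\cdot)$ is built from $\{\tilde\theta_{s,\cdot}\}_{s\le t-1}$, the weight $(\bar A_t(x))_k$ is $\mac F_{t-1}$-measurable and can be pulled outside the conditional expectation.

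Second, I would integrate out $(X_t, A_t)$ while conditioning on $\widehat\Sigma_{t,k}^+$. The precision-matrix estimation subroutine draws fresh samples $\{(X(n), A(n))\}_{n=1}^{M_t}$, which given $\mac F_{t-1}$ are independent of $(X_t, A_t)$. Hence $\E\bigl[(x^\top \widehat\Sigma_{t,k}^+ X_t)^2 (A_t)_k \mid \widehat\Sigma_{t,k}^+, \mac F_{t-1}\bigr] = x^\top \widehat\Sigma_{t,k}^+ \Sigma_{t,k} \widehat\Sigma_{t,k}^+ x$, with $\Sigma_{t,k} = \E[(A_t)_k X_t X_t^\top \mid \mac F_{t-1}]$. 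Averaging next over the MGR randomness, the second-moment control for matrix-geometric resampling---in the spirit of \citep{neu2020efficient,zierahn2023nonstochastic}---yields, under the algorithmic choice $M_t = \lceil 4K\ln t / (\gamma_t \lambda_{\min}(\Sigma))\rceil$, an inequality of the form $\E[\widehat\Sigma_{t,k}^+ \Sigma_{t,k} \widehat\Sigma_{t,k}^+ \mid \mac F_{t-1}] \preceq 3\Sigma_{t,k}^{-1}$.

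Finally, I would close the argument using the exploration geometry. Because the singleton exploration set $E$ has $|E|=K$, we have $\E[(A_t)_k \mid X_t, \mac F_{t-1}] \ge \gamma_t/K$, so $\Sigma_{t,k} \succeq (\gamma_t/K)\Sigma$, and a trace-type argument in the spirit of Neu and Olkhovskaya reduces $(1-\gamma_t) x^\top \Sigma_{t,k}^{-1} x$ to an $\mac O(d)$ quantity. Summing over $k$ using $\|x\|_2 \le 1$, $\|X_t\|_2 \le 1$, and $\sum_k (\bar A_t(x))_k \le m$, and absorbing universal constants, yields the claimed $3Kd$ bound. The main obstacle is to ensure that the MGR second-moment inequality, which in \citep{zierahn2023nonstochastic} is stated for the single-arm covariance $\Sigma_{t,a}$, transfers unchanged to the per-coordinate $\Sigma_{t,k}$ of the combinatorial setup---this is exactly where the semi-bandit feedback structure is indispensable, since each $\widehat\Sigma_{t,k}^+$ is produced by an independent subroutine call and inherits the same concentration properties coordinatewise.
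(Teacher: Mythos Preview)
The paper does not prove this lemma at all; it is quoted verbatim from \citep[Lemma~5]{zierahn2023nonstochastic}, so there is no in-paper argument to compare against. Your outline is therefore being judged on its own merits as a reconstruction of that external proof.

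The first two structural moves (expanding $\tilde\theta_{t,k}$, using $\ell_t^2\le 1$ and $(A_t)_k^2=(A_t)_k$, then integrating $(X_t,A_t)$ out conditionally on the MGR output) are exactly right. The gap is in your third step. The matrix inequality you invoke, $\E[\widehat\Sigma_{t,k}^{+}\Sigma_{t,k}\widehat\Sigma_{t,k}^{+}\mid\mathcal F_{t-1}]\preceq 3\Sigma_{t,k}^{-1}$, followed by the exploration lower bound $\Sigma_{t,k}\succeq(\gamma_t/K)\Sigma$, does \emph{not} deliver an $\mathcal O(d)$ per-arm bound for a fixed unit vector $x$: one only gets $x^{\top}\Sigma_{t,k}^{-1}x\le 1/\lambda_{\min}(\Sigma_{t,k})\le K/(\gamma_t\lambda_{\min}(\Sigma))$, and the prefactor $(1-\gamma_t)\in[\tfrac12,1]$ cannot cancel the $1/\gamma_t$. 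No ``trace-type argument in the spirit of Neu--Olkhovskaya'' repairs this route, because once you have landed on $x^{\top}\Sigma_{t,k}^{-1}x$ the dimension $d$ never enters.

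The correct path drops the fixed $x$ \emph{before} invoking MGR control: apply Cauchy--Schwarz with $\|x\|_2\le 1$ to obtain $(x^{\top}\widehat\Sigma_{t,k}^{+}X_t)^2\le X_t^{\top}(\widehat\Sigma_{t,k}^{+})^2X_t$, integrate over $(X_t,A_t)$ to get $\mathrm{tr}\bigl(\widehat\Sigma_{t,k}^{+}\Sigma_{t,k}\widehat\Sigma_{t,k}^{+}\bigr)$, and then use the MGR trace/second-moment bound $\E\bigl[\mathrm{tr}(\widehat\Sigma_{t,k}^{+}\Sigma_{t,k}\widehat\Sigma_{t,k}^{+})\mid\mathcal F_{t-1}\bigr]\le 3d$ established in \citep{neu2020efficient,zierahn2023nonstochastic}. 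This gives $3d$ per arm without ever touching $\Sigma_{t,k}^{-1}$ or the exploration floor. Finally, $(\bar A_t(x))_k\le 1$ and summing over the $K$ arms yields $3Kd$; your appeal to $\sum_k(\bar A_t(x))_k\le m$ is a red herring, since the target bound is $3Kd$, not $3md$.
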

We are now equipped with all the technicalities needed to establish the regret bound for the original game.
\begin{proof}[Proof of Lemma~\ref{lemma:reg-game-orig}]
We begin by establishing an upper bound on the sum of learning rates, $\sum_{t=1}^T \eta_t$, which will be useful throughout the proof. Observe first that by construction of $\beta_t'$, we have
\begin{align*}
    \beta_t^{\prime}&=c_1+\sum_{s=1}^{t-1} \frac{c_1}{\sqrt{1+(m\ln(K/m))^{-1} \sum_{u=1}^{s-1} H(\bar A_u(X_u))}} 
 \geq \frac{c_1 t}{\sqrt{1+(m\ln(K/m))^{-1} \sum_{s=1}^t H(\bar A_s(X_s))}},
\end{align*}
where the inequality holds because $H(\bar A_u(X_u)) \ge 0$ for all $u\in[t].$
Thus,
\begin{equation} \label{bound:etA_t}
\begin{aligned}
\sum_{t=1}^T \eta_t  \leq \sum_{t=1}^T \frac{1}{\beta_t^{\prime}} & \leq \sum_{t=1}^T \frac{\sqrt{1+(m\ln(K/m))^{-1} \sum_{s=1}^t H(\bar A_s(X_s))}}{c_1 t} \\
& \leq \frac{1+\ln T}{c_1} \sqrt{1+(m\ln(K/m))^{-1} \sum_{s=1}^T H(\bar A_s(X_s))}
\\&=\mathcal{O}\left(\frac{\ln T}{c_1 \sqrt{m\ln(K/m)}} \sqrt{\sum_{t=1}^T H(\bar A_t(X_t))}\right),
\end{aligned}
\end{equation}
where we used $H(\bar A_1(X_1))=m\ln(K/m)$.
It then follows that
\begin{equation} \label{eq:orig-game-bound-1}
\begin{aligned}
\E\left[\sum_{t=1}^T (1-\gamma_t) \eta_te\sum_{k=1}^K \langle x, \tilde{\theta}_{t,k}\rangle^2 (\bar{A}_t(x))_k \right]
&=\E\left[\sum_{t=1}^T \eta_t e\E\left[(1-\gamma_t) \sum_{k=1}^K \langle x, \tilde{\theta}_{t,k}\rangle^2 (\bar{A}_t(x))_k \mid \mac F_{t-1}\right]\right]
\\&\le  \mac O\left (  \E\left[\frac{K d \cdot \ln T}{c_1 \sqrt{m\ln(K/m)}} \sqrt{\sum_{t=1}^T H(\bar A_t(X_t))}\right]\right) \\
& \le \mac O\left ( \frac{ K d \cdot \ln T}{c_1 \sqrt{m\ln(K/m)}} \sqrt{\E\left[\sum_{t=1}^T H(\bar A_t(X_t))\right]} \right),
\end{aligned}
\end{equation}
where the equality holds due to the law of iterated expectations, the first inequality holds thanks to Lemma~\ref{lemma:variance-control} and~\eqref{bound:etA_t}, and the second inequality follows from Jensen's inequality.

We continue to bound the extra regret due to exploration by the cumulative entropy term.  It follows from Lemma~\ref{lemma:extra-explore-reg} that
\begin{equation} \label{eq:orig-game-bound-2}
    \begin{aligned}
        \E [U(X_0)] \le (\sqrt{m}+1)\E\left[\sum_{t=1}^T \gamma_t \right]
        &\le (\sqrt{m}+1) \E\left[\sum_{t=1}^T  \frac{4\eta_t K \ln T }{\lambda_{\min}(\Sigma)}  \right]
        \\&= \mac O\left (\frac{K (\ln T)^2}{c_1 \lambda_{\min}(\Sigma)\sqrt{\ln(K/m)}} \sqrt{\sum_{t=1}^T \E[ H(\bar A_t(X_t))]}\right),
    \end{aligned}
\end{equation}
where the second inequality holds because $\gamma_t=\alpha_t\eta_t$ and the choice of $\alpha_t$ as well as the fact that $\ln t\le \ln T$. The equality then holds because of~\eqref{bound:etA_t}.

Finally, we establish an upper bound for the penalty term in the regret decomposition.
Denote by $t_0$ the first round in which $\beta_t^{\prime}$ becomes larger than the constant $F=\max \left\{2, c_2 \ln T\right\}$, i.e., $t_0=\min \left\{t \in[T]: \beta_t^{\prime} \geq F\right\}$. We then have

\begin{equation}\label{eq:orig-game-bound-3}
\begin{aligned}
& \E\left[\sum_{t=1}^T(\beta_{t+1}-\beta_t) H(\bar A_{t+1}(X_0))\right] \\
& =\E\left[\sum_{t=1}^{t_0-1}(\beta_{t+1}-\beta_t) H(\bar A_{t+1}(X_0))+\sum_{t=t_0}^T(\beta_{t+1}-\beta_t) H(\bar A_{t+1}(X_0))\right] \\
& \leq \E\left[(\beta_{t_0}^{\prime}-\beta_{t_0-1}^{\prime}) H(\bar A_{t+1}(X_0))+\sum_{t=t_0}^T(\beta_{t+1}^{\prime}-\beta_t^{\prime}) H(\bar A_{t+1}(X_0))\right] \\
& \leq \E\left[\sum_{t=1}^T(\beta_{t+1}^{\prime}-\beta_t^{\prime}) H(\bar A_{t+1}(X_0))\right]=\mathcal{O}\left(c_1 \sqrt{m\ln(K/m)} \sqrt{\sum_{t=1}^T \E[H(\bar A_t(X_0))]}\right),
\end{aligned}
\end{equation}
where the first inequality is due to the fact that $\beta_t=\beta_{t+1}$ while $t \in\left[t_0-2\right], \beta_{t_0-1} \ge \beta_{t_0-1}^{\prime}$ by construction, and $\beta_t^{\prime}=\beta_t$ for $t \geq t_0$. The second inequality holds because $\beta'_t$ is increasing in $t$, while the second equality holds thanks to Lemma~\ref{lemma:entropy-bound}.
The claim then follows by substituting~\eqref{eq:orig-game-bound-1},~\eqref{eq:orig-game-bound-2}, and~\eqref{eq:orig-game-bound-3} into terms in the statement of Lemma~\ref{lemma:aux-game-init-break} (to bound the regret for the auxiliary game) as well as Lemma~\ref{lemma:ghost-decomp} (to bound the regret for the original game).
\end{proof}

The crux to show the best-of-both-worlds result now lies in constructing a tight upper bound on the cumulative entropy term that dominates the regret bound in Lemma~\ref{lemma:reg-game-orig}.
\begin{proof}[Proof of Lemma~\ref{lemma:stochastic-entropy}]
Observe that for any $ A\in\conv(\mac A) $ and any $S\subset [K]$ with $|S|\le m$, it follows that
    \begin{equation*}
    \begin{aligned}
        H(A) &=\sum_{k\notin S} (A)_k  \ln \frac{1}{(A)_k} + \sum_{k\in S} (A)_k \ln \frac{1}{(A)_k}
        \\&\le \sum_{k\notin S} (A)_k \ln \frac{K-|S|}{\sum_{k\notin S}(A)_k} + \sum_{k\in S} (A)_k \left( \frac{1}{(A)_k} -1 \right)
        \\&=\sum_{k\notin S} (A)_k \left(\ln \frac{K-|S|}{\sum_{k\notin S}(A)_k} + 1\right),
    \end{aligned}
    \end{equation*}
where the inequality holds because of Jensen's inequality and because $\ln (1/x) \le 1/x - 1$, and the second equality holds because $\sum_{k\in S} (A)_k + \sum_{k\notin S} (A)_k = |S|$ thanks to the membership of $A$ in $\conv(\mac A)$.
Applying the above inequality to the entropy with respect to action given the ghost sample $X_0$ and $S=\{k\in[K]: (u^\star(X_0))_k=1\}$  gives
\begin{equation} \label{ineq:H-1-stoch}
\begin{split}
   \sum_{t=1}^T  H(\bar A_t(X_0))
   &\le \sum_{t=1}^T \sum_{k:(u^\star(X_0))_k=0} (\bar A_t)_k \left(\ln \frac{K-|S|}{\sum_{k:(u^\star(X_0))_k=0}(\bar A_t)_k} +1 \right)
   \\&\le  \sum_{t=1}^T \sum_{k:(u^\star(X_0))_k=0} (\bar A_t)_k \ln \frac{e(K-|S|)T}{ \sum_{t=1}^T \sum_{k:(u^\star(X_0))_k=0} (\bar A_t)_k}
   \\& \le \ln((K-|S|)T) \sum_{t=1}^T  \sum_{k:(u^\star(X_0))_k=0}(\bar{A}_t)_k,
\end{split}
\end{equation}
where the second inequality follows by Jensen's inequality, and the third inequality holds because \[\sum_{t=1}^T \sum_{k:(u^\star(X_0))_k=0} (\bar A_t)_k \ge e.\] 
We continue to bound the term $\sum_{t=1}^T  \sum_{k:(u^\star(X_0))_k=0}(\bar{A}_t)_k$. Taking expectations yields
\begin{equation*}
\begin{split}    \sum_{t=1}^T  \sum_{k:(u^\star(X_0))_k=0}(\bar{A}_t)_k
    &= \sum_{t=1}^T  \sum_{k:(u^\star(X_0))_k=0} \sum_{A\in \mac A} \pi_t(A|X_t)(A)_k
    \\&=\sum_{t=1}^T  \sum_{k:(u^\star(X_0))_k=0} \sum_{A\in \mac A \backslash \{u^\star(X_0)\}} \pi_t(A|X_t)(A)_k
    \\&\le |S| \sum_{t=1}^T \sum_{A\in \mac A \backslash \{u^\star(X_0)\}} \pi_t(A|X_t).
\end{split}
\end{equation*}
Substituting the above expression into~\eqref{ineq:H-1-stoch}, noticing that $\E[\pi_t(u^\star(X_0))|\mac F_{t-1}]= \E[\pi_t(u^\star(X_t))|\mac F_{t-1}]$ and that $|S|\le m$ thus establishes the claim.
\end{proof}

Finally, Theorem~\ref{thm:bobw} can be established via combining all of the insights yielded in the above lemmas.
\begin{proof}[Proof of Theorem~\ref{thm:bobw}]
{For ease of notation, let us denote $\kappa=\sqrt{ K d \ln T + K (\ln T)^2/\lambda_{\min }(\Sigma) }.$}
The regret bound stated in~\ref{item:adv-regret} follows immediately from Lemma \ref{lemma:ghost-decomp} combined with Lemma \ref{lemma:reg-game-orig} as well as the observation that $\sum_{t=1}^T H(\bar A_t(X_0))\le m\ln(K/m) T.$
We now show claim~\ref{item:stoch-regret}. Note that in the case of $\sum_{t=1}^T \sum_{k:(u^\star(X_0))_k=0} (\bar A_t)_k < e$, we have $\sum_{t=1}^T  H(\bar A_t(X_0))\le e\ln(e(K-m)T) + 1/e$, in which case we already have the desired bound.
We thus continue to consider the case when $\sum_{t=1}^T \sum_{k:(u^\star(X_0))_k=0} (\bar A_t)_k \ge  e$. 
Observe that due to the definition of suboptimality gap $\Delta_A(x)$ and the corruption budget~$C$, it follows that
\begin{equation}\label{eq:reg-lower}
\begin{aligned}
  \mac R_T &\ge \E\left[ \sum_{t=1}^T \sum_{A \in \mac A \backslash \{u^\star(X_t)\} } \pi_t(A|X_t) \Delta_A(x) \right] -2 \E\left[ \sum_{t=1}^T \max_{A\in\mac A} \sum_{k=1}^K \|X_t\|_2 \|\theta_{t,k}-\theta_k\|_2 (A)_k \right]  
\\&\ge \Delta_{\min} \E\left[ \sum_{t=1}^T \sum_{A \in \mac A \backslash \{u^\star(X_t)\} } \pi_t(A|X_t) \right] - 2 C.
\end{aligned}
\end{equation}
For any $\lambda \in[0,1]$, we may decompose the regret as
\begin{align*}
\mac R_T &= (1+\lambda) \mac R_T - \lambda \mac R_T
\\&\le (1+\lambda)\kappa\sqrt{\E\left[\sum_{t=1}^T  H(\bar A_t(X_0))\right]}  - \lambda \Delta_{\min} \E\left[ \sum_{t=1}^T \sum_{A \in \mac A \backslash \{u^\star(X_t)\} } \pi_t(A|X_t) \right]+2\lambda C
\\&\quad + \mac O \left( \frac{K}{\lambda_{\min}(\Sigma)} m\ln(K/m) \ln T \right)
\\&\le (1+\lambda)\kappa \sqrt{ \ln((K-m)T)}\sqrt{m\E \left[\sum_{t=1}^T \sum_{a \in \mac A \backslash \{u^\star(X_t)\} }  \pi_t(A|X_t)\right]}  
\\&\quad - \lambda \Delta_{\min} \E\left[ \sum_{t=1}^T \sum_{A \in \mac A \backslash \{u^\star(X_t)\} } \pi_t(A|X_t) \right]
+ \mac O \left( \frac{K}{\lambda_{\min}(\Sigma)} m\ln(K/m) \ln T \right)
\\&\le \mac O \left(\frac{(1+\lambda)^2\kappa^2 m \ln((K-m)T)}{4\lambda \Delta_{\min}}\right) = \mac O \left(\frac{\kappa^2 m \ln((K-m)T)}{ \Delta_{\min}}\right),
\end{align*}
where the first inequality holds thanks to Lemma \ref{lemma:ghost-decomp} combined with Lemma \ref{lemma:reg-game-orig} as well as the lower bound~\eqref{eq:reg-lower}, the second inequality follows by Lemma~\ref{lemma:stochastic-entropy}. The third inequality holds by the observation $a\sqrt{x}-bx\le a^2/(4b)$ for any nonnegative scalars $a,b,x$, and the last equality follows by choosing $\lambda=1.$
\end{proof}

\subsection{Proofs and Auxiliary Results for Section~\ref{sec:comb-bandits}}
\begin{example}[Choices of $f$]
\label{ex:choices_f}
    We provide three examples of the arm-wise regularizer $f$ on domain $[0,1]$ or $(0,1]$
    that admits a closed-form expression of $(f')^{-1}$ and satisfies the assumptions of Theorem~\ref{thm:bisection-convergence}.
    \begin{enumerate}
    \item $f(x):[0,1]\rightarrow\mathbb{R}$ is defined through $f(x)=x\ln x-x$ for $x\in (0,1]$ and $f(0)=0$,
    which
is continuous on $[0,1]$ and differentiable on $(0,1]$. Note that although $\ln x$ is not defined at $x=0$, we have $\lim_{x\rightarrow0^+}x\ln x=0$. 
Observe that its first derivative for $x\in (0,1]$ is $\ln x$.
It then follows that $(f')^{-1}(- \lambda - c_k)=e^{-\lambda-c_k}$ for all $k\in[K]$.
In addition, note that $z=f'(x) =\ln x$ ranges over the interval $z\in(-\infty, 0]$.
On that interval, the derivative of the inverse $\partial e^z/\partial z = e^z$
satisfies $e^z\leq e^{0}=1$ for all $z\in (-\infty, 0]$.
By the mean-value theorem, for any $z_1, z_2\in[-\infty, 0]$ there exists $c$ between them such that $ \left|e^{z_1}-e^{z_2} \right |=e^c |z_1-z_2|\leq 1 |z_1-z_2|$.
Hence, the function $(f')^{-1}(-\lambda-c_k)=e^{-\lambda-c_k}$ is $1$-Lipschitz.
\item $f(x)=x^2 $ on $[0,1]$. 
It then follows that $(f')^{-1}(z)=z/2$ for all $z\in[0,2]$. As in the previous case, with $z=-\lambda-c_k$, for all $k\in [K]$ we have $(f')^{-1}(- \lambda - c_k)=(-\lambda-c_k)/2$. 
Note also that the function $(f')^{-1}(-\lambda-c_k)=(-\lambda-c_k)/2$ is globally $(1/2)$-Lipschitz on its entire domain.
\item $f(x)=-\sqrt{x}$ on $(0,1]$.

Similar to previous calculations, we have $x=(f')^{-1}(z)=1/(4z^2)$. Note that it must hold $z<0$ for the expression to be defined properly. Hence, with $z=-\lambda-c_k<0$, for all $k\in [K]$, we have $ (f')^{-1}(- \lambda - c_k)=1/(4(-\lambda-c_k)^{2})$.
As $x\in(0,1]$ we have $z=-1/(2\sqrt{x})\in (-\infty,-1/2]$.
On that interval, the derivative of the inverse is:
\begin{equation*}
\frac{\partial}{\partial z}\left(\frac{1}{4z^2}\right ) =-\frac{1}{2z^3}.
\end{equation*}
On the interval $z\in\left(-\infty,-\frac{1}{2}\right]$, the largest magnitude of the inverse of the derivative map 
$|(f')^{-1}(z)|$ occurs at the smallest $|z|$, namely at $|z|=1/2$. This follows from the fact that $|(f')^{-1}(z)|$ is decreasing as $|z|$ grows. Then, $\sup |(f')^{-1}(z)|=4$, on $z\in(-\infty,-1/2]$. So, $|(f')^{-1}(z_1)-(f')^{-1}(z_2)|\le 4|z_1-z_2|$, for all $ z_1,z_2\in(-\infty,-1/2]$
Therefore, $(f')^{-1}$ is $4$-Lipschitz.
\end{enumerate}

\end{example}

\begin{proof}[Proof of Theorem~\ref{thm:bisection-convergence}]

We first show that the endpoints of the search interval give rise to strictly negative and strictly positive function values, respectively.
The definition of $\underline \lambda$ in Step~\ref{step:lambda-initial} of Algorithm~\ref{alg:bisection} implies that 
\begin{equation}
\label{alg_3_1}
    - f'(m/K)\leq -\underline \lambda -c_k \quad \forall k\in[K].
\end{equation}    
As $f$ is strictly convex, its first derivative $f'$ is strictly increasing, which in turn implies that its inverse $(f')^{-1}$ is also strictly increasing. Applying the inverse function to both sides of \eqref{alg_3_1} yields
\begin{equation*}
        (f')^{-1} (-\underline\lambda - c_k) \geq (f')^{-1}\left(f'\left(\frac{m}{K}\right)\right) = \frac{m}{K} \quad \forall k\in[K].
\end{equation*}
Summing over $k=1,\dots, K$ gives
$\sum_{k=1}^K(f')^{-1}(-\underline\lambda-c_k) \geq m$.
A similar argument applied to $\bar{\lambda}$ gives        $\sum_{k=1}^K(f')^{-1}(-\bar\lambda-c_k) \leq m$.
Thus, by the intermediate value theorem, there exists at least one $\lambda ^*\in[\underline \lambda, \bar\lambda]$ such that
\begin{equation*}        \sum_{k=1}^K(f')^{-1}(-\lambda^\star-c_k)=m.
\end{equation*}
Next, we show that the prescribed iteration number gives an approximate solution of~$\lambda^*$ upon termination.
By the assumed \textit{L}-Lipschitz continuity of $(f')^{-1}$, we have
\begin{equation}
\label{alg_3_3}
        |(f')^{-1}(x)-(f')^{-1}(y)|\leq L|x-y| \quad \forall x,y\in \mathbb{R}.
\end{equation}
Define the modulus of uniform continuity $\delta(\varepsilon)$ by
\begin{equation*}
\delta(\varepsilon)= \max_{\delta>0} \{\delta: |(f')^{-1}(x)-(f')^{-1}(y)|\le \varepsilon/(2\sqrt{K}) \ \forall x,y\in\mathbb{R} \text{ with } |x-y|\le \delta\}.
\end{equation*}
Choosing $\delta_0=\varepsilon/(2L\sqrt{K})$ gives $|(f')^{-1}(x)-(f')^{-1}(y)|\leq L\delta_0=\varepsilon/(2\sqrt{K})$,
which implies that
$\delta(\varepsilon)\geq \varepsilon/(2L\sqrt{K})$. 
Let the initial interval length be $\Delta_0 =\bar\lambda-\underline \lambda$, where $\bar\lambda,\underline \lambda$ are initializations specified in Step~\ref{step:lambda-initial} of Algorithm~\ref{alg:bisection}.
Note that each bisection iteration halves the interval's length. Therefore, the interval length after~$l$ iterations is $\Delta_l =\Delta_0/2^l.$
Denote by $\lambda_l$ the midpoint of the $l$-th interval. Since the true root $\lambda ^*$ lies within this interval, the error satisfies
\begin{equation*}
        |\lambda_l-\lambda^\star|\leq \frac{\Delta_l}{2}=\frac{\Delta_0}{2^{l+1}}.
\end{equation*}
To guarantee that the propagated error through the \textit{L}-Lipschitz function $(f')^{-1}$ remains below $\varepsilon/(2\sqrt{K})$ in each coordinate, it suffices to have $ L|\lambda_l-\lambda^\star|\leq \varepsilon/(2\sqrt{K})$,
or equivalently, $\Delta_0/2^{l+1}\leq \varepsilon/(2L\sqrt{K})$.
It follows that the bisection algorithm should terminate as soon as
\begin{equation*}
        l\geq \log_2\left(\frac{2L\sqrt{K}\Delta_0}{\varepsilon}\right)-1.
\end{equation*}
Let denote with $L^*$ the number of iterations after bisection algorithm terminates. Then, 
\begin{equation*}
    |\lambda_{L^*}-\lambda^\star|\leq \frac{\varepsilon}{2L\sqrt{K}}.
\end{equation*}
and thus $L^*\leq \log_2(2L\sqrt{K}\Delta_0/\varepsilon)$.
The output of Algorithm~\ref{alg:bisection} is defined as 
\begin{equation}
\label{alg_3_4}
        a_k=(f')^{-1}(-\lambda-c_k)+\frac{\Delta}{K}, \quad \forall k\in[K],
\end{equation} 
where $\Delta =  m - \sum_{k=1}^K (f')^{-1}(-c_k-\lambda^\star)$.
For each coordinate $k$, the error induced by the difference between~$\lambda$ and $\lambda^\star$ is dominated by the Lipschitz property \eqref{alg_3_3} through
\begin{equation*}
        \left|(f')^{-1}(-\lambda - c_k) - (f')^{-1}(-\lambda^\star - c_k)\right| \le L\left| \lambda - \lambda^\star\right|\leq \frac{\varepsilon}{2\sqrt{K}},
\end{equation*}
which in turn implies that $ (f')^{-1}(-\lambda - c_k) \geq (f')^{-1}(-\lambda^\star - c_k) - \varepsilon/(2\sqrt{K})$.
Summing over $k=1,\dots,K$ and by defining $\Delta_{\lambda}=m-\sum_{k=1}^K(f')^{-1}(-\lambda-c_k)$ yields:
\begin{equation*}
\Delta_{\lambda}\leq m-\left[\sum_{k=1}^K(f')^{-1}(-\lambda^\star-c_k)-\frac{\varepsilon\sqrt{K}}{2}\right].
\end{equation*}
By the definition of~$\Delta$ we can bound $\Delta_\lambda$ as $\Delta_{\lambda}\leq \Delta+\varepsilon\sqrt{K}/2$.
In the worst-case scenario when the computed error $\Delta_{\lambda}$ is close to $0$, the inequality above reduces to $\Delta\geq \Delta_{\lambda}-\varepsilon\sqrt{K}/2$.
Since $\Delta_{\lambda}$ is nearly $0$ or in the worst-case non-positive, this yields the bound $ \Delta \geq -\varepsilon\sqrt{K}/2$.
Therefore, from \eqref{alg_3_4}, the error in each coordinate satisfies $|a_k - a_k^\star|\leq \varepsilon/\sqrt{K}$,
and therefore $ \|a-a^\star\|_2\leq\varepsilon$.
Thus, the claim follows.
\end{proof}

\begin{proof}[Proof of Corollary~\ref{cor:convergence-approx-oracle}]
Define the exact residual $g(\lambda) = m-\sum_{k=1}^K(f')^{-1}(- \lambda - c_k)$ and the approximate residual $\tilde{g}(\lambda) = m-\sum_{k=1}^K\tilde{y}_k$,
where each $\tilde{y}_k$ satisfies $\vert\tilde{y}_k-(f')^{-1}(- \lambda - c_k)\vert\leq \tau$. To measure the oracle's perturbation, set $E(\lambda) = \tilde{g}(\lambda)-g(\lambda)=-\sum_{k=1}^K\left[\tilde{y}_k-(f')^{-1}(- \lambda - c_k)\right]$. Since this is a sum of $K$ individual errors, the triangle inequality gives $|E(\lambda)|\leq \sum_{k=1}^K\vert\tilde{y}_k-(f')^{-1}(- \lambda - c_k)\vert\leq K\tau$.

In the exact inverse case we assume 
$g(\underline\lambda)\ge0 $ and $g(\bar\lambda)\le0$,
which guarantees a root \(\lambda^\star\in[\underline\lambda,\bar\lambda]\). Accounting for the oracle error then yields $\tilde g(\underline\lambda)\ge -K\tau$ and $\tilde g(\bar\lambda)\le K\tau$.

Provided that $\min\{g(\underline\lambda),\,-g(\bar\lambda)\} > K\tau$, the sign test on $\tilde g$ still selects the correct subinterval; hence the bisection converges with each evaluation of $g$ perturbed by at most $\pm K\tau$.  After $l$ iterations, the midpoint \(\lambda_l\) satisfies $\lvert \lambda_l - \lambda^\star\rvert 
\;\le\; \Delta_0/2^{l+1}$, with $\Delta_0 = \bar\lambda - \underline\lambda$.
Upon termination at \(\lambda_l\), define $\tilde a_k = \tilde y_k(\lambda_l)$ for all $k \in[K]$. Finally, subtract the mean of $\tilde a$ to ensure that $\sum_{k=1}^K \tilde a_k = m$ exactly.

Subtracting the mean of $\tilde a$ introduces a uniform shift of magnitude $C = \mathcal O\bigl(\tau + L\,\lvert\lambda_l - \lambda^\star\rvert\bigr)$.
Hence each coordinate error can be bounded as
\[
\lvert \tilde a_k - a_k^\star\rvert
\;\le\;
\underbrace{\lvert (f')^{-1}(-\lambda_l - c_k) - (f')^{-1}(-\lambda^\star - c_k)\rvert}_{\le L\,\lvert\lambda_l - \lambda^\star\rvert}
\;+\;
\underbrace{\lvert \tilde y_k - (f')^{-1}(-\lambda_l - c_k)\rvert}_{\le \tau}
\;+\;
C.
\]
Since \((f')^{-1}\) is \(L\)-Lipschitz and each oracle error is bounded by \(\tau\), it follows that \(\lvert \tilde a_k - a_k^\star\rvert \le L\,\lvert\lambda_l - \lambda^\star\rvert + \tau + C\).
Taking the \(\ell_2\)-norm of the coordinatewise bound gives
\[
\|\tilde a - a^\star\|_2
\le \sqrt{K}\,\max_i\bigl|\tilde a_k - a_k^\star\bigr|
\le \sqrt{K}\,\bigl(L\,|\lambda_l-\lambda^\star| + \tau + C\bigr)
= \mathcal O\bigl(\sqrt{K}(\tau + L\,|\lambda_l-\lambda^\star|)\bigr).
\]
Hence, to ensure \(\|\tilde a - a^\star\|_2 \le \varepsilon\), it suffices that
\[
L\,|\lambda_l-\lambda^\star| \le \frac{\varepsilon}{2\sqrt{K}}
\quad\text{and}\quad
\tau \le \frac{\varepsilon}{2\sqrt{K}}.
\]
The first inequality follows from
\[
|\lambda_l-\lambda^\star| \le \frac{\Delta_0}{2^{\,l+1}}
\le \frac{\varepsilon}{2\sqrt{K}\,L},
\]
which is equivalent to
\[
l \;\ge\; \log_2\bigg(\frac{2\sqrt{K}\,L\,\Delta_0}{\varepsilon}\bigg),
\]
while the second is simply \(\tau \le \varepsilon/(2\sqrt{K})\).  Together, these conditions imply the claimed iteration bound $\mathcal O\big(\ln(L\sqrt{K}(\bar\lambda-\underline\lambda)/\varepsilon)\big)$. This observation completes the proof.
\end{proof}

\section{Additional Numerical Experiments}

In Sec.~\ref{ssec:numerics} we reported the per-iteration runtime of Algorithm~\ref{alg:bisection}. To further validate both the approach and our implementation, we plot cumulative-regret trajectories for OSMD (Algorithm~\ref{alg:OSMD}), comparing (a) our Bisection-based projection with (b) a direct solution of the projection step via MOSEK.\footnote{All code to reproduce the figures is available at \url{https://github.com/RAO-EPFL/BOBW-CCB}.}

Mean cumulative regrets for the stochastic and adversarial settings are shown in Figs.~\ref{fig:stoch-tsallis}, \ref{fig:stoch-neg-entropy}, \ref{fig:adv-tsallis}, and \ref{fig:adv-neg-entropy}, respectively, under the Tsallis and negative Shannon-entropy regularizers. We consider $m\in\{3,5\}$, $K\in\{20,40\}$ base arms, and a horizon of $T=10^{4}$ rounds.

Across all configurations, the two implementations yield indistinguishable regret curves (up to numerical tolerance), supporting the correctness of our implementation. For completeness, Tables~\ref{tab:tsallis-reg} and~\ref{tab:neg-entropy} report final cumulative regret for all projection subroutines considered in the paper (Bisection, Newton, MOSEK) under the Tsallis and negative Shannon-entropy regularizers, respectively.

\begin{figure}[!htbp]
\centering
\begin{subfigure}[t]{0.47\textwidth}
        \centering
	\includegraphics[scale=0.46]{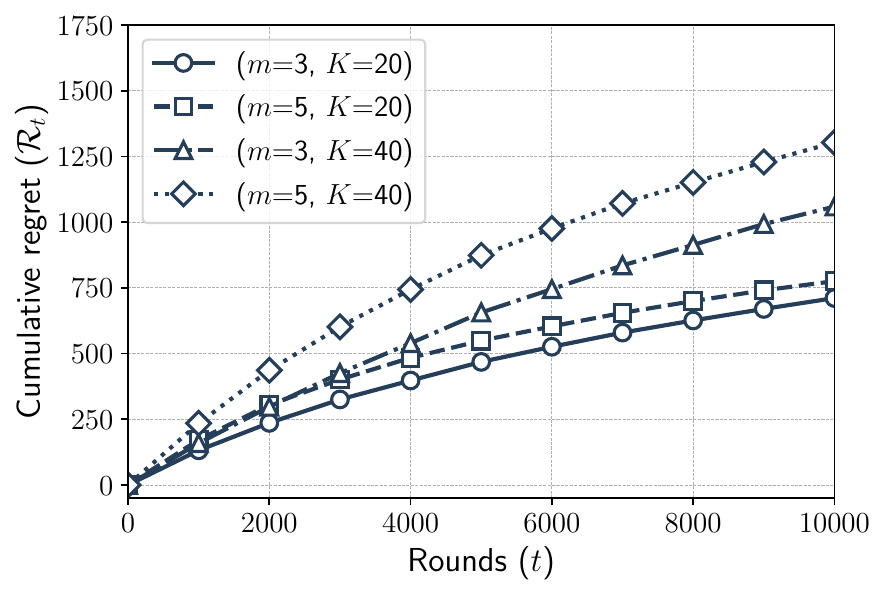}
    {\small\caption{Bisection (Ours).}}
\end{subfigure}
\hspace{0.1cm}
\begin{subfigure}[t]{0.47\textwidth}
        \centering
	\includegraphics[scale=0.46]{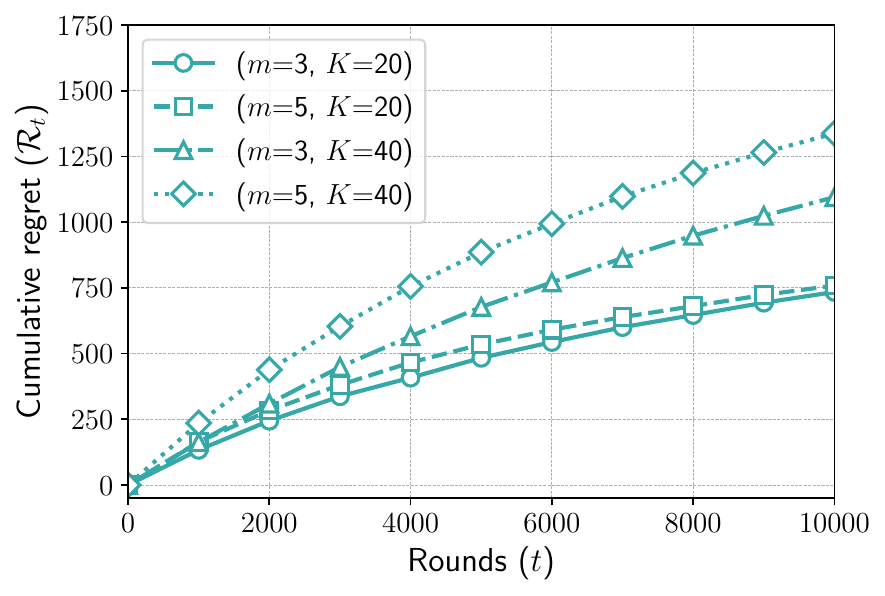}
	\caption{MOSEK.}
\end{subfigure}
\caption{Stochastic setting for $\Delta=0.0625$ (Tsallis regularizer).}
\label{fig:stoch-tsallis}
\end{figure}

\begin{figure}[!htbp]
\centering
\begin{subfigure}[t]{0.47\textwidth}
        \centering
	\includegraphics[scale=0.46]{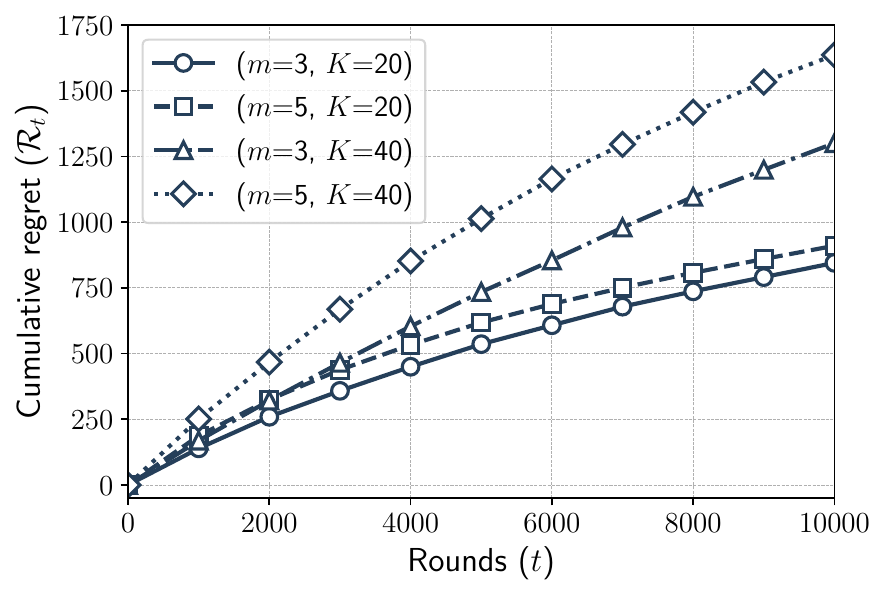}
    {\small\caption{Bisection (Ours).}}
\end{subfigure}
\hspace{0.1cm}
\begin{subfigure}[t]{0.47\textwidth}
        \centering
	\includegraphics[scale=0.46]{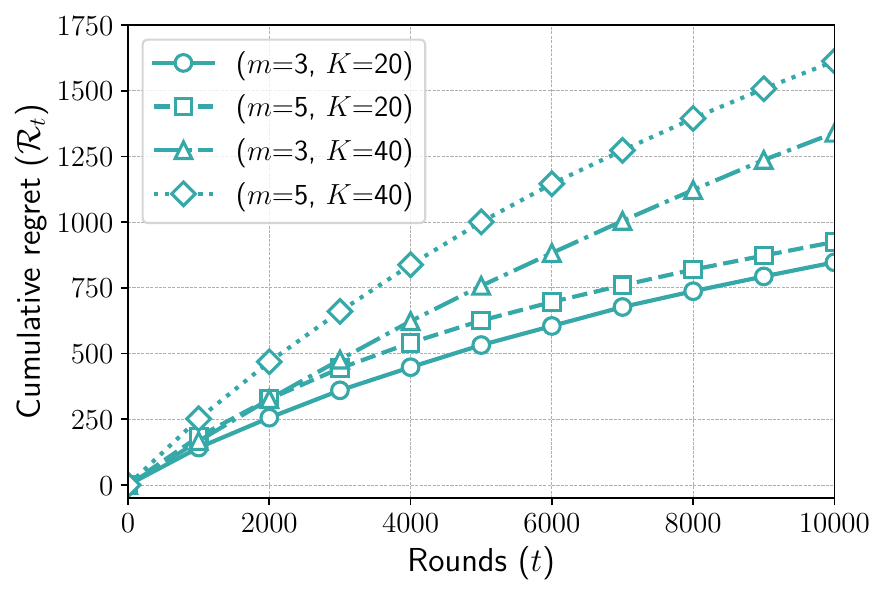}
	\caption{MOSEK.}
\end{subfigure}
\caption{Stochastic setting for $\Delta=0.0625$ (Negative Shannon entropy regularizer).}
\label{fig:stoch-neg-entropy}
\end{figure}

\begin{figure}[!htbp]
\centering
\begin{subfigure}[t]{0.47\textwidth}
        \centering
	\includegraphics[scale=0.46]{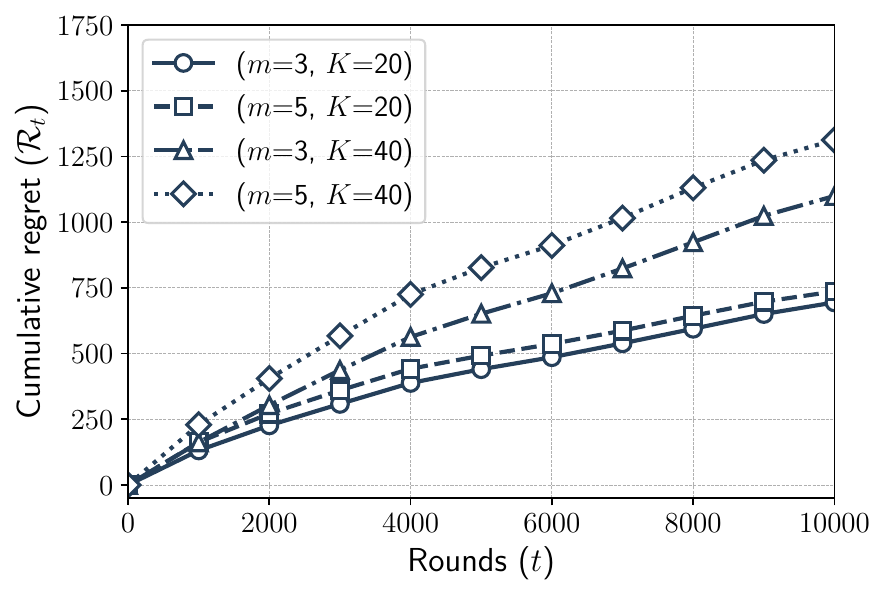}
    {\small\caption{Bisection (Ours).}}
\end{subfigure}
\hspace{0.1cm}
\begin{subfigure}[t]{0.47\textwidth}
        \centering
	\includegraphics[scale=0.46]{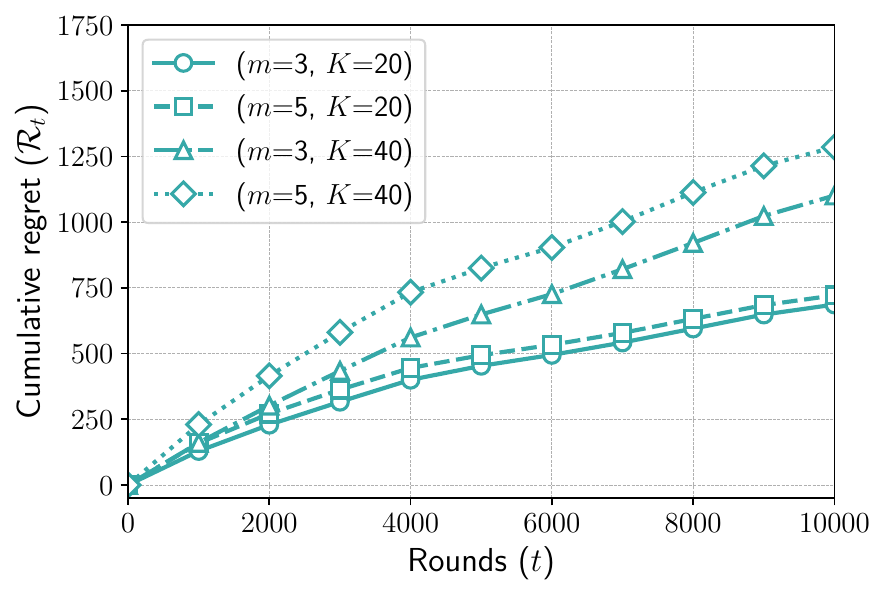}
	\caption{MOSEK.}
\end{subfigure}
\caption{Adversarial setting for $\Delta=0.0625$ (Tsallis regularizer).}
\label{fig:adv-tsallis}
\end{figure}

\begin{figure}[!htbp]
\centering
\begin{subfigure}[t]{0.47\textwidth}
        \centering
	\includegraphics[scale=0.46]{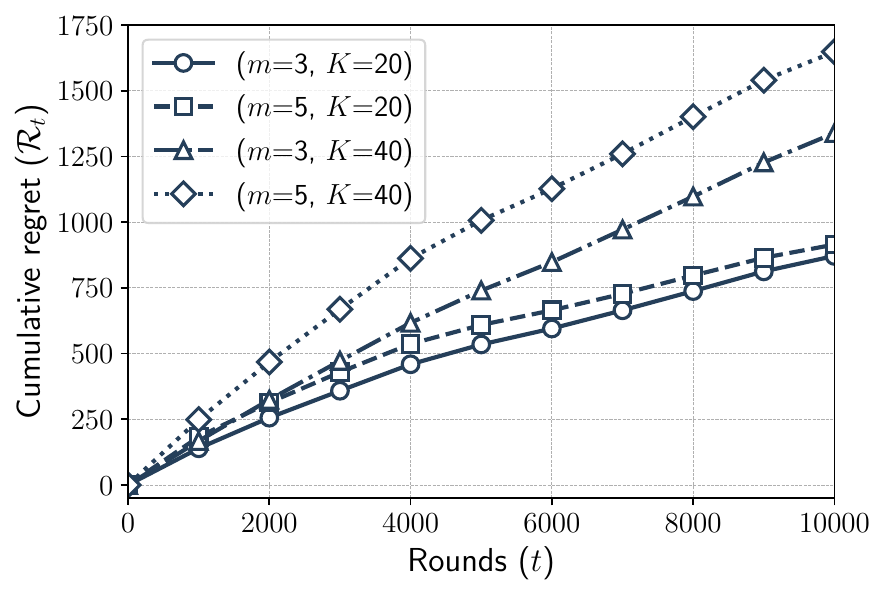}
    {\small\caption{Bisection (Ours).}}
\end{subfigure}
\hspace{0.1cm}
\begin{subfigure}[t]{0.47\textwidth}
        \centering
	\includegraphics[scale=0.46]{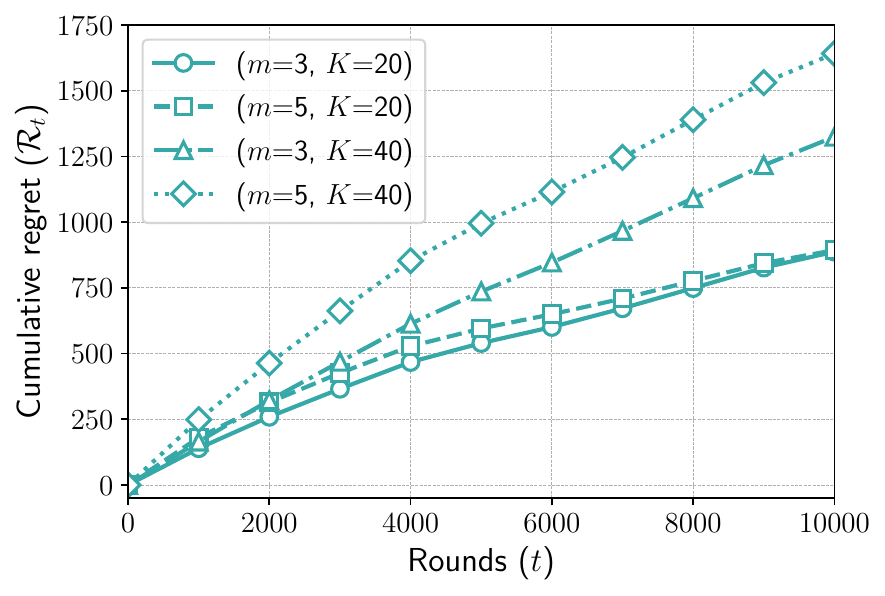}
	\caption{MOSEK.}
\end{subfigure}
\caption{Adversarial setting for $\Delta=0.0625$ (Negative Shannon entropy regularizer).}
\label{fig:adv-neg-entropy}
\end{figure}

\begin{table}[!htbp]
\centering
\small
\caption{Cumulative regret at $T=10^{4}$ for $\Delta=0.0625$ (Tsallis regularizer).}
\label{tab:tsallis-reg}
\begin{tabular}{r r l c c}
\toprule
$K$ & $m$ & Method & Stochastic $(\mu\pm\sigma)$ & Adversarial $(\mu\pm\sigma)$ \\
\midrule
\multirow{6}{*}{$20$} & \multirow{3}{*}{$3$} & Bisection (Ours) & $711.3\ \pm\ 146.3$ & $694.6\ \pm\ 161.2$ \\
 &  & Newton & $726.2\ \pm\ 170.1$ & $710.8\ \pm\ 130.4$ \\
 &  & MOSEK & $734.5\ \pm\ 143.0$ & $686.6\ \pm\ 88.1$ \\
\addlinespace[0.25em]
 & \multirow{3}{*}{$5$} & Bisection (Ours) & $775.7\ \pm\ 170.2$ & $736.5\ \pm\ 119.7$ \\
 &  & Newton & $761.2\ \pm\ 164.3$ & $734.7\ \pm\ 121.3$ \\
 &  & MOSEK & $758.8\ \pm\ 170.1$ & $721.5\ \pm\ 137.4$ \\
\midrule
\multirow{6}{*}{$40$} & \multirow{3}{*}{$3$} & Bisection (Ours) & $1059.4\ \pm\ 190.2$ & $1099.7\ \pm\ 143.4$ \\
 &  & Newton & $1114.8\ \pm\ 181.4$ & $1117.4\ \pm\ 171.5$ \\
 &  & MOSEK & $1095.2\ \pm\ 176.1$ & $1101.4\ \pm\ 119.7$ \\
\addlinespace[0.25em]
 & \multirow{3}{*}{$5$} & Bisection (Ours) & $1303.2\ \pm\ 167.3$ & $1312.6\ \pm\ 199.0$ \\
 &  & Newton & $1314.9\ \pm\ 210.2$ & $1309.8\ \pm\ 187.6$ \\
 &  & MOSEK & $1338.0\ \pm\ 197.4$ & $1285.6\ \pm\ 164.9$ \\
\bottomrule
\end{tabular}
\end{table}

\begin{table}[!htbp]
\centering
\small
\caption{Cumulative regret at $T=10^{4}$ for $\Delta=0.0625$ (Negative Shannon entropy regularizer).}
\label{tab:neg-entropy}
\begin{tabular}{r r l c c}
\toprule
$K$ & $m$ & Method & Stochastic $(\mu\pm\sigma)$ & Adversarial $(\mu\pm\sigma)$ \\
\midrule
\multirow{6}{*}{$20$} & \multirow{3}{*}{$3$} & Bisection (Ours) & $845.1\ \pm\ 160.9$ & $871.5\ \pm\ 107.2$ \\
 &  & Newton & $835.6\ \pm\ 136.9$ & $874.0\ \pm\ 102.8$ \\
 &  & MOSEK & $847.4\ \pm\ 115.7$ & $887.6\ \pm\ 131.4$ \\
\addlinespace[0.25em]
 & \multirow{3}{*}{$5$} & Bisection (Ours) & $910.2\ \pm\ 164.4$ & $914.8\ \pm\ 126.2$ \\
 &  & Newton & $914.4\ \pm\ 152.4$ & $900.5\ \pm\ 128.8$ \\
 &  & MOSEK & $925.1\ \pm\ 160.6$ & $893.7\ \pm\ 129.9$ \\
\midrule
\multirow{6}{*}{$40$} & \multirow{3}{*}{$3$} & Bisection (Ours) & $1300.8\ \pm\ 149.4$ & $1339.2\ \pm\ 115.3$ \\
 &  & Newton & $1281.1\ \pm\ 118.8$ & $1342.8\ \pm\ 101.9$ \\
 &  & MOSEK & $1340.1\ \pm\ 144.5$ & $1325.0\ \pm\ 123.5$ \\
\addlinespace[0.25em]
 & \multirow{3}{*}{$5$} & Bisection (Ours) & $1636.6\ \pm\ 193.4$ & $1648.3\ \pm\ 156.4$ \\
 &  & Newton & $1568.7\ \pm\ 168.5$ & $1629.0\ \pm\ 141.8$ \\
 &  & MOSEK & $1612.7\ \pm\ 168.0$ & $1642.0\ \pm\ 170.9$ \\
\addlinespace[0.25em]
\midrule
\bottomrule
\end{tabular}
\end{table}

\end{document}